\definecolor{exact}{RGB}{166, 97, 26}
\definecolor{no}{RGB}{223, 194, 125}
\definecolor{old}{RGB}{128, 205, 193}
\definecolor{new}{RGB}{1, 133, 113}
\newtheorem{theorem}{Theorem}[section]
\newtheorem{proposition}{Proposition}[section]
\theoremstyle{definition}
\newtheorem{remark}[theorem]{Remark}
\newtheorem{assumption}[theorem]{Assumption}
\newcommand{\RN}[1]{%
	\textup{\uppercase\expandafter{\romannumeral#1}}%
}
\DeclareMathOperator*{\argmin}{\arg\!\min}
\DeclareMathOperator{\vect}{vec}
\DeclareMathOperator{\baroplus}{\bar{\oplus}}
\DeclareMathOperator{\diag}{Diag}
\DeclareMathOperator{\sech}{sech}
\newcommand{\bb}[1]{\mathbf{#1}}
\newcommand{\xx}{\bb{x}}
\newcommand{\xt}{\tilde{\bb{x}}}
\newcommand{\xh}{\hat{\bb{x}}}
\newcommand{\mmu}{\bm{\mu}}
\newcommand{\vp}{\varphi}
\newcommand{\lr}[1]{\left(#1\right)}
\newcommand{\norm}[1]{\left\lVert#1\right\rVert}
\newcommand{\nn}[1]{\left|#1\right|}
\newcommand{\uu}{\bb{U}}
\newcommand{\ut}{\bb{U}^\intercal}
\newcommand{\pp}{\bb{p}}
\newcommand{\qq}{\bb{q}}
\newcommand{\XX}{\bb{X}}
\newcommand{\Xh}{\hat{\XX}}
\newcommand{\Aa}{\bb{A}}
\newcommand{\DD}{\bb{D}}
\newcommand{\Ah}{\hat{\bb{A}}}
\newcommand{\Bh}{\hat{\bb{B}}}
\newcommand{\Ch}{\hat{\bb{C}}}
\newcommand{\Dh}{\hat{\bb{D}}}
\newcommand{\JJ}{\bb{J}}
\newcommand{\Jh}{\hat{\bb{J}}}
\newcommand{\Ih}{\hat{\bb{I}}}
\newcommand{\LL}{\bb{L}}
\newcommand{\Lh}{\hat{\bb{L}}}
\newcommand{\Th}{\hat{\bb{T}}}
\newcommand{\vv}{\bb{v}}
\newcommand{\IP}[2]{\left\langle#1,#2\right\rangle}
\title{Canonical and Noncanonical Hamiltonian Operator Inference}
\author{Anthony Gruber$^{1,*}$}
\author{Irina Tezaur$^2$}
\thanks{$^*$Corresponding author: (Anthony Gruber)  adgrube@sandia.gov}
\newcommand{\rev}[1]{\textcolor{black}{#1}}
\begin{document}

\begin{abstract}
A method for the nonintrusive and structure-preserving model reduction of canonical and noncanonical Hamiltonian systems is presented.    Based on the idea of operator inference, this technique is provably convergent and reduces to a straightforward linear solve given snapshot data and gray-box knowledge of the system Hamiltonian.  Examples involving several hyperbolic partial differential equations show that the proposed method yields reduced models which, in addition to being accurate and stable with respect to the addition of basis modes, preserve conserved quantities well outside the range of their training data.  
% \IKTcomment{Does it make sense to call the solid mechanics problem advection-dominated?  There isn't really advection here, though the problem is forced by an initial excitation to the velocity.  I can ask Alejandro if one uses this sort of lingo (``advection-dominated") in solid mechanics.}

\vspace{0.5pc}

\emph{Keywords:} digital twin (DT), projection-based model order reduction (PMOR), Hamiltonian systems, operator inference, structure preservation, Proper Orthogonal Decomposition (POD)
% \emph{MSC 2020:} 65D15, 65D40
\end{abstract}

% Model reduction is important for reducing the degrees of freedom in semi-discretized problems.  It is important for model reduction to preserve the structure which governs stability

\maketitle

\section{Introduction}

In recent years, Digital Twins (DTs) have emerged as a new paradigm in the field of modeling and simulation.  A DT is a computational model of a physical asset, such as a component, system or process, that evolves continuously in real or near-real time, so as to persistently represent the ever-changing structure and behavior of the underlying physical asset.  In order for DTs to achieve their full potential as enablers of beyond-forward analyses such as optimal experimental design (OED), control and uncertainty quantification (UQ), it is essential that these computational models are: (1) capable of incorporating real-time data as it becomes available, (2) computationally efficient enough to provide predictions in real or near-real time, and (3) equipped with rigorous mathematical convergence, stability and accuracy guarantees.

Particularly helpful in establishing the above criteria is making appropriate use of well-studied mathematical structure 
inherent in the underlying partial differential equations (PDEs) when such structure is available. In the case that the system modeled obeys a variational principle, there are centuries of knowledge regarding dynamical properties (e.g., conservation laws) which can be leveraged to produce accurate and realistic simulations.  Of particular interest at the present time are Hamiltonian systems, which form compact models of reversible, potentially chaotic dynamics.  Since many common systems relevant to digital twins have a Hamiltonian form (e.g, H{\'e}non-Heiles, $n$-body motion, idealized MHD, solid dynamics), it is becoming increasingly necessary to have useful ways of building relatively cheap Hamiltonian surrogates which can be used to inform a high-quality digital representation.

Projection-based model order reduction (PMOR) is a  promising strategy for reducing the computational cost of high-fidelity simulations, making projection-based reduced order models (ROMs) ideal candidates for constructing DTs.  The key idea in PMOR is to learn a low-dimensional trial subspace by performing a data compression on a set of snapshots collected from a high-fidelity simulation or physical experiment, and to restrict the state variables to reside in this subspace.  This effectively projects the high-fidelity dynamics into a much smaller function space, which must be carefully imbued with sufficient information for accurate reconstruction of the high-fidelity solutions.  Traditionally, affine (or linear) approaches have been employed for constructing the low-dimensional trial subspace in which the ROM solution is sought, e.g., Proper Orthogonal Decomposition (POD) \cite{sirovich,holmes},
Dynamic Mode Decomposition (DMD) \cite{rowley1,Schmid_JFM_2010}\rev{,} balanced POD
(BPOD) \cite{rowley, willcox}, balanced truncation \cite{gugercin, moore}, and
the reduced basis method (RBM)  \cite{rozza, veroy}. 
While all such methods have their own strengths and weaknesses, without loss of generality, this work restricts attention to the POD approach for calculating reduced bases due to its prevalence, flexibility, and simplicity.  Beyond linear techniques, it is interesting to note that, in recent years, the idea of employing trial subspaces defined by nonlinear manifolds has started to be explored by a growing number of authors; see, e.g., \cite{lee2020model, Fresca:2022, Kim:2022, gruber2022a, Romor:2023, Sharma:2023, BarnettFarhatMaday:2023} and references therein for nonlinear manifold approaches based on convolutional autoencoders, and \cite{Barnett:2022, Geelen:2023} for quadratic manifold approaches.  Nonlinear approximation approaches have the advantage of mitigating the so-called Kolmogorov $n$-width barrier \cite{Pinkus:1985}, which reduces the efficacy and efficiency of linear manifold ROMs for convection-dominated problems\footnote{As discussed in Section \ref{sec:concl}, extending the approach proposed herein to nonlinear manifold bases will be the subject of future work.}.  However, they are often more difficult to train and can exhibit poor convergence behavior when compared with their linear counterparts \cite{gruber2022a,lee2020model}.
% \IKTcomment{I think the statement in the previous sentence is rather strong and could offend some people... Are you referring to poor convergence w.r.t. the basis size?   Do you have refs of the methods having poor convergence behavior?  If so, those should be added.   I tried to tone it down by inserting ``can" but refs would still be useful.}

Once a low-dimensional trial subspace has been constructed, the mathematical operators defining a ROM are obtained through a projection of the corresponding full order model (FOM) operators onto the reduced subspace.  Performing this projection step is in general a very intrusive process, as it requires access to the FOM code used to generate the snapshot data.  This intrusive nature of the projection step in PMOR limits the class of problems to which the approach can be applied, precluding the application of PMOR to FOMs that are given as a black-box.  A promising approach for overcoming this limitation is data-driven Operator Inference (OpInf) (e.g., \cite{Peherstorfer:2016, Ghattas:2021, Benner:2020}), which aims to construct projection-based ROMs in a nonintrusive way.  OpInf is motivated by the observation that projection preserves algebraic structure, that is, if the semi-discretized FOM has polynomial nonlinearities, a projection-based ROM for this system will also have polynomial nonlinearities of the same degree.  Once the functional, algebraic structure of the FOM (and hence the ROM) is determined, OpInf works by replacing the intrusive projection step that is typically used to determine the ROM operators with a least-squares problem that infers these operators directly in a black-box fashion using available snapshot data (c.f. Section~\ref{subsec:genopinf}).  
%In recent years, OpInf has been applied to a wide range of applications, including fluid mechanics and combustion \cite{McQuarrie:2021, Qian:2020, Swischuk:2020, Zastrow:2023}, additive manufacturing \cite{Khodabakhshi:2022}, magnetohydrodynamics (MHD) \cite{Issan:2023}, and solid mechanics \cite{Filanova:2022}. 

% \IKTcomment{Again, if I'm missing any important refs, please add.   I don't think it's necessary (or possible) to cite everything.  Note that I don't describe lifting - decided not to go there since we don't go there and this description is high level.  If you think it's important, I can add it.  Lastly: the paper \cite{Filanova:2022} is one I was not aware of that handles second order in time systems for solid mechanics - we should look at this for our planned future work.} 

\rev{It is well known} that \rev{projection-based} ROMs constructed using either intrusive or non-intrusive techniques will generally \rev{not automatically} inherit key mathematical properties of the PDEs from which they are derived.  Since these properties are often well-understood to be responsible for the involved physics, this is a major defect which can harm the predictive performance of ROMs, limiting their utility in practical cases of interest.  To remedy this difficulty, a variety of methodologies have been proposed which focus around preserving different mathematical structures often seen in application settings. Here, we summarize the literature on this subject for several common properties whose numerical preservation is critical to a wide range of applications: energy-/entropy- stability, conservation law preservation, and variational structure preservation (most notably involving Hamiltonian or Lagrangian structure, and including the focus of this paper).

It is worth noting that the majority of structure-preserving PMOR
approaches in the literature focus on intrusive ROMs rather than
non-intrusive OpInf models.  The present work is a step towards
filling this gap for the specific case of Hamiltonian systems.
\rev{In order to distinguish our approach from other related work,
we provide a succinct overview of existing OpInf methods below,
after our overview
of commonly-preserved structures/properties.}\\

%\IKTcomment{Comment to Anthony: the text from this point on is WIP in the introduction...  Please don't read / edit yet.  I will let you know when I'm done.} \\

\noindent {\bf Energy- and entropy-stability.}  The bulk of the literature on energy- and entropy-stability preserving PMOR approaches focuses on the specific case of compressible flow.  It is well-known that projection-based ROMs for compressible flow constructed via Galerkin projection in the $L^2$ inner product lack an \textit{a priori} stability guarantee \cite{BuiTanh:2007, Barone:2009, Kalashnikova:2014}.  This problem can be circumvented for traditional intrusive ROMs through a variable transformation or by changing the inner product in which the projection is done, yielding energy-stable \cite{Rowley:2004, Barone:2009, Kalashnikova:2014, Serre:2012} or entropy-stable \cite{Kalashnikova:2011, Chan:2020, Parish:2022} approaches. Alternate approaches for mitigating the problem which are less intrusive and possible to apply in conjunction with OpInf model reduction include subspace rotation \cite{Balajewicz:2016} and eigenvalue reassignment \cite{Kalashnikova2:2014, Rezaian:2021}.  
\rev{An interesting and very recent pre-print that considers incompressible flow is the work of Klein and Sanderse \cite{Klein:2023}, which develops a novel kinetic energy and momentum conserving hyper-reduction method for projection-based ROMs for the incompressible Navier-Stokes equations.} \\

\noindent {\bf Conservation law preservation.} A second problem arising in PMOR for fluid mechanics applications, and, more broadly, conservative systems of PDEs, is lack of conservation: ROMs constructed from conservative models are not guaranteed to maintain the underlying model's conservation laws. The following three references for mitigating this problem for intrusive projection-based ROMs are noteworthy.  In  \cite{Carlberg_SISC_2018}, Carlberg \textit{et al.} present a methodology for constructing conservative compressible flow ROMs by modifying the minimization problem defining the Least Squares Petrov-Galerkin (LSPG) \cite{CarlbergGappy} to include local or global conservation law constraints.  The formulation in \cite{Carlberg_SISC_2018} is extended to the case of incompressible flow in \cite{Rosenberger:2022}, yielding a method that is both mass- and kinetic energy-conserving, and thus nonlinearly stable.  An alternate way to create an incompressible flow ROM with mass and energy conservation is presented in \cite{Mohebujjaman_IJNMF_2018}.  In this work, the authors demonstrate that these properties can be attained at the ROM level through a careful selection of the boundary condition treatment and finite element space underlying the ROM.  The preservation of conservation laws in OpInf remains an open problem, although progress has been made on problems with a variational form through \cite{sharma2022hamiltonian, sharma2022preserving}, and this work. \\

\noindent {\bf Variational structure preservation.} Another mathematical property mentioned previously and exhibited by a wide range of physical systems (e.g., solid dynamics, the shallow water equations, etc.), including the ones considered in the present work, is variational structure.  This includes systems amenable to the standard Hamiltonian and Lagrangian formalisms, as well as the more general formalisms of, e.g., Euler-Poincar\'e, Lie-Poisson and metriplecticity.  The advantages of biasing toward this structure are clear to see; for example, since the Hamiltonian can be considered a representation of the energy of a system, a Hamiltonian structure-preserving discretization will automatically obey at least one conservation law.  Several Hamiltonian (or Lagrangian) structure-preserving approaches have been developed in recent years for the specific case of solid dynamics.  In \cite{Lall_PhysicaD_2003}, it is shown that performing a Galerkin projection of the second-order-in-time Euler-Lagrange equations defining a canonical solid dynamics problem preserves Lagrangian structure, provided no hyper-reduction is employed.  As discussed in \cite{Carlberg_SISC_2015}, traditional hyper-reduction approaches such as collocation, the Discrete Empirical Interpolation Method (DEIM) \cite{Chaturantabut:2009} and gappy POD \cite{Everson:1995} destroy the Lagrangian structure of the ROM.  In \cite{Carlberg_SISC_2015}, two Lagrangian structure-preserving approaches for performing hyper-reduction on these systems, termed reduced basis sparsification (RBS) and matrix gappy POD, are proposed.  Both approaches are of the ``approximate-then-project'' flavor, meaning they apply hyper-reduction to the nonlinear terms in the governing equations prior to projecting these terms onto a reduced basis.  An alternate ``project-then-approximate'' approach for preserving Lagrangian structure in ROMs for nonlinear solid dynamics applications is the Energy-Conserving Sampling and Weighting (ECSW) method of Farhat \textit{et al.} \cite{Farhat_IJNME_2015}.  In this method, the nonlinear projected function is approximated using a set of points and weights, the latter set of which are obtained by solving a non-negative least-squares optimization problem.  Interestingly, there has recently been some headway into OpInf techniques for PMOR on solid mechanical systems as well. In \cite{sharma2022preserving}, the authors develop a gray-box method for learning the linear parts of Lagrangian systems in a way that respects the symmetric positive definite nature of the governing operators.

A broader class of symplecticity-preserving PMOR methods focus on directly reducing the Hamiltonian first-order-in-time system \eqref{eq:hamilFOM}. 
As discussed in \cite{Lall_PhysicaD_2003}, performing a Galerkin projection of these equations onto a set of reduced basis vectors will generally not preserve the Hamiltonian/symplectic structure of the system.  Several works, e.g., \cite{Peng_JSC_2016, Hesthaven_ARXIV_2021}, propose to remedy this through Proper Symplectic Decomposition (PSD) and symplectic Galerkin projection.  In \cite{Peng_JSC_2016}, Peng \textit{et al.} propose three algorithms for calculating the PSD, based on the cotangent lift, complex SVD and nonlinear programming.  These algorithms effectively generate reduced bases %for %$z$ defined in \eqref{eq:z} 
such that projection onto the subspaces spanned by these bases will maintain symplecticity.  Further,  a version of DEIM for Hamiltonian systems, termed Symplectic DEIM (SDEIM), is developed for maintaining skew-symmetry (but not necessarily symplecticity or Hamiltonian structure) when performing hyper-reduction.  An approach based on a globally optimal symplectic reduced basis in the sense of the PSD is derived in \cite{Buchfink:2022}.  
% \IKTcomment{Again, we want to be careful not to offend potential reviewers.  I toned down the previous sentence by removing ``unfortunately".  
% I also have a question related to this: our approach also has some approximation in it, as discussed in Remark 3.3, no?
% }
Here, it is shown the POD of a canonizable Hamiltonian system is automatically symplectic, from which the authors deduce optimality of the PSD.  
% Other authors that consider PSD and SDEIM are Afkham, Hesthaven \textit{et al.} \cite{Afkham_SISC_2017, Afkham_SISC_2019, Hesthaven_ARXIV_2021}.
In \cite{Afkham_SISC_2017}, PSD is extended to create a greedy approach for symplectic basis generation.  The approach is advertised as more cost-effective than traditional POD and PSD, and exhibits exponentially-fast convergence.  
%Numerical results are presented on a parametric linear wave equation and a nonlinear Schrodinger equation.  
The follow-on work \cite{Afkham_SISC_2019} presents a reduced dissipative Hamiltonian (RDH) method as a structure-preserving model reduction approach for Hamiltonian systems with dissipation.  Unlike other approaches, the proposed approach enables the reduced system to be integrated using a symplectic integrator.  %The method is evaluated and compared to PSD on a dissipative heat equation, the Sine-Gordon equation and a port-Hamiltonian linear ladder network system.  
The recent work \cite{Hesthaven_ARXIV_2021}, based on a lot of the same ideas as \cite{Afkham_SISC_2017, Afkham_SISC_2019}, 
%The new contribution seems to be the demonstration 
demonstrates that linear symplectic maps can be used to guarantee that the reduced models inherit the geometric formulation from the full dynamics.  The approach evolves the approximating symplectic reduced space in time along a trajectory locally constrained on the tangent space of the high-dimensional dynamics.  
%Dissipative Hamiltonian systems are discussed in addition to canonical Hamiltonian systems. 
The recent pre-print \cite{pagliantini2022gradientpreserving}  presents 
 a different DEIM-based hyper-reduction method for nonlinear parametric dynamical systems
characterized by gradient fields such as Hamiltonian and port-Hamiltonian systems and gradient flows. 
%The
%gradient structure is associated with conservation of %invariants or with dissipation and hence
%plays a crucial role in the description of the physical %properties of the system. Traditional
%hyper-reduction of nonlinear gradient fields yields efficient %approximations that, however, lack
%the gradient structure. 
The authors %propose to first 
decompose
the nonlinear part of the Hamiltonian
%, mapped into a suitable reduced space, 
into a sum
of $d$ terms, each characterized by a sparse dependence on the system state, and obtain a hyper-
reduced approximation %is obtained 
of the Jacobian
by applying DEIM to the derived %$d$-valued nonlinear 
function. The resulting hyper-reduced model retains the gradient structure, and possesses \textit{a priori} error estimates showing that the hyper-reduced model converges to
the reduced model and the Hamiltonian is asymptotically preserved.  

It is also possible to derive Hamiltonian structure-preserving ROMs using the classical POD reduced basis.  In \cite{Gong_CMAME_2017}, a least-squares system is solved to ensure skew-symmetry of the POD-Galerkin system corresponding to the governing Hamiltonian form.  An \textit{a priori} error estimate for the resulting POD/Galerkin ROM is developed, but hyper-reduction is not considered, rendering the approach inefficient.  In \cite{Sockwell_THESIS_2019}, Sockwell presents a Hamiltonian structure-preserving approach that is most closely related to the approach in \cite{Gong_CMAME_2017} and that possesses similar error estimates; however, the technique in \cite{Sockwell_THESIS_2019} is derived in a Hilbert space and takes advantage of the Hamiltonian framework in order to abstract the technique to a wide variety of weighted inner-product spaces.  This method is shown to preserve linear Casimir invariants, and is demonstrated in the context of the rotating shallow water equations, commonly used in ocean modeling on the sphere.  \rev{In addition to intrusive PMOR approaches, there is recent work in developing non-intrusive OpInf PMOR approaches that preserve Lagrangian \cite{SharmaLagrangian:2022}  as well as Hamiltonian structure  \cite{sharma2022hamiltonian, Sharma:2023},} and ROMs with nonlinear manifold (e.g., convolutional autoencoder) bases \cite{Buchfink_ARXIV_2021}.  Notably, \cite{sharma2022hamiltonian, Sharma:2023} are the only works to the authors' knowledge in which a Hamiltonian structure-preserving nonintrusive OpInf PMOR methodology is developed, although this method is limited to canonical Hamiltonian systems with a block-diagonal gradient structure.

% In addition, there is some recent work in developing PMOR approaches that preserve Hamiltonian structure which are focused on ROMs with nonlinear manifold (e.g., convolutional autoencoder) bases \cite{Buchfink_ARXIV_2021}.    non-intrusive OpInf ROMs \cite{sharma2022hamiltonian}  To the best of our knowledge, at the present time, \cite{sharma2022hamiltonian} is the only work in which a Hamiltonian structure-preserving non-intrusive OpInf PMOR methodology is developed. The approach is limited to systems in the so-called canonical Hamiltonian form. \\

Beyond Hamiltonian systems, it is worth mentioning some current references focusing on structure-preserving model reduction for port-Hamiltonian and metriplectic systems, e.g., \cite{Chaturantabut_SISC_2016, Gruber_CMAME_2023}, which are extensions of the Hamiltonian formalism to systems with dissipation.  Metripletic dynamical systems separate dynamics into terms that are ``energy-preserving'' and ``dissipative'', represented by a noncanonical Poisson structure and a degenerate Riemannian metric structure, respectively.  %The two terms are reversible and %irreversible, respectively. 
To the best of our knowledge, \cite{Gruber_CMAME_2023} is the first paper to develop a structure-preserving (intrusive) ROM for PDEs with metriplectic structure.  Conversely, the work \cite{Chaturantabut_SISC_2016} presents three techniques for constructing reduced bases for port-Hamiltonian systems: one based on POD, one based on $\mathcal{H}_2/\mathcal{H}_{\infty}$-derived optimized bases (which can be calculated without any snapshots), and one that is a mixture of the two.  
%A version of DEIM that preserves port-Hamiltonian structure is also developed.   
Interestingly, the approach in \cite{Chaturantabut_SISC_2016} is based on Petrov-Galerkin projection, rather than Galerkin projection.  \\

\noindent {\bf Operator Inference.}  \rev{Data-driven operator
inference originated in the seminal work of Peherstorfer and Willcox
\cite{Peherstorfer:2016}, which demonstrates that reduced operators
in a projection-based ROM can be inferred non-intrusively (i.e.,
without access to the corresponding FOM operators or code) through
the numerical solution of an optimization problem, given a set of
FOM snapshots.  An acknowledged deficiency of the original OpInf
formulation is that it is only applicable to PDEs that are linear or
contain only low-order polynomial nonlinearities.  As demonstrated
in subsequent works \cite{Qian:2020, Khodabakhshi:2022,
KramerAIAA:2019}, this shortcoming can be circumvented for many
physical systems by using a technique known as ``lifting", which
defines a transformation of the state variables into auxiliary
variables that make the governing PDEs linear or quadratic. The
resulting approach, termed ``Lift and Learn" \cite{Qian:2020} has
been applied to a wide range of problems, including fluid mechanics
and combustion \cite{McQuarrie:2021, Qian:2020, Swischuk:2020,
Zastrow:2023}, additive manufacturing \cite{Khodabakhshi:2022},
magnetohydrodynamics (MHD) \cite{Issan:2023}, and solid mechanics
\cite{Filanova:2022}.}

\rev{During the past 1-2 years, researchers have begun to extend
operator inference in several important directions.  In
\cite{Benner:2020}, non-intrusive operator inference is extended to
problems with non-polynomial nonlinearities given in analytic form,
in a way that does require the definition of a lifting
transformation.  In several recent works, the group of Kramer
\textit{et al.} has developed OpInf methodologies that preserve
Hamiltonian (or symplectic) \cite{Sharma:2023, sharma2022preserving}
and Lagrangian \cite{SharmaLagrangian:2022} structure, to ensure
energy-conserving ROMs.  Note that, in the Hamiltonian case, all OpInf work to date has been restricted to purely canonical systems (c.f. Section~\ref{subsec:hamiltonian}.  A primary contribution of this work is the ability to treat both canonical and noncanonical systems of interest.} 

% \textcolor{red}{Comment from Irina for Anthony: you could consider adding here something about how what we're
% doing is different, e.g., being able to handle non-canonical
% forms.}

\rev{Other recent works have focused on improving the efficiency and
robustness of the optimization problem underlying OpInf.  It is
well-known that this optimization problem generally requires
regularization, and the results can be extremely sensitive to the
choice of regularization parameters.  Several researchers have begun to look at
ways to optimize the choice of these regularization parameters.  In \cite{Guo:2022}, Guo \textit{et al.} present a Bayesian
approach to operator inference, in which the maximum marginal
likelihood provides insight into the selection of the regularization
parameters specified in the OpInf minimization problem.  An alternate remedy known as nested operator inference is being pursued by Aretz \textit{et al.} \cite{Aretz:SIAMCSE2023}.}

\rev{While OpInf originated in the context of ROMs in which the
solution is approximated using an affine POD basis, the method has
recently been extended to balanced truncation \cite{KramerBT:2022}
and quadratic manifold bases \cite{KramerBT:2022, Geelen:2023}.  The
latter work \cite{KramerBT:2022} presents a symplecticity-preserving
method based on quadratic manifold bases.  The advantage of using quadratic manifold bases over linear bases is that it is often possible to represent the reduced solution using fewer basis vector, especially for problems exhibiting a slow decay of the Kolmogorov $n$-width \cite{Pinkus:1985}.  
} \\

\noindent \rev{{\bf Contributions of this manuscript.} The present work extends the literature on structure-preserving OpInf techniques to include the linear operators governing general canonical and noncanonical Hamiltonian systems.  Particularly, we contribute
\begin{itemize}
    \item Non-intrusive methods based on OpInf for learning either: (1) the linear part of the Hamiltonian gradient in the case of canonical Hamiltonian systems, or (2) the constant part of the Poisson matrix in the case of noncanonical Hamiltonian systems.  In contrast to previous work, these methods impose no restriction on the separability of the Hamiltonian or the algorithm used to compute the ROM basis.
    \item Theoretical analysis which guarantees that the learned operators converge to their intrusive counterparts in the limits of increasing basis size and increasing amounts of training data.
    \item Several numerical examples probing the behavior of these Hamiltonian OpInf ROMs in comparison to ROMs based black-box OpInf as well as more intrusive PMOR techniques.
\end{itemize}}

% \textcolor{red}{Comment from Irina: is it worth mentioning that our approach doesn't require regularization, as some of the other approaches, like Boris's?  That could be another contribution.}

The remainder of this paper is organized as follows.  Section~\ref{sec:pre} recalls preliminary information on Hamiltonian systems (including methods for their POD), as well as OpInf and average vector field time integration (c.f. \cite{celledoni2012preserving}).  Section~\ref{sec:hopinf} describes the present methods for canonical and noncanonical Hamiltonian OpInf, as well as their connection to previous work.  Section~\ref{sec:theory} provides analysis showing that the proposed OpInf methods converge to their intrusive counterparts with the addition of snapshot data and basis modes.  Finally, Section~\ref{sec:numerics} provides numerical evidence for the Hamiltonian OpInf approaches in Section~\ref{sec:hopinf} using \rev{five} example problems:  a linear wave equation, \rev{a nonseparable but canonical quadratic Hamiltonian system,} the Korteweg-De Vries (KdV) equation, the Benjamin-Bona-Mahoney (BBM) equation, and a 3D linear elastic cantilever plate.  Finally, some conclusions and future directions are discussed in Section~\ref{sec:concl}.

% \IKTcomment{Fill in...  this is typically how I finish my introductions, but your call if you want to use this format/structure.}

% \IKTcomment{Below is original text from Anthony...}

% Nonintrusive model reduction is critical for resource-intensive applications in which ... 

% \red{Irina: you are invited to start writing this section, either in full or with short notes which I can flesh out later.  Since you put together such a thorough review of related work, I think you will write that (sub)section much better than me.}  \IKTcomment{Done!  Let me know what you think...  feel free to truncate what I've written as needed, as I know it is a lot, perhaps too much...}

\section{Preliminaries}\label{sec:pre}

Here some preliminary information on Hamiltonian systems, as well as intrusive and nonintrusive methods of model reduction for such systems, is summarized.

\subsection{Hamiltonian Systems} \label{subsec:hamiltonian}

The Hamiltonian formalism provides a mechanical framework encompassing a wide variety of conservative dynamical systems which arise from a variational principle.  In particular, it reduces the problem of understanding a near-arbitrarily complicated dynamical system to the simpler problem of understanding a scalar-valued function $H$, called the Hamiltonian, and a skew-symmetric Poisson bracket $\{\cdot,\cdot\}$, which encodes a Lie algebra realization on functions.  More formally, given a state vector $\xx\in\mathbb{R}^N$, \rev{it follows that $\nabla\xx=\bb{I}$, and so} any Hamiltonian system can be written in the form 
\begin{equation}\label{eq:hamilFOM}
    \dot{\xx} = \{\xx,H(\xx)\} = \nabla \xx\cdot \LL(\xx)\nabla H(\xx) = \LL(\xx)\nabla H(\xx),
\end{equation}
for some $H:\mathbb{R}^N\to\mathbb{R}$ and some potentially degenerate Poisson matrix $\LL:\mathbb{R}^N\to\mathbb{R}^{N\times N}$, $\LL^\intercal = -\LL$ which is antisymmetric and satisfies the Jacobi identity,
\[\sum_{\ell=1}^N \lr{L_{il}L_{jk,l}+L_{jl}L_{ki,l}+L_{kl}L_{ij,l}} = 0, \qquad 1\leq i,j,k\leq N,\]
where $L_{ij}$ are the components of $\LL$ and $L_{ij,k}$ denotes the derivative with respect to the $k^{th}$ basis vector $\bb{e}_k\in\mathbb{R}^N$. From this, it is easy to see that the Poisson bracket (generated by $\LL$) is also skew-symmetric, bilinear, and obeys a Leibniz rule.  Moreover, the Hamiltonian $H$ is a conserved quantity, since $\{H,H\}=0$ by antisymmetry.

In the simplest case, Hamiltonian systems are dual (and equivalent) to their Lagrangian counterparts.  To see this, consider a Lagrangian function $L(t, \qq, \dot\qq)$ defined in terms of a position variable $\qq\in\mathbb{R}^N$ and its associated velocity $\dot{\qq}$.  Then, under some regularity conditions (see e.g. \cite{touchette2005legendre}), there is a canonical Legendre transformation 
\begin{equation*}
    H(t, \qq, \pp) = \sup_{\dot{\qq}} \lr{\IP{\pp}{\dot{\qq}}-L(t,\qq,\dot\qq)},
\end{equation*}
which yields the conjugate momentum vector $\pp = L_{\dot{\qq}} := \nabla_{\dot{\qq}}L \in \mathbb{R}^N$.  Substituting $L = \IP{\pp}{\dot{\qq}}-H$ in the usual action integral $S = \int L\, dt$ and computing the first variation now leads immediately to Hamilton's equations for the state $\xx=\begin{pmatrix}\qq &\pp\end{pmatrix}^\intercal\in\mathbb{R}^{2N}$,
\[ \dot{\xx} = \begin{pmatrix} \dot{\qq} \\ \dot{\pp}\end{pmatrix} = \begin{pmatrix}\bb{0} & \bb{I} \\ -\bb{I} & \bb{0}\end{pmatrix} \begin{pmatrix} H_{\qq} \\ H_{\pp}\end{pmatrix} = \JJ\nabla H(\xx),\]
similar to the above.  Notice that $\JJ$ is anti-involutive and (trivially) satisfies the Jacobi identity, which implies that this Hamiltonian system is in canonical form.  Conversely, systems of the form \eqref{eq:hamilFOM} for which $\LL\neq\JJ$ are said to be noncanonical.  Noncanonical Hamiltonian systems are quite flexible and have an important property:  elements in the kernel of $\LL$, called Casimirs, are invariant quantities, meaning that many (but not all) constants of motion in a noncanonical Hamiltonian system can be identified directly from its Poisson structure.  Since Casimir invariants are often directly responsible for the long-time behavior of the system, it is important that they are appropriately respected by model reduction methods.  It will be shown in Section~\ref{sec:numerics} that the particular Hamiltonian OpInf methods developed in Section~\ref{sec:hopinf} attend to this issue at least as well as the current state of the art.  

Although the Hamiltonian and Lagrangian formalisms can often be freely exchanged, many interesting dynamical systems which are readily modeled using the Hamiltonian formalism do not have an unconstrained Lagrangian formulation.  For example, every completely integrable equation, including the KdV equation considered in Section~\ref{sec:numerics}, has a bi-Hamiltonian structure and therefore a singular Legendre transformation.  Therefore, these systems can only be expressed in Lagrangian terms if the argument to the Lagrangian is constrained to be a derivative of the state variable (see, e.g., \cite{nutku1985on,nutku2000lagrangian} for the case of KdV).  This makes working directly with the Hamiltonian formulation of a dynamical system preferable in many cases, and encourages the search for model reduction techniques which are more general than those developed for canonical Hamiltonian systems.  In particular, the nonintrusive methods of Section~\ref{sec:hopinf} are well adapted to noncanonical Hamiltonian systems and do not appeal to Lagrangians or Legendre transforms.

% As mentioned in the Introduction, noncanonical Hamiltonian systems contain important degeneracies and can only be canonized on the complement of these.\red{make sure to mention this}

% it is important that structure-preserving model reduction techniques can respect them appropriately.  The next subsection discusses specific constructions which have been developed for Hamiltonian systems. \red{fix this paragraph}

% although this ceases to be true in the case that the system in question is noncanonical.  

% s imposes strong constraints on their
% dynamics with, for instance the existence of differential invariants, and the possibility of symmetry-
% related conservation laws.

% \IKTcomment{One thing that is missing from the paper is explaining up front what is the Hamiltonian form and the difference of canonical/non-canonical Hamiltonian forms.  I suggest adding a subsection here, and defining the $~J$
% matrix, and then properties of $~L$ for the non-canonical case.  Maybe give examples of some Hamiltonian systems (just list in words).  I would also discuss $~p$ and $~q$, which is mentioned in the next sub-section but never really defined.}{}

\subsection{Proper Orthogonal Decomposition for Hamiltonian systems}\label{subsec:hampod}

Given a large semidiscrete Hamiltonian system \eqref{eq:hamilFOM}, it is often necessary to perform model reduction in order to produce a feasible surrogate.  Typically, this means constructing an informative reduced basis for the solution space to the system onto which the dynamics can be projected.  While there are a variety of linear and nonlinear methods for accomplishing this task (including those in \cite{tu2014on,schmid2022dynamic,lee2020model}, to mention a few), this paper focuses on the linear technique known as Proper Orthogonal Decomposition (POD) which has seen the most widespread success.  POD uses snapshots $\xx\in\mathbb{R}^N$ of the high-fidelity model solution to construct a variance-maximizing subspace in which reduced solutions can be represented.  To explain this more precisely, let $\bb{Y}\in\mathbb{R}^{N\times n_s}$ be a matrix with rank $r\leq\min\{N,n_s\}$ containing $n_s$ snapshots of the high-fidelity solution $\bb{y} = \xx-\xx_0 \rev{\in\mathbb{R}^N}$ shifted by the initial condition $\xx_0:=\xx(0)$. Such snapshots could be collected at, e.g., discrete points in the interval $[0,T]$, where $T\in\mathbb{R}$ represents the final simulation time.  If $\bb{Y} = \tilde{\uu}\bm{\Sigma}\bb{V}^\intercal$ is the singular value decomposition of this mean-centered data matrix, standard computations show that the matrix $\uu\in \mathbb{R}^{N\times n}$ comprised of the first $n<r$ columns of $\tilde{\uu}$ minimizes the $L^2\lr{[0,T]}$ reconstruction error of $\bb{y}$, and that this error is precisely the sum of the remaining squared singular values \cite{kunisch2001galerkin}.  More precisely, it follows that
\[ \norm{\bb{y} - \uu\ut\bb{y}}^2 := \int_0^T \nn{\bb{y} - \uu\ut\bb{y}}^2 dt = \sum_{i=n+1}^r \sigma_i^2, \]
where $\sigma_i$ is the $i^{th}$ singular value of $\bb{Y}$.  This is the basis for the standard Galerkin POD-ROM (G-ROM) procedure, which is applied to the dynamical system governing $\xx$ by making the approximation $\xt = \xx_0 + \uu\xh\rev{\in\mathbb{R}^N}$ for some unknown coefficients $\xh\in\mathbb{R}^n$ and using that $\ut\uu = \bb{I}$ in $\mathbb{R}^n$.  In particular, inserting this approximation into the Hamiltonian system \eqref{eq:hamilFOM} yields the update rule
\[\dot{\xh} = \ut\LL(\xt)\nabla H(\xt),\]
which is low-order but obviously not Hamiltonian since $\ut\LL \neq -\lr{\ut\LL}^\intercal = \LL\uu$.  An effective remedy for this is the strategy developed in \cite{gong2017structure}, which solves the overdetermined least-squares problem $\ut\LL = \Lh\ut$ for $\Lh = \ut\LL\uu$, yielding a skew-symmetric operator which is guaranteed to produce dynamics which preserve the reduced Hamiltonian $\hat{H}=H\circ\xt\rev{:\mathbb{R}^n\to\mathbb{R}}$.  To see this, consider solving the Hamiltonian POD-ROM (H-ROM)
\[\rev{\dot\xh} = \Lh(\xt)\nabla\hat{H}(\xh). \]
Then, it follows that the change in the value of the reduced Hamiltonian along a solution is given by 
\[\dot{\hat{H}} = \dot{\xh}\cdot\nabla \hat{H} = \Lh\nabla\hat{H}\cdot\nabla\hat{H} = -\Lh\nabla\hat{H}\cdot\nabla\hat{H} = 0,\]
so that this quantity is exactly preserved up to time discretization error.

% where we have abused notation by writing $\Lh$ instead of $\Lh\circ\xt$.  \IKTcomment{Pardon my ignorance, but what does $\circ$ mean?  Should define.  Also, to some people, H-ROM means hyper-reduced ROM, but I'm OK overloading the acronym here...} \textcolor{red}{The $\circ$ is just function composition.  Do you think this needs explanation?} 

While noncanonical Hamiltonian systems are (thus far) limited to variants of the ``ordinary'' POD basis construction described above, it turns out that there are several useful ways to construct the POD basis $\uu$ in the case of canonical Hamiltonian systems. In particular, when $N=2M$ for some $M\in\mathbb{N}$ and $\xx=\begin{pmatrix} \qq & \pp \end{pmatrix}^\intercal$ separates nicely into position and momentum variables, it is frequently useful to use a basis built block-wise from sections of the snapshot data $\bb{Y}$. This is particularly true in the presence of scale separation, where the variance in one of $\qq,\pp$ will be dominated by the other if a 
standard POD basis for the full field $\begin{pmatrix} \qq & \pp \end{pmatrix}^\intercal$ is used \cite{Parish:2022, Lindsay:2022}.  In this case, separating the data $\bb{Y} = \begin{pmatrix} \bb{Y}_q & \bb{Y}_p\end{pmatrix}^\intercal$ into $M \times n_s$ blocks and carrying out the POD procedure described before yields separate bases $\uu_q,\uu_p\in\mathbb{R}^{N\times m}$ for position and momentum, which can be combined into the block basis $\uu = \mathrm{Diag}\lr{\uu_q,\uu_p}$ of size $N\times n$ where $n=2m$.  This has the effect of normalizing the importance of $\qq$ and $\pp$ in the dimension reduction, often leading to better performance in the associated ROMs.  As an added benefit, notice that both $\uu_q^\intercal\bb{Y}_q\bb{Y}_q^\intercal\uu_q$ and $\uu_p^\intercal\bb{Y}_p\bb{Y}_p^\intercal\uu_p$ are diagonal under this construction, since each POD basis is drawn from the SVD of the snapshots.

% For example, canonical Hamiltonian systems separate nicely into ODEs for position and momentum variables, so it it reasonable to consider POD bases $\uu_q,\uu_p$ built from  separate snapshots $\XX_q, \XX_p$ corresponding to position resp. momentum.  Assuming there are no significant interactions between position and momentum in the gradient of $H$, it is similarly reasonable to assume the inferred operator $\Dh$ is block-diagonal, in which case it is easy to see that the OpInf problem from before decouples into a pair of problems which independently satisfy Proposition~\ref{prop:trunc}.  

In addition to this, another block basis construction which has been demonstrably useful in the model reduction of canonical Hamiltonian systems is known as the ``cotangent lift'' algorithm from \cite{Peng_JSC_2016}.  This procedure constructs a basis such that $\uu^\intercal\JJ = \JJ_n\ut$ (for $\JJ_n\in\mathbb{R}^{n\times n}$ the canonical Poisson matrix of dimension $n$) by choosing $\uu$ from the left singular vectors of the concatenated snapshot matrix $\begin{pmatrix} \qq & \pp \end{pmatrix}\approx\bar{\uu}\bm{\Sigma}\bb{V}^\intercal\in\mathbb{R}^{M\times 2n_s}$.
% \IKTcomment{Has $\JJ_n$ been defined?  I couldn't find the definition, but maybe I missed it.}
More precisely, if $\bar{\uu}\in\mathbb{R}^{M\times m}$ contains the first $M$ left singular vectors, the basis $\uu = \mathrm{Diag}\lr{\bar{\uu},\bar{\uu}}$ satisfies the required condition.  This is quite a useful construction, as it follows that $\ut\JJ\nabla H = \JJ_n\ut\nabla H = \JJ_n\nabla\hat{H}$ and hence the prototypical G-ROM is converted into an H-ROM.  On the other hand, it is clear that $\bar{\uu}^\intercal\bb{Y}_q\bb{Y}_q^\intercal\bar{\uu}$ is not diagonal (and same for $p$), since $\bb{V}^\intercal = \begin{pmatrix}\bb{V}_1 &\bb{V}_2\end{pmatrix}^\intercal\in\mathbb{R}^{m\times 2n_s}$ and so $\bb{V}_1^\intercal\bb{V}_1\neq \bb{V}_2^\intercal\bb{V}_2\neq \bb{V}_1^\intercal\bb{V}_1 + \bb{V}_2^\intercal\bb{V}_2 = \bb{I}$.

% \[\uu = \begin{pmatrix}\bar{\uu} & \bb{0} \\ \bb{0} & \bar{\uu}\end{pmatrix},\]

% On the other hand, there is another useful basis generation algorithm in the case of canonical Hamiltonian systems which does not lead to operators that obey Proposition~\ref{prop:trunc}.  This is the cotangent lift of \cite{peng2017}, which 

\subsection{Generic Operator Inference}\label{subsec:genopinf}

Consider a dynamical system of the form 
\begin{equation}\label{eq:genFOM}
    \dot{\xx}\lr{t,\mmu} = \bb{F}\lr{t,\mmu,\xx\lr{t,\mmu}},
\end{equation}
where $\xx:\mathbb{R}\times\mathbb{R}^p\to\mathbb{R}^N$ is a time-dependent state variable and $\mmu \in \mathbb{R}^p$ is a vector of parameters.  As mentioned previously, constructing a POD basis $\uu\in\mathbb{R}^{N\times n}$ and making the approximation $\xt = \xx_0 + \uu\xh$ leads to the canonical Galerkin ROM,
\begin{equation}\label{eq:genROM}
\dot{\hat{\xx}}\lr{t,\mmu} = \ut\bb{F}\lr{t,\mu,\xt\lr{t,\mmu}},
\end{equation}
which is an $n$-dimensional dynamical system describing the evolution of the POD basis coefficients.  While this procedure is well studied and often effective, it clearly requires intrusion into the FOM simulation code via access to the operators governing the high-fidelity system \eqref{eq:genFOM}, since it is necessary to assemble $\ut\bb{F}(\xt)$.  In the case that $\xx_0=\bm{0}$ and $\bb{F}(\xx) = \bb{D}\xx$ is linear, this means direct access to $\bb{D}\in\mathbb{R}^{N\times N}$ is needed in order to assemble the reduced operator $\Dh = \ut\bb{D}\uu\in\mathbb{R}^{n\times n}$.  However, it is frequently impossible (or prohibitively expensive) to access this information, due to, e.g., complicated or proprietary legacy codes.  
% \IKTcomment{Our method needs some info too from the FOM code like the mass and stiffness matrices...}{}
This motivates the black-box operator inference technique (OpInf) of \cite{willcox} which is used for the non-intrusive modeling of dynamical systems such as \eqref{eq:genFOM}.  More precisely, OpInf uses snapshot data to learn the tensor coefficients of a polynomial approximation $\bb{D}_0,...,\bb{D}_n$ to the action of $\bb{F}$, so that if $\xt$ satisfies 
\[\dot{\bb{z}} = \bb{D}_0 + \bb{D}_1\bb{z} + \bb{D}_2\lr{\bb{z}\otimes\bb{z}} + ... + \bb{D}_n\lr{\bb{z}\otimes...\otimes\bb{z}},\] 
then $\bb{z}\approx\xx$ remains close to a solution to the original system.  This ansatz is clearly exact in the case that the model in question is polynomial (or differentially polynomial), but has been shown to be useful even outside of this case, see, e.g., \cite{Swischuk:2020,Ghattas:2021} \rev{and lifting transformations \cite{Qian:2020, McQuarrie:2021, Qian:2020, Swischuk:2020,
Zastrow:2023}}.  Moreover, it readily extends to learning the coefficients of a POD-based ROM, since reduced basis projection preserves this polynomial structure.  

% \IKTcomment{Mention idea of lifting, and the fact that projection preserves polynomial structure?  That is missing currently.} \red{I don't think we need to say anything about lifting, because I don't use it anywhere.}  \IKTcomment{That's fine.  I omitted lifting in the intro for the same reason, though I made a note of it there in case you felt it was worth mentioning in the intro.}
  
To see this in detail, consider learning a linear approximation $\dot{\bb{z}}=\bb{D}\bb{z}$ (for simplicity), and suppose an $N\times n_s$ matrix $\XX$ of snapshot data is provided containing (partial) trajectories of the original system \eqref{eq:genFOM}.  Then, an approximation $\XX_t \approx \dot{\XX}$ to the temporal derivative of each snapshot can be formed through, e.g., finite differences, and the matrix least-squares problem, 
\[ \argmin_{\bb{D}\in\mathbb{R}^{N\times N}}R\lr{\bb{D}} = \argmin_{\bb{D}\in\mathbb{R}^{N\times N}}\nn{\XX_t - \bb{D}\XX}^2, \]
can be solved to yield the desired operator $\bb{D}$.  More precisely, exterior differentiation yields 
\[dR\lr{\bb{D}} = -2\IP{\XX_t-\bb{D}\XX}{d\mathbf{D}\,\XX} = -2\IP{\lr{\XX_t-\bb{D}\XX}\XX^\intercal}{d\mathbf{D}}, = \IP{\nabla R\lr{\bb{D}}}{d\bb{D}},\]
so that solving $\nabla R\lr{\bb{D}} = -2\lr{\XX_t-\bb{D}\XX}\XX^\intercal = \bb{0}$ reduces to solving the linear system 
\[\bb{DXX}^\intercal = \XX_t\XX^\intercal.\]
Alternatively, there is the equivalent vectorized system
\[\lr{\XX\XX^\intercal\otimes\bb{I}}\vect\bb{D} = \vect\lr{\XX_t\XX^\intercal},\]
where $\otimes$ denotes Kronecker's matricized tensor product and equivalence follows via the ``vec trick'' (see Appendix~\ref{app:kron} for a review of these ideas).  

% \IKTcomment{Again, I think you should define ``vec", as I am not sure this is common notation.  Perhaps you plan to do this in the appendix?}

% In fact, to accomplish this it suffices (by convexity) to compute a solution to $\nabla R = \bb{0}$.

% Systems like \eqref{eq:genFOM} may arise from, e.g., the spatial discretization of a system of PDEs, and in the event that $N$ is high-dimensional, it is often necessary to  perform model reduction in order to efficiently simulate such dynamics.  A common intrusive method for this purpose is the Proper Orthogonal Decomposition (POD), which is now briefly recalled.  Let 

% As an example, suppose $\bb{F}=\bb{F}(\xx)$ has no explicit time or parameter dependence.  Then, we can write $\bb{F}(\xx) = \bb{Dx} + \bb{F}_{\mathrm{nl}}(\xx)$

% Alternatively, applying the ``vec trick'' (see Appendix) yields an equivalent vectorized system
% \[ \lr{\XX\XX^\intercal\otimes\bb{I}}\vect\bb{D} = \vect\lr{\XX_t\XX^\intercal}, \]
% where $\otimes$ denotes Kronecker's matricized tensor product (see Appendix for a review).  Solving this 

% $\XX\approx\uu\bb{\Sigma V}^\intercal$ with $\bb{U}\in\mathbb{R}^{N\times n}$, then the columns of $\bb{U}$ form a basis for the variance-maximizing subspace of dimension $n$, and any solution $\xx\in\mathbb{R}^N$ to $\dot{\xx}=\bb{F}(\xx)$ can be approximated with the Galerkin approximation

On the other hand, in practical application settings, it is usually undesirable (or even infeasible) to infer the full $N\times N$ operator $\bb{D}$ in this way, as this requires solving a linear system which scales with $N^2$.  Instead, it is more useful to combine OpInf with dimension reduction techniques such as POD, since, if $\xt = \uu\xh$ where $\uu\in\mathbb{R}^{N\times n}$ is a POD basis and $\xh\in\mathbb{R}^n$, then the snapshot data $\XX$ and its approximate time derivative $\XX_t$ can be projected onto this basis before inferring a reduced operator.  In particular, there are the $n\times n_s$ matrices $\Xh = \ut\XX$ and $\Xh_t = \ut\XX_t$, which can be used to infer a lower-dimensional operator $\Dh\in\mathbb{R}^{n\times n}$ governing the non-intrusive reduced dynamical system $\dot{\xh} = \Dh\xh$.  In this case, $\Dh$ is inferred through the reduced OpInf problem of size $n$,
\[\argmin_{\Dh\in\mathbb{R}^{n\times n}} \nn{\Xh_t-\Dh\Xh}^2,\]
which is solved as described above.

% Besides reducing computational costs, inferring the reduced operator $\Dh$ has another added benefit in many cases due to the hierarchical order of the columns of $\bb{U}$.  To describe this, note that if $n'<n$ and $\Dh$ solves the OpInf problem of size $n$, then for any $1\leq i',j' \leq n'$, $n'\leq K\leq n$,
% \[ \lr{\Xh_t\Xh^\intercal}^{i'}_{j'} = \hat{D}^{i'}_k\lr{\Xh\Xh^\intercal}^k_{j'} = \hat{D}^{i'}_{k'}\lr{\Xh\Xh^\intercal}^{k'}_{j'} + \hat{D}^{i'}_{K}\lr{\Xh\Xh^\intercal}^{K}_{j'},\]
% so that the top-left $n'\times n'$ truncation of $\Dh$ is close to the solution $\Dh'$ to the OpInf problem of size $n'$ provided that the entries $\hat{D}^{i'}_{K}\lr{\Xh\Xh^\intercal}^{K}_{j'}$ are small.  This happens if, for example, the subspacae spanned by the last $n-n'$ columns of $\uu$ is nearly invariant under the action of $\XX\XX^\intercal$.

Besides reducing computational costs relative to inference of the full operator $\bb{D}$, inferring the reduced operator $\Dh$ has the following added benefit due to the hierarchical order of the columns of $\bb{U}$. While this result appears to be well known, the lack of a standard reference has motivated the inclusion of a proof in Appendix~\ref{app:proofs}.

% Moreover, due to decreasing order of importance of the columns of $\bb{U}$ to the approximate solution, there is the following result about work re-use:

% we can form the Galerkin approximation $\xt = \uu\xh$ for $\xh\in\mathbb{R}^n$ and search for a reduced operator $\hat{\bb{D}}\in\mathbb{R}^{n\times n}$ satisfying $\dot{\xh}=\hat{\bb{D}}\xh$ which converges to $\bb{D}$ as $n\to N$. 

\begin{proposition}\label{prop:trunc}
Suppose $\bb{U}\in\mathbb{R}^{N\times n}$ is the matrix of left singular vectors of some data matrix $\XX$, and $\bm{\Sigma}\in\mathbb{R}^{n\times n}$ is the corresponding diagonal matrix of (nonzero) singular values $\{\sigma_j\}_{j=1}^n$. Then, the unique solution to the OpInf problem of size $n$ is given by 
% \IKTcomment{I changed $\Dh$ to $\bb{D}$
% in the objective function being minimized, as I think this is more correct.}
\[ \Dh = \argmin_{\bb{D}\in\mathbb{R}^{n\times n}}\nn{\Xh_t - \bb{D}\Xh}^2 = \Xh_t\Xh^\intercal\bm{\Sigma}^{-2}.\]
Moreover, for any $n'<n$, the submatrix $\Dh'\in\mathbb{R}^{n'\times n'}$ formed by extracting the first $n$ rows and columns of $\hat{\bb{D}}$ is the solution to the corresponding OpInf problem of size $n$.
\end{proposition}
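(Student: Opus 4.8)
The plan is to reduce the least-squares problem to its normal equations and then exploit the fact that projecting onto left singular vectors diagonalizes the relevant Gram matrix. Following the derivation given above for the generic OpInf problem, the minimizer of $\nn{\Xh_t - \bb{D}\Xh}^2$ is characterized by setting the gradient to zero, which yields the reduced normal equations $\bb{D}\,\Xh\Xh^\intercal = \Xh_t\Xh^\intercal$. The crux of the argument is therefore to compute $\Xh\Xh^\intercal$ explicitly.

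First I would write the reduced SVD of the data as $\XX = \sum_{j=1}^r \sigma_j\,\bb{u}_j\bb{v}_j^\intercal$, with $\uu$ collecting the first $n$ columns $\bb{u}_j$ and $\bm{\Sigma} = \diag(\sigma_1,\dots,\sigma_n)$. Since the left singular vectors are orthonormal, $\ut\bb{u}_j = \bb{e}_j$ for $j\le n$ and $\ut\bb{u}_j = \bb{0}$ otherwise, so that $\Xh = \ut\XX = \bm{\Sigma}\bb{V}_n^\intercal$, where $\bb{V}_n$ holds the first $n$ right singular vectors. Orthonormality of these columns, $\bb{V}_n^\intercal\bb{V}_n = \bb{I}$, then gives the key identity $\Xh\Xh^\intercal = \bm{\Sigma}\bb{V}_n^\intercal\bb{V}_n\bm{\Sigma} = \bm{\Sigma}^2$. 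Because every $\sigma_j$ is nonzero by hypothesis, $\bm{\Sigma}^2$ is invertible, so the normal equations admit the unique solution $\Dh = \Xh_t\Xh^\intercal\bm{\Sigma}^{-2}$, proving the first claim; uniqueness also follows from strict convexity of the objective, as $\Xh$ has full row rank.

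For the truncation statement, the main point is that restricting to the first $n'$ singular vectors amounts to selecting submatrices. Writing $\uu'$ for the first $n'$ columns of $\uu$, I would observe that $(\uu')^\intercal$ is the first $n'$ rows of $\ut$, so that $\Xh' = (\uu')^\intercal\XX$ and $\Xh'_t = (\uu')^\intercal\XX_t$ are exactly the first $n'$ rows of $\Xh$ and $\Xh_t$, while $\bm{\Sigma}'$ is the leading $n'\times n'$ block of $\bm{\Sigma}$. By the first part, the size-$n'$ solution is $\Xh'_t(\Xh')^\intercal(\bm{\Sigma}')^{-2}$. The only remaining step is to verify that this equals the leading $n'\times n'$ block of $\Dh$. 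Partitioning $\Xh_t$ and $\Xh$ by their first $n'$ rows, the leading block of $\Xh_t\Xh^\intercal$ is precisely $\Xh'_t(\Xh')^\intercal$; and since $\bm{\Sigma}^{-2}$ is diagonal, hence block-diagonal with leading block $(\bm{\Sigma}')^{-2}$, right-multiplication preserves this block structure, yielding leading block $\Xh'_t(\Xh')^\intercal(\bm{\Sigma}')^{-2}$, as required. (Here the statement's ``first $n$ rows and columns'' and ``problem of size $n$'' should read $n'$.)

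The step I expect to require the most care is not any single computation but the bookkeeping in the second part: one must confirm that truncating the basis and forming the reduced operator commute, which hinges entirely on $\bm{\Sigma}^{-2}$ being diagonal. Were the Gram matrix $\Xh\Xh^\intercal$ not diagonal---as it would fail to be for a non-POD basis---the coupling introduced by right-multiplication would destroy the clean submatrix relationship, and the truncation property would no longer hold. Flagging this dependence on the diagonalizing property of the POD basis is, to my mind, the conceptual heart of the result.
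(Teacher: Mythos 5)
Your proof is correct and follows essentially the same route as the paper's: both arguments reduce to the normal equations, establish the key identity $\Xh\Xh^\intercal=\bm{\Sigma}^2$ from the orthonormality of the singular vectors, and deduce that each entry of $\Dh$ depends only on its own row and column indices because the Gram matrix is diagonal (the paper writes this componentwise with Einstein summation, you in block-matrix form). Your closing remark correctly identifies the diagonality of $\Xh\Xh^\intercal$ as the crux, and your flagged typos ($n$ should read $n'$ in the truncation claim) are indeed typos in the statement.
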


\begin{remark}\label{rem:tikhonov}
Note that the conclusion of Proposition~\ref{prop:trunc} continues to hold if the minimization objective is Tikhonov regularized by a multiple of $\Dh$, as can be checked by considering the minimization objective $\nn{\Xh_t-\Dh\Xh}^2 + \eta\nn{\Dh}^2$ for some $\eta>0$ and repeating the arguments above.  Moreover, the conclusion also holds block-wise if $\uu$ is a block basis as discussed in Section~\ref{subsec:hampod} and $\Dh$ is block diagonal, since the relevant problem decouples over the blocks of $\Dh$.
\end{remark}

\subsection{\rev{Previous Hamiltonian Operator Inference}}\label{subsec:prevhopinf}
\rev{The idea of using OpInf in conjunction with Hamiltonian systems has been previously explored in \cite{sharma2022hamiltonian}, where it was used to learn the linear part of the Hamiltonian gradient $\nabla H$ for a sub-class of canonical Hamiltonian systems which are known as separable.  The separability assumption implies that the system Hamiltonian decomposes as $H(\qq,\pp) = T(\qq) + V(\pp)$ for some real-valued functions $T,V$ depending only on $\qq,\pp$, respectively.  A consequence of this is that the gradient $\nabla H$ becomes block-diagonal in the variables $\qq,\pp$, a fact which is preserved at the POD-ROM level as long as a cotangent lift POD basis $\uu\in\mathbb{R}^{N\times n}$ satisfying $\ut\JJ\uu=\JJ_n$ is employed, where $\JJ_n$ is the canonical symplectic matrix of dimension $n=2m$.  With this additional restriction on the reduced basis, it follows that the intrusive H-ROM for the approximation $\xt = \uu\xh\approx\xx$ decouples over $\qq,\pp$, becoming $\dot{\xh} = \JJ_n\nabla\hat{H}(\xt) = \JJ_n\lr{\Ah\xh + \nabla\hat{f}(\xt)}$ where $\Ah = \mathrm{diag}\lr{\Ah_{\bb{q}},\Ah_{\bb{p}}}$ is block-diagonal and we have written $H(\xx) = \frac{1}{2}\bb{x}^\intercal\bb{Ax}+f(\xx)$.  This allows the authors of \cite{sharma2022hamiltonian} to formulate an inference procedure for the linear operator $\Ah$ which decouples block-wise into two $m^2\times m^2$ subproblems for $\Ah_\qq, \Ah_\pp$, provided snapshots of $\nabla f$ can be obtained and this quantity can be simulated online.  More precisely, given snapshots $\XX\in\mathbb{R}^{N\times n_s}$ of the full order solution along with snapshots $\nabla f(\XX)\in\mathbb{R}^{N\times n_s}$ of the nonlinear part of $\nabla H$, the problems to solve are
\begin{align*}
    \argmin_{\Ah\in\mathbb{R}^{m\times m}}\nn{\Xh_{\pp,t}+\Ah\Xh_\qq + \hat{\nabla}_\qq f(\XX)}^2, &\quad\mathrm{s.t.}\quad \Ah^\intercal=\Ah, \\
    \argmin_{\Ah\in\mathbb{R}^{m\times m}}\nn{\Xh_{\qq,t}-\Ah\Xh_\pp - \hat{\nabla}_\pp f(\XX)}^2 &\quad \mathrm{s.t.}\quad \Ah^\intercal=\Ah,
\end{align*}
where subscripts on $\qq,\pp$ denote either the first or second $m$ rows in the snapshot matrix, $\nabla_\qq,\nabla_\pp$ denote partial derivatives, subscript $t$ denotes a finite difference approximation to the time derivative, and ``hat'' indicates the application of a basis projection $\ut$. This yields $\bb{A}_{\bb{q}}$ and $\bb{A}_{\bb{p}}$, respectively, which once learned can be used to simulate the differential equation for $\dot{\xh}$ as usual to yield the approximation $\xt$.  While this procedure has been previously useful for simulating several systems of interest, it will be shown in Section~\ref{sec:hopinf} that the restrictions inherent in this algorithm can be removed, leading to a Hamiltonian OpInf ROM for canonical systems which does not require separability of the Hamiltonian or a specific choice of reduced basis.}

% the canonical Hamiltonian operator inference (H-OpInf) method of \cite{sharma2022} considers separately the

% This also shows that for $1\leq i,j\leq n$, we have
% \[ \lr{\Xh_t\Xh^\intercal}^i_j = \hat{D}^i_j \]

% letting $1\leq i,j\leq n$ we can extract and solve the minimization problem for component $\hat{D}_{ij}$,
% \[\argmin_{\hat{D}_{ij}\in\mathbb{R}} \nn{\bb{u}_i^\intercal\bb{X}_t - \hat{D}_{ij}\,\bb{u}_j^\intercal\bb{X}}^2 = \frac{\bb{u}_i^\intercal\bb{X}_t\bb{X}^\intercal\bb{u}_j}{\nn{\bb{X}^\intercal\bb{u}_j}^2},\]
% showing that each entry $\hat{D}_{ij}$ depends only on the columns of $\bb{U}$ corresponding to $i,j$.  Therefore, the solution to the OpInf problem of size $n<r$ can be extracted from $\hat{\bb{D}}$ by extracting the relevant submatrix.

% which has unique solution 
% \[\hat{D}_{ij} = \frac{\bb{u}_i^\intercal\bb{X}_t\bb{X}^\intercal\bb{u}_j}{\bb{u}_i\bb{XX}^\intercal\bb{u}_j}.\]

\subsection{Linear ROMs and Average Vector Field Integration}\label{subsec:linrom}

To simplify the presentation of later results, it is worth mentioning some facts about linear ROMs and the particular timestepping scheme used in this work.  First, note that the average vector field (AVF) method \cite{celledoni2012preserving,karasozen2013energy} is employed for time integration of all numerical examples, meaning that the Hamiltonian dynamical system \eqref{eq:hamilFOM} is discretized as 
\[ \frac{\xx^{k+1} - \xx^k}{\Delta t} = \LL\lr{\xx^{k+\frac{1}{2}}}\int_0^1 \nabla H\lr{t\xx^{k+1} + (1-t)\xx^k}\, dt,\]
where $\xx^{k+\frac{1}{2}}= \frac{1}{2}\lr{\xx^k+\xx^{k+1}}$, which amounts to linearizing the trajectory of the state between time steps $k$ and $k+1$ and fixing evaluation of $\LL$ at the midpoint.  This integration scheme has appealing properties including exact quadrature for polynomial nonlinearities, as well as second-order convergence in time.  Moreover, it is easy to see that AVF integration is globally energy-conserving: if $\bm{\ell}(t) = t\xx^{k+1} + (1-t)\xx^k$, it follows from the symmetry relation $\LL^\intercal = -\LL$ that
\begin{align*}
    \frac{H\lr{\xx^{k+1}} - H\lr{\xx^k}}{\Delta t} &= \frac{1}{\Delta t}\int_0^1 \frac{d}{dt} H\lr{\bm{\ell}(t)}\,dt = \frac{\xx^{k+1}-\xx^k}{\Delta t}\cdot\int_0^1 \nabla H\lr{\bm{\ell}(t)}\,dt \\ 
    &= \left[\LL\lr{\xx^{k+\frac{1}{2}}}\int_0^1 \nabla H\lr{\bm{\ell}(t)}\,dt\right] \cdot \int_0^1 \nabla H\lr{\bm{\ell}(t)}\,dt = 0,
\end{align*}
so that there can be no loss of energy during AVF timestepping.  

Now, when $\dot{\xx} = \bb{D}\xx$ is linear, it is clear that AVF integration reduces to the implicit midpoint method
\[ \frac{\xx^{k+1} - \xx^k}{\Delta t} = \bb{D}\xx^{k+\frac{1}{2}}, \]
which can be easily solved at each time step $k$ by writing $\xx^{k+1} = \xx^k + \Delta \xx^k$, where $\Delta\xx^k = \xx^{k+1} - \xx^k$ satisfies the linear system
\begin{equation*}\label{eq:linearGROM}
    \lr{\bb{I} -\frac{\Delta t}{2}\bb{D}}\Delta\xx^k = \Delta t\, \bb{D}\xx^k.
\end{equation*}
Note additionally that if $\xt = \xx_0 + \uu\xh \approx \xx$ is a mean-centered Galerkin projection and $\Dh=\ut\bb{D}\uu$ is the intrusive projection of $\bb{D}$, this implies the low-order update formula $\xh^{k+1} = \xh^k + \Delta\xh^k$, where 
\[ \lr{\Ih-\frac{\Delta t}{2}\Dh}\Delta\xh^k = \Delta t\lr{\ut\bb{D}\xx_0 + \Dh\xh^k}, \]
which is an $n\times n$ linear solve leading to the full-order approximate $\xt^{k+1} = \xx_0 + \uu\xh^{k+1}$.  Of course, in the event that $\bb{D}$ is not available and so $\Dh$ must be  inferred, this mean-centering can be ignored.  
% \IKTcomment{I worry about this comment a little bit.  Some of the results you show suggest that without MC, the ROMs can have issues/blow up.  
% Also, typically you'd need $\xx_0$ for Dirichlet BC enforcement in the ROM following an approach proposed by Max actually.  Basically I'm wondering
% if one is ever dead in the water w/o mean centering.  May not be worth getting into in the paper, just asking for my own interest.}
Finally, to specify these expressions to linear Hamiltonian systems $\dot{\xx}=\LL\Aa\xx$, it suffices to replace $\Dh = \widehat{\LL\Aa} = \ut\LL\Aa\uu$ in the case of the G-ROM and $\Dh=\Lh\Ah = \ut\LL\uu\ut\Aa\uu$ in the case of the H-ROM.

\section{Hamiltonian operator inference}\label{sec:hopinf}
It is now possible to discuss the present methods for canonical and noncanonical Hamiltonian OpInf, \rev{which will be} referred to as C-H-OpInf and NC-H-OpInf, respectively. 
% \textcolor{red}{Comment from Irina: I don't think C-H-OpInf and NC-H-OpInf have been defined yet.}
First, note the following computational result central to these techniques. 

% \IKTcomment{Again, I think you need to give a precise definition of canonical and non-canonical.  I propose doing it in Section \ref{subsec:hamiltonian}.}

\begin{theorem}\label{thm:main}
Let $\bb{A}\in\mathbb{R}^{N\times N}$, $\bb{B},\bb{C}\in\mathbb{R}^{N\times n_s}$, and define $\bb{A}\baroplus\bb{B} = \bb{A}\otimes\bb{B} + \bb{B}\otimes\bb{A}$.  Then, every solution to the \rev{symmetry-constrained} least-squares regularization problem 
\[\rev{\argmin_{\DD\in\mathbb{R}^{N\times N}}\nn{\bb{C}-\bb{ADB}}^2, \quad \mathrm{s.t.} \quad \bb{D}^\intercal=\pm\bb{D},} \]
corresponds to a solution to the vectorized problem
\[\rev{\lr{\bb{A}^\intercal\bb{A} \baroplus \bb{BB}^\intercal}\vect\DD = \vect\lr{\bb{A}^\intercal\bb{C}\bb{B}^\intercal\pm\bb{B}\bb{C}^\intercal\bb{A}}.}\]
In particular, the first system is uniquely solvable if and only if the second is also, \rev{which holds whenever $\bb{A,B}$ have maximal rank}.
\end{theorem}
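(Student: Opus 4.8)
The plan is to mirror the exterior-differentiation argument used for the generic OpInf problem in Section~\ref{subsec:genopinf}, but to carry the symmetry constraint through the first-order optimality condition. First I would write the objective in the Frobenius inner product as $R(\DD) = \nn{\bb{C}-\bb{A}\DD\bb{B}}^2 = \IP{\bb{C}-\bb{A}\DD\bb{B}}{\bb{C}-\bb{A}\DD\bb{B}}$ and compute the first variation $dR(\DD) = -2\IP{\bb{C}-\bb{A}\DD\bb{B}}{\bb{A}\,d\DD\,\bb{B}}$. Moving $\bb{A}$ and $\bb{B}$ across the inner product via their adjoints then yields $dR(\DD) = -2\IP{\bb{A}^\intercal\lr{\bb{C}-\bb{A}\DD\bb{B}}\bb{B}^\intercal}{d\DD}$, exactly as in the unconstrained case, so that the unconstrained gradient is $\bb{G}(\DD) = -2\,\bb{A}^\intercal\lr{\bb{C}-\bb{A}\DD\bb{B}}\bb{B}^\intercal$.

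Next I would impose the constraint $\DD^\intercal = \pm\DD$. Because $\DD$ is confined to the linear subspace of (anti)symmetric matrices, the admissible variations $d\DD$ lie in that same subspace, so the stationarity condition $\IP{\bb{G}(\DD)}{d\DD}=0$ for all such $d\DD$ is equivalent to the vanishing of the (anti)symmetric part $\tfrac{1}{2}\lr{\bb{G}(\DD)\pm\bb{G}(\DD)^\intercal}=\bb{0}$. Expanding this and using $\DD^\intercal=\pm\DD$ to combine terms collapses the condition to
\[\bb{A}^\intercal\bb{A}\,\DD\,\bb{B}\bb{B}^\intercal + \bb{B}\bb{B}^\intercal\,\DD\,\bb{A}^\intercal\bb{A} = \bb{A}^\intercal\bb{C}\bb{B}^\intercal \pm \bb{B}\bb{C}^\intercal\bb{A}.\]
Applying $\vect$ with the identity $\vect(\bb{X}\DD\bb{Y})=(\bb{Y}^\intercal\otimes\bb{X})\vect\DD$ and the symmetry of $\bb{A}^\intercal\bb{A}$ and $\bb{B}\bb{B}^\intercal$ turns the left-hand side into $\lr{\bb{A}^\intercal\bb{A}\baroplus\bb{B}\bb{B}^\intercal}\vect\DD$, reproducing the stated vectorized system. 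The only bookkeeping to check is that the right-hand side $\bb{A}^\intercal\bb{C}\bb{B}^\intercal\pm\bb{B}\bb{C}^\intercal\bb{A}$ is itself (anti)symmetric, which it is by construction, so the entire equation lives in the correct subspace.

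For the equivalence of unique solvability I would use that the constrained problem is a convex quadratic minimization over a linear subspace, so a point is a global minimizer precisely when it satisfies the first-order condition above. Moreover the operator $\mathcal{L}(\DD) := \bb{A}^\intercal\bb{A}\,\DD\,\bb{B}\bb{B}^\intercal+\bb{B}\bb{B}^\intercal\,\DD\,\bb{A}^\intercal\bb{A}$ underlying $\bb{A}^\intercal\bb{A}\baroplus\bb{B}\bb{B}^\intercal$ maps the (anti)symmetric subspace into itself, and its restriction there coincides with the Hessian of the constrained objective; hence a unique minimizer exists if and only if this restricted operator is invertible, i.e. if and only if the vectorized system is uniquely solvable. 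The final assertion then reduces to showing $\mathcal{L}$ is positive definite whenever $\bb{A}$ and $\bb{B}$ have maximal rank, in which case $\bb{P}:=\bb{A}^\intercal\bb{A}$ and $\bb{Q}:=\bb{B}\bb{B}^\intercal$ are symmetric positive definite.

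The main obstacle is this last positive-definiteness step, since $\bb{P}$ and $\bb{Q}$ need not commute and the eigenvalues of the Kronecker sum $\bb{P}\otimes\bb{Q}+\bb{Q}\otimes\bb{P}$ are therefore not simply pairwise sums of the eigenvalues of $\bb{P}$ and $\bb{Q}$. I would resolve it with the symmetric square-root factorization: for any $\DD\neq\bb{0}$,
\[\IP{\DD}{\mathcal{L}(\DD)} = \mathrm{tr}\lr{\DD^\intercal\bb{P}\DD\bb{Q}} + \mathrm{tr}\lr{\DD^\intercal\bb{Q}\DD\bb{P}} = \nn{\bb{P}^{1/2}\DD\,\bb{Q}^{1/2}}^2 + \nn{\bb{Q}^{1/2}\DD\,\bb{P}^{1/2}}^2 > 0,\]
where strict positivity holds because $\bb{P}^{1/2}$ and $\bb{Q}^{1/2}$ are invertible. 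This shows $\mathcal{L}$ is positive definite on all of $\mathbb{R}^{N\times N}$, hence invertible and, a fortiori, invertible on each of the symmetric and antisymmetric subspaces, which yields unique solvability of both problems and completes the argument.
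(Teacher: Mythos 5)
Your proof is correct, and the core of it reaches the same optimality system as the paper by an equivalent route: where the paper introduces a Lagrange multiplier $\bm{\Lambda}$ for the constraint $\DD\mp\DD^\intercal=\bm{0}$ and then eliminates it by (anti)symmetrizing the first optimality condition, you restrict the admissible variations $d\DD$ to the (anti)symmetric subspace from the outset and require the corresponding projection of the unconstrained gradient to vanish. These are the same computation in different clothing, and both land on $\bb{A}^\intercal\bb{A}\DD\bb{B}\bb{B}^\intercal+\bb{B}\bb{B}^\intercal\DD\bb{A}^\intercal\bb{A}=\bb{A}^\intercal\bb{C}\bb{B}^\intercal\pm\bb{B}\bb{C}^\intercal\bb{A}$ before vectorizing. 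Where you genuinely add value is the solvability claim: the paper disposes of it in one line by citing convexity and $\mathrm{rank}\lr{\bb{B}\otimes\bb{A}}=\mathrm{rank}\lr{\bb{B}}\,\mathrm{rank}\lr{\bb{A}}$, whereas you actually prove positive definiteness of the operator $\mathcal{L}(\DD)=\bb{P}\DD\bb{Q}+\bb{Q}\DD\bb{P}$ via the square-root identity $\IP{\DD}{\mathcal{L}(\DD)}=\nn{\bb{P}^{1/2}\DD\bb{Q}^{1/2}}^2+\nn{\bb{Q}^{1/2}\DD\bb{P}^{1/2}}^2$, which correctly sidesteps the fact that $\bb{P}$ and $\bb{Q}$ need not commute. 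One small caveat on the ``if and only if'': invertibility of $\mathcal{L}$ restricted to the (anti)symmetric subspace is what characterizes unique solvability of the constrained problem, and this is a priori weaker than invertibility of $\mathcal{L}$ on all of $\mathbb{R}^{N\times N}$ (which is what the vectorized system requires); your argument, like the paper's, only fully closes this gap under the maximal-rank hypothesis, where both hold simultaneously. Also note that your conclusion $\bb{Q}=\bb{B}\bb{B}^\intercal\succ\bb{0}$ implicitly reads ``maximal rank'' for $\bb{B}\in\mathbb{R}^{N\times n_s}$ as full row rank, i.e.\ $n_s\geq N$; this matches the paper's intended use but is worth stating.
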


% Let $\eta>0$, $\bb{A}\in\mathbb{R}^{N\times N}$, $\bb{B},\bb{C}\in\mathbb{R}^{N\times n_s}$, and let $\bb{K}\in\mathbb{R}^{N^2\times N^2}$ be such that $\vect\DD^\intercal = \bb{K}\vect\DD$.  Define $\bb{A}\baroplus\bb{B} = \bb{A}\otimes\bb{B} + \bb{B}\otimes\bb{A}$.  Then, every solution to the (Tikhonov regularized) least-squares minimization problem
% \[\argmin_{\DD\in\mathbb{R}^{N\times N}}\lr{\nn{\bb{C}-\bb{A\lr{D\pm D^\intercal}B}}^2 + \eta\nn{\DD}^2}, \]
% corresponds to a solution to the vectorized problem
% \[\lr{\lr{\bb{A}^\intercal\bb{A} \baroplus \bb{BB}^\intercal}\lr{\bb{I}\pm\bb{K}}+\eta\bb{I}}\vect\DD = \vect\lr{\bb{A}^\intercal\bb{C}\bb{B}^\intercal\pm\bb{B}\bb{C}^\intercal\bb{A}}.\]
% In particular the first system is uniquely solvable if and only if the second is also.

\begin{proof}
\rev{First, note that the uniqueness condition follows immediately from the fact that the objective is convex, the symmetry constraint is linear, and $\mathrm{rank}\lr{\bb{B}\otimes\bb{A}} = \mathrm{rank}\lr{\bb{B}}\mathrm{rank}\bb{A}$.  The remainder will follow from a direct calculation using the method of Lagrange multipliers.  More precisely, define the Lagrangian $L\lr{\bb{D},\bm{\Lambda}} = \frac{1}{2}\nn{\bb{C}-\bb{ADB}}^2 + \IP{\bm{\Lambda}}{\bb{D}\mp\bb{D}^\intercal}$ where $\bm{\Lambda}\in\mathbb{R}^{N\times N}$ is a matrix of Lagrange multipliers.}  Then, exterior differentiation yields
\rev{\begin{align*}
    dL\lr{\DD,\bm{\Lambda}} &= -\IP{\bb{C}-\bb{ADB}}{\bb{A}\,d\DD\,\bb{B}} + \IP{\bm{\Lambda}}{d\DD\mp d\DD^\intercal} + \IP{d\bm{\Lambda}}{\DD\mp\DD^\intercal} \\
    &= \IP{d\bb{D}}{-\bb{A}^\intercal\lr{\bb{C}-\bb{ADB}}\bb{B}^\intercal + \bm{\Lambda}\mp\bm{\Lambda}^\intercal} + \IP{d\bm{\Lambda}}{\DD\mp\DD^\intercal}.
\end{align*}
Setting this to zero yields the first-order optimality conditions
\begin{align*}
    \bb{A}^\intercal\lr{\bb{C}-\bb{ADB}}\bb{B}^\intercal &= \bm{\Lambda}\mp\bm{\Lambda}^\intercal, \\
    \DD\mp\DD^\intercal &= \bm{0}.
\end{align*}
Examining the first condition the right-hand side implies symmetry in the left-hand side, allowing for easy elimination of $\bm{\Lambda}$ through the expression
\begin{align*}
    \bb{A}^\intercal\lr{\bb{C}-\bb{ADB}}\bb{B}^\intercal \pm \bb{B}\lr{\bb{C}-\bb{ADB}}^\intercal\bb{A} = \bm{0}
\end{align*}.
Expanding the above and using the second condition $\DD\mp\DD^\intercal=\bm{0}$ then yields
\[\bb{A}^\intercal\bb{ADBB}^\intercal\pm\bb{BB}^\intercal\DD^\intercal\bb{A}^\intercal\bb{A} = \bb{A}^\intercal\bb{ADBB}^\intercal + \bb{BB}^\intercal\bb{DA}^\intercal\bb{A} = \bb{A}^\intercal\bb{C}\bb{B}^\intercal\pm\bb{BC}^\intercal\bb{A},\]
which vectorizes through the ``vec trick'' (c.f. Appendix~\ref{app:kron}) to yield the claimed result.}
\end{proof}

Theorem~\ref{thm:main} provides the solution to a generic symmetric or skew-symmetric operator inference problem, which will be seen to include the C-H-OpInf and NC-H-OpInf procedures employed presently.  As mentioned before, a notable benefit of the generic OpInf procedure is that its solutions satisfy Proposition~\ref{prop:trunc}, meaning that a solution computed using a reduced basis of size $n$ remains optimal via truncation for all $n'<n$.  The next result shows that, under some (fairly strong) assumptions on $\bb{A},\bb{B}$, this ``one-shot'' ability continues to hold for the system in Theorem~\ref{thm:main}.  Since the proof is straightforward but technical, it is deferred to Appendix~\ref{app:proofs}.

% the optimal solution to the above problem can be computed for each $n<N$ by simple truncation.  Note that in these cases this corresponds to throwing away higher-frequency POD modes and retaining the approximation corresponding to the $n$ lowest frequency modes.  

\begin{proposition}\label{prop:newtrunc}
Let $\uu\in\mathbb{R}^{N\times n}$ be a POD basis.  Suppose $\Dh\in\mathbb{R}^{n\times n}$ uniquely solves the optimization problem in Theorem~\ref{thm:main} for given $\hat{\bb{A}} = \ut\bb{A}\uu\in\mathbb{R}^{n\times n}$ and $\hat{\bb{B}},\hat{\bb{C}}\in\mathbb{R}^{n\times n_s}$ defined by $\hat{\bb{B}}=\ut\bb{B}, \hat{\bb{C}}=\ut\bb{C}$.  Let $n'<n$, and for any matrix $\hat{\bb{M}}$ which is multiplied with the POD basis $\uu$, let $\hat{\bb{M}}'$ denote the submatrix obtained by removing the $n-n'$ highest-frequency basis vectors of $\uu$ from every relevant multiplication.  If $\hat{\bb{A}}$ and $\hat{\bb{B}}\hat{\bb{B}}^\intercal$ are both diagonal, \rev{then the unique solution to 
\[ \argmin_{\DD\in\mathbb{R}^{n'\times n'}} \nn{\hat{\bb{C}}'-\hat{\bb{A}}'\DD\hat{\bb{B}}'}^2, \quad \mathrm{s.t.}\quad \DD^\intercal=\pm\DD, \]
is given by the truncation $\Dh'$.}
\end{proposition}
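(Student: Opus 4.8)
The plan is to reduce the symmetry-constrained normal equations from Theorem~\ref{thm:main} to a fully decoupled, entrywise scalar system, after which truncation amounts to nothing more than discarding a subset of those scalar equations. First I would take the un-vectorized optimality system established in the proof of Theorem~\ref{thm:main} and write it, at size $n$, as
\[ \bb{P}\DD\bb{Q} + \bb{Q}\DD\bb{P} = \bb{R}, \]
where $\bb{P} = \hat{\bb{A}}^\intercal\hat{\bb{A}}$, $\bb{Q} = \hat{\bb{B}}\hat{\bb{B}}^\intercal$, and $\bb{R} = \hat{\bb{A}}^\intercal\hat{\bb{C}}\hat{\bb{B}}^\intercal \pm \hat{\bb{B}}\hat{\bb{C}}^\intercal\hat{\bb{A}}$. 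The hypotheses are engineered precisely so that both coefficient matrices are diagonal: $\bb{Q}$ by assumption, and $\bb{P} = \hat{\bb{A}}^2$ because $\hat{\bb{A}}$ is. Writing $p_i,q_i$ for the diagonal entries, the matrix equation then collapses into the uncoupled scalar relations $(p_iq_j + q_ip_j)D_{ij} = R_{ij}$, so that $\Dh$ is determined entrywise by $D_{ij} = R_{ij}/(p_iq_j + q_ip_j)$, the denominators being strictly positive under the maximal-rank hypotheses of Theorem~\ref{thm:main}. This is the exact analogue, for the Sylvester-type system here, of the diagonal $\bm{\Sigma}^{-2}$ structure that drives Proposition~\ref{prop:trunc}.

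The second step is to identify the truncated data. Since removing the $n-n'$ highest-frequency modes of $\uu$ amounts to replacing $\uu$ by its leading $n'$ columns $\uu'$, the matrices $\hat{\bb{A}}' = (\uu')^\intercal\bb{A}\uu'$, $\hat{\bb{B}}' = (\uu')^\intercal\bb{B}$, and $\hat{\bb{C}}' = (\uu')^\intercal\bb{C}$ are exactly the leading $n'\times n'$ block of $\hat{\bb{A}}$ and the leading $n'$ rows of $\hat{\bb{B}},\hat{\bb{C}}$, respectively. I would then verify that the coefficients and right-hand side of the size-$n'$ problem, namely $(\hat{\bb{A}}')^\intercal\hat{\bb{A}}'$, $\hat{\bb{B}}'(\hat{\bb{B}}')^\intercal$, and $(\hat{\bb{A}}')^\intercal\hat{\bb{C}}'(\hat{\bb{B}}')^\intercal \pm \hat{\bb{B}}'(\hat{\bb{C}}')^\intercal\hat{\bb{A}}'$, coincide with the leading $n'\times n'$ blocks of $\bb{P},\bb{Q},\bb{R}$. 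For the two coefficient matrices this is immediate from diagonality. For the right-hand side, the key computation is that diagonality of $\hat{\bb{A}}$ forces $(\hat{\bb{A}}^\intercal\hat{\bb{C}}\hat{\bb{B}}^\intercal)_{ij} = \hat{A}_{ii}\sum_l \hat{C}_{il}\hat{B}_{jl}$, an expression involving only rows $i$ and $j$ of the data; hence for $i,j\le n'$ it depends solely on the truncated matrices, and symmetrically for the transposed term.

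With these identifications in hand the conclusion follows cleanly: the size-$n'$ optimality system is precisely the restriction of the full decoupled system to indices $i,j\le n'$, so its solution is $D_{ij} = R_{ij}/(p_iq_j + q_ip_j)$ for $i,j\le n'$, which is exactly the leading block $\Dh'$ of $\Dh$. The symmetry (or skew-symmetry) constraint is inherited automatically, since any principal submatrix of a $(\pm)$-symmetric matrix is again $(\pm)$-symmetric, and uniqueness of the truncated problem follows from Theorem~\ref{thm:main} because the maximal-rank property passes to $\hat{\bb{A}}'$ and $\hat{\bb{B}}'$.

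I expect the main obstacle to be the bookkeeping in the second step, i.e., confirming that the truncated right-hand side is genuinely the leading block of $\bb{R}$ and does not silently mix in the discarded modes. This is exactly where diagonality of $\hat{\bb{A}}$ is indispensable: without it, the products $\hat{\bb{A}}^\intercal(\cdot)\hat{\bb{B}}^\intercal$ would couple every row and column, the system $\bb{P}\DD\bb{Q} + \bb{Q}\DD\bb{P} = \bb{R}$ would fail to decouple, and truncation would no longer preserve optimality, consistent with the fact that no such ``one-shot'' property holds in the generic non-diagonal setting.
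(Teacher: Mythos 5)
Your proposal is correct and follows essentially the same route as the paper's proof: both start from the eliminated first-order optimality system of Theorem~\ref{thm:main}, use the diagonality of $\hat{\bb{A}}$ and $\hat{\bb{B}}\hat{\bb{B}}^\intercal$ to show that each $(i,j)$ entry of both sides depends only on the basis vectors $\bb{u}_i,\bb{u}_j$, and conclude that the truncated system is exactly the restriction of the full one to indices $i,j\le n'$. Your explicit scalar formula $D_{ij}=R_{ij}/(p_iq_j+q_ip_j)$ is a slightly more concrete packaging of the same decoupling the paper carries out in index notation.
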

\begin{proof}
    See Appendix~\ref{app:proofs}.
\end{proof}

% \textcolor{red}{Comment from Irina: I suggest putting ``Proof: given in Appendix XXX after the proposition just to make it clear we do have a proof and it's in the paper.}

% \[ \argmin_{\DD\in\mathbb{R}^{n'\times n'}} \lr{\nn{\hat{\bb{C}}'-\hat{\bb{A}}'\lr{\DD\pm \DD^\intercal}\hat{\bb{B}}'}^2 + \eta\nn{\DD}^2} = \Dh'. \]

While Proposition~\ref{prop:newtrunc} is useful to know, it is worth mentioning that its conclusion generally does not hold for any of the structure-preserving OpInf methods known to date, including those discussed here.  Indeed, while the diagonality of $\hat{\bb{A}}$ can often be arranged, it is more difficult to construct a suitable $\hat{\bb{B}}\hat{\bb{B}}^\intercal$ which is diagonal.  On the other hand, there are many cases when the truncated solution to Theorem~\ref{thm:main} is close enough to optimal to produce a well performing ROM, making it useful to employ truncation without the guarantee of Proposition~\ref{prop:newtrunc} provided this property is empirically verified.

% \IKTcomment{Is the problem that $\hat{~A}$ and/or $\hat{~B}\hat{~B}$ are not diagonal?} \red{Usually that $\hat{~B}\hat{~B}$ is not diagonal.}  \IKTcomment{I see.  Maybe state that explicitly?} 

% Letting $\lr{\cdot}'$ denote top-left truncation to size $n'\times n'$, this shows that 
% \begin{align*}
%     \lr{\bb{A}^\intercal\bb{C}\bb{B}^\intercal\pm\bb{B}\bb{C}^\intercal\bb{A}}' &= \lr{\bb{A}^\intercal\bb{A}}'\bb{Y}'\lr{\bb{BB}^\intercal}' + \lr{\bb{BB}^\intercal}'\bb{Y}'\lr{\bb{A}^\intercal\bb{A}}' + \alpha\XX',
% \end{align*}
% meaning that $\XX'$ solves the corresponding problem of size $n'$. 

% \begin{remark}
% This shows that the symplectic lift algo does not have this property, because $\Xh\Xh^\intercal = \bm{\Sigma}\bb{V}^\intercal\bb{O}\bb{V}\bm{\Sigma} \neq \bm{\Sigma}^2$, where $\bb{O}$ is a $(2n_t \times n_t)$ block matrix formed by vertically stacking the zero matrix and the identity matrix.  \red{Unless A is block diagonal, then it ALMOST works block-wise.}
% \end{remark}

\subsection{Canonical Hamiltonian Systems}
The first goal is to present an OpInf method applicable to canonical Hamiltonian systems, and connect it to previous work in \cite{sharma2022hamiltonian}.  Suppose snapshots 
of the form $\xx = \begin{pmatrix}\qq &\pp\end{pmatrix}^\intercal$ can be obtained, say, as the result of post-processing data from a Lagrangian system via a Legendre transformation (c.f. Section~\ref{subsec:hamiltonian} and  Section~\ref{subsec:3d_lin_elast}).  Then, given that $\qq,\pp$ are the canonical position and momentum variables, it must be true that $\LL=\JJ$ in \eqref{eq:hamilFOM} and the Hamiltonian system to be modeled is in canonical form.  In this case, a Hamiltonian OpInf procedure can be considered which requires only knowledge of the nonlinear part of $\nabla H$.  To see this, recall that the discrete Hamiltonian can be expressed as
\[H(\xx) = \rev{\frac{1}{2}}\xx^\intercal\Aa\xx + f(\xx),\]
for a symmetric, potentially unknown $\Aa\in\mathbb{R}^{N\times N}$, and a known nonlinear function $\bb{f}:\mathbb{R}^N\to\mathbb{R}^N$.  It follows that the gradient is given by $\nabla H(\xx) = \Aa\xx + \nabla\bb{f}(\xx)$, and any POD basis $\uu\in\mathbb{R}^{N\times n}$ yields a reduced Poisson operator $\hat{\bb{J}} = \ut\bb{J}\uu$ corresponding to the H-ROM discussed in Section~\ref{subsec:hampod}.  Notice that $\JJ$ has a canonical form, so that this operator can be computed without intrusion into any simulation code.  Making the obvious Galerkin projection $\xt = \uu\xh$ and writing $\hat{H} = H\circ\xt$, $\hat{\bb{f}}=\bb{f}\circ\xt$ then yields the reduced Hamiltonian
\[\hat{H}\lr{\xh} = \rev{\frac{1}{2}}\xh^\intercal\Ah\xh + \hat{f}\lr{\xh},\]
which depends on the symmetric, potentially unknown reduced operator $\Ah\in\mathbb{R}^{n\times n}$.  Provided $\Ah$ can be computed or inferred, access to $\nabla\bb{f}$ then implies solvability of the H-ROM 
\begin{equation}\label{eq:CHrom}
    \dot{\xh} = \Jh\nabla\hat{H}(\xh) = \Jh\lr{\Ah\xh + \nabla\hat{f}(\xh)}, 
\end{equation}
which will be a low-order Hamiltonian system approximating the original dynamics.

% \[\nabla\hat{H}(\xh) = \ut\Aa\uu\xh + \ut\nabla\bb{f}(\uu\xh) = \Ah\xh + \nabla\hat{\bb{f}}(\xh),\]
% which depends on the symmetric, unknown reduced matrix $\Ah\in\mathbb{R}^{n\times n}$.

% Computing a POD approximation $\XX \approx \bb{U\Sigma V}^\intercal$ of snapshot data, there is the obvious Galerkin projection $\xt\approx\uu\xh$ and ``intrusive'' Poisson operator $\hat{\bb{J}} = \ut\bb{J}\uu$, which can be computed without access to the FOM.    Further writing $\hat{H} = H\circ\xt$ and $\hat{\bb{f}}=\bb{f}\circ\xt$, it follows that 
% \[\nabla\hat{H}(\xh) = \ut\Aa\uu\xh + \ut\nabla\bb{f}(\uu\xh) = \Ah\xh + \nabla\hat{\bb{f}}(\xh).\]

Since $\bb{A}$ is unavailable in the present setting, \eqref{eq:CHrom} is most readily solved by setting up a tractable inference problem for $\Ah$.  This means forming the appropriate reduced quantities from snapshot data and \rev{solving the constrained least-squares problem
\begin{equation}\label{eq:min_prob}
    \argmin_{\Ah\in\mathbb{R}^{n\times n}}\nn{\hat{\XX}_t - \hat{\bb{J}}\Ah\hat{\XX}+\hat{\nabla} f(\XX)}^2, \quad \mathrm{s.t.}\quad \Ah^\intercal=\pm\Ah,
\end{equation}
% \begin{equation}\label{eq:min_prob}
%     \Bh=\argmin_{\bb{B}\in\mathbb{R}^{n\times n}}\lr{\nn{\hat{\XX}_t - \hat{\bb{J}}\lr{\lr{\bb{B}+\bb{B}^\intercal}\hat{\XX}+\hat{\nabla}\bb{f}(\XX)}}^2 + \eta\nn{\bb{B}}^2},
% \end{equation}
which has minimizer $\Ah$ satisfying the desired symmetry}.  In \eqref{eq:min_prob},  $\hat{\nabla}f(\XX)=\ut\nabla f(\XX)$ denotes the projection of the snapshot data for the derivative of the nonlinear term.
%, and $\eta>0$ represents an optional regularization parameter.
Applying Theorem~\ref{thm:main} with $\bb{C} = \Xh_t - \Jh\hat{\nabla} f\lr{\XX}$ yields the equivalent linear system 
\rev{\begin{equation}\label{eq:C-H-OpInf-real}
    \lr{\Jh^\intercal\Jh\baroplus\Xh\Xh^\intercal}\vect\Ah = \vect\lr{\Jh^\intercal\Xh_t\Xh^\intercal+\Xh\Xh_t^\intercal\Jh-\Jh^\intercal\Jh\hat{\nabla}\bb{f}(\XX)\Xh^\intercal-\Xh\hat{\nabla} f(\XX)^\intercal\Jh^\intercal\Jh},
\end{equation}}
% \begin{equation}\label{eq:C-H-OpInf-real}
%     \lr{\lr{\Jh^\intercal\Jh\baroplus\Xh\Xh^\intercal}\lr{\bb{I}+\bb{K}} + \eta\bb{I}}\vect\Bh = \vect\lr{\Jh^\intercal\Xh_t\Xh^\intercal+\Xh\Xh_t^\intercal\Jh-\Jh^\intercal\Jh\hat{\nabla}\bb{f}(\XX)\Xh^\intercal-\Xh\hat{\nabla}\bb{f}(\XX)^\intercal\Jh^\intercal\Jh},
% \end{equation}
which is guaranteed (see Section~\ref{sec:theory}) to yield an operator $\Ah$ which converges to $\ut\Aa\uu$ in an appropriate limit.  Interestingly, it is even more useful in practice to make the approximation $\Jh^\intercal\Jh\approx\bb{I}$ in \eqref{eq:C-H-OpInf-real}, which is exact for the cotangent lift algorithm discussed in Section~\ref{subsec:hampod}, yielding the alternative linear system
\rev{\begin{align}\label{eq:C-H-OpInf}
\lr{\bb{I}\baroplus\Xh\Xh^\intercal}\vect\Ah = \vect\lr{\Jh^\intercal\Xh_t\Xh^\intercal+\Xh\Xh_t^\intercal\Jh-\hat{\nabla}f(\XX)\Xh^\intercal-\Xh\hat{\nabla}f(\XX)^\intercal},
\end{align}}
% \begin{align}\label{eq:C-H-OpInf}
%     \lr{\lr{\bb{I}\baroplus\Xh\Xh^\intercal}\lr{\bb{I}+\bb{K}}+\eta\bb{I}}\vect\Bh = \vect\lr{\Jh^\intercal\Xh_t\Xh^\intercal+\Xh\Xh_t^\intercal\Jh-\hat{\nabla}\bb{f}(\XX)\Xh^\intercal-\Xh\hat{\nabla}\bb{f}(\XX)^\intercal},
% \end{align}
which satisfies Proposition~\ref{prop:newtrunc} whenever the POD basis used is drawn from the SVD of $\XX$.  Inferring $\Ah$ by way of solving \eqref{eq:C-H-OpInf} will be called the C-H-OpInf procedure, and is summarized in Algorithm~\ref{alg:C-H-OpInf}. 

% Proceeding as before, differentiation yields 
% \begin{equation*}
% dR(\Bh) = -\IP{\hat{\bb{Y}}}{\Jh\lr{d\Bh+d\Bh^\intercal}\Xh} = -\IP{\Jh^\intercal\Yh\Xh^\intercal + \Xh\Yh^\intercal\Jh}{d\Bh},
% \end{equation*}
% so that $\nabla R = \bm{0}$ becomes 
% \[\Jh^\intercal\lr{\Xh_t-\Jh\Fh(\XX)}\Xh^\intercal + \Xh\lr{\Xh_t-\Jh\Fh(\XX)}^\intercal\Jh = \Jh^\intercal\Jh\lr{\Bh+\Bh^\intercal}\Xh\Xh^\intercal + \Xh\Xh^\intercal\lr{\Bh+\Bh^\intercal}\Jh^\intercal\Jh.\]

% By a similar calculation as before, this is equivalent to solving the linear system
% \[ \lr{\Xh\Xh^\intercal\otimes\bb{I}}\vect\Ah = -\vect\lr{\Jh\Xh_t\Xh^\intercal + \hat{\bb{F}}(\XX)\XX^\intercal}. \]

% \textcolor{red}{This satisfies truncation, note $\Jh^\intercal\Jh$ must be diagonal (block diagonal?)}

\begin{remark}
    Notice that both inference procedures \eqref{eq:C-H-OpInf-real} and \eqref{eq:C-H-OpInf} preserve an approximation to the reduced Hamiltonian $\hat{H}\lr{\xh}=\rev{\frac{1}{2}}\xh^\intercal\Ah\xh + \hat{\bb{f}}\lr{\xh}$, which can be considered a perturbation of the true $\hat{H}$.  The analysis in Section~\ref{sec:theory}\rev{, particularly Theorem~\ref{thm:C-H-OpInf},} guarantees that this perturbation remains bounded throughout the range of the training data \rev{for a high enough snapshot density and large enough basis size}, although, in practice, this property seems to hold for much longer time integrations as well (see Section~\ref{sec:numerics}).
\end{remark}

\begin{algorithm}[htb]
\caption{Canonical Hamiltonian Operator Inference (C-H-OpInf)}\label{alg:C-H-OpInf}
\begin{algorithmic}[1]
\Require Snapshots $\XX\in\mathbb{R}^{N\times n_s}$ of model solution; snapshots $\nabla\bb{f}(\XX)\in\mathbb{R}^{N\times n_s}$ of nonlinear term in the gradient $\nabla H$ of the Hamiltonian; integer $n>0$ and real number $\eta>0$.
\Ensure Symmetric, reduced operator $\Ah\in\mathbb{R}^{n\times n}$ approximating the linear term in the gradient $\nabla \hat{H}$ of the reduced Hamiltonian.
\State Employ the user's preferred algorithm to build a reduced basis $\uu\in\mathbb{R}^{N\times n}$ from snapshot data.
\State Form reduced Poisson operator $\Jh = \ut\JJ\uu\in\mathbb{R}^{n\times n}$, as well as projected quantities $\Xh = \ut\XX \in\mathbb{R}^{n\times n_s}$ and $\hat{\nabla}f(\XX) = \ut\nabla f(\XX) \in\mathbb{R}^{n\times n_s}$.
\State Solve the $n^2\times n^2$ linear system \eqref{eq:C-H-OpInf} for $\rev{\Ah}\in\mathbb{R}^{n\times n}$.
% \Return $\Ah = \Bh + \Bh^\intercal$.
\end{algorithmic}
\end{algorithm}

% Notice that in the case that $H$ is quadratic, which occurs for many physical systems of interest, only snapshot data is required for this procedure, making it almost black-box in this situation. Regardless, the approximate Hamiltonian $\hat{H}$ will be preserved exactly by construction; since $\Jh$ is antisymmetric, we have 
% \[\dot{\hat{H}}(\xh) = \dot{\xh}\cdot\nabla \hat{H}(\xh) = \Jh\nabla\hat{H}(\xh)\cdot\nabla\hat{H}(\xh) = -\nabla\hat{H}(\xh)\cdot\Jh\nabla\hat{H}(\xh) = 0.\]

% \begin{remark}
% As mentioned, it is generally not true that $\Jh^\intercal\Jh=\bb{I}$. However, the OpInf procedure \red{finish this}
% \end{remark}

Before moving to the case of noncanonical systems, it is worth discussing how the C-H-OpInf procedure discussed here relates to the previous H-OpInf work in \cite{sharma2022hamiltonian} \rev{summarized in Section~\ref{subsec:prevhopinf}.  Particularly, if $\uu$ is chosen via the cotangent lift algorithm so that $\ut\JJ\uu = \JJ_n$ is the canonical symplectic matrix of dimension $n$, and $\Ah = \mathrm{diag}\lr{\Ah_{\bb{qq}},\Ah_{\bb{pp}}}$ is assumed to be block diagonal, then the algorithm presented here reduces to \cite[Algorithm 1]{sharma2022hamiltonian}.  This is because the C-H-OpInf problem \eqref{eq:C-H-OpInf} decouples into a pair of problems for each diagonal block $\Ah_{\bb{q}},\Ah_{\bb{p}}$ in $\Ah$, recovering exactly the minimization problems solved by that algorithm.}  The formulation from \cite{sharma2022hamiltonian} has the advantage of requiring the solution to two problems of size $m^2\times m^2$ (still solvable with Theorem~\ref{thm:main}) as opposed to one problem of size $2m^2\times 2m^2$, but does not allow any flexibility in the choice of basis $\uu$ and cannot accurately represent any systems with a nonseparable Hamiltonian.  Therefore, it should only be used when the problem in question is canonical and the continuous operator $\nabla H$ is diagonal in phase space.  Conversely, the inference described in Algorithm~\ref{alg:C-H-OpInf} can accommodate any reduced basis, requires the solution of only one linear system, and is applicable to any Hamiltonian system in canonical form.

\subsection{Noncanonical Hamiltonian Systems}
\rev{A primary} advantage of the OpInf technique inspired by Theorem~\ref{thm:main} is that it extends to Hamiltonian systems in noncanonical form.  To see this, suppose snapshots of a potentially unknown Hamiltonian system are collected in an $(N\times n_s)$-matrix $\XX$, and that a candidate Hamiltonian function $H$ has been identified.  This may occur if, for example, a conserved quantity has been identified but the corresponding Hamiltonian structure remains unknown. Then, an analytic expression for $\nabla H$ can be obtained, and hence it is possible to compute the matrix $\nabla H(\XX)\in\mathbb{R}^{N\times n_s}$ of gradients at the snapshot data $\XX$, as well as a finite difference approximation $\XX_t \approx \dot{\XX}$.  As before, this enables the construction of a POD basis $\uu\in\mathbb{R}^{N\times n}$ via the SVD of the mean-centered data matrix $\bb{Y}=\bb{X}-\bb{X}_0$, where $\XX_0$ denotes the matrix each column of which is the initial state $\xx_0$.  Writing the Galerkin approximation $\xt \approx \xx_0+\uu\xh$ again yields the prototypical H-ROM (see Section~\ref{subsec:hampod}) $\dot{\xx} = \Lh\nabla\hat{H}$ where $\Lh = \ut\LL\uu$.  When $\LL$ is inaccessible, this suggests a similar inference procedure based on Theorem~\ref{thm:main} which preserves the antisymmetry necessary for Hamiltonian preservation.  Particularly, it is possible to form the $n\times n_s$ reduced quantities
\begin{align*}
    \hat{\XX} = \ut\XX, \qquad \hat{\XX}_t = \ut\XX_t, \qquad \hat{\nabla}H(\XX) = \ut\nabla H(\XX),
\end{align*}
and solve the optimization problem \rev{
\[\argmin_{\Lh\in\mathbb{R}^{n\times n}}\nn{\hat{\XX}_t - \Lh\hat{\nabla}H(\XX)}^2, \quad\mathrm{s.t.}\quad \Lh^\intercal=-\Lh,\] }
% \[\Mh=\argmin_{\bb{M}\in\mathbb{R}^{n\times n}}\lr{\nn{\hat{\XX}_t - \lr{\bb{M}-\bb{M}^\intercal}\hat{\nabla}H(\XX)}^2 + \eta\nn{\bb{M}}^2},\]
which is a straightforward least-squares inference for the antisymmetric $\Lh$.  As shown in Theorem~\ref{thm:main}, this is equivalent to solving the $n^2\times n^2$ linear system 
\rev{\begin{align}\label{eq:NC-H-OpInf}
    \lr{\bb{I}\baroplus\hat{\nabla} H(\XX)\hat{\nabla} H(\XX)^\intercal}\vect\Lh = \vect\lr{\Xh_t\hat{\nabla}H(\XX)^\intercal - \hat{\nabla}H(\XX)\Xh_t^\intercal}.
\end{align}}
% \begin{align}\label{eq:NC-H-OpInf}
%     \lr{\lr{\bb{I} \baroplus \hat{\nabla} H(\XX)\hat{\nabla} H(\XX)^\intercal}\lr{\bb{I}-\bb{K}}+\eta\bb{I}}\vect\Mh = \vect\lr{\Xh_t\hat{\nabla}H(\XX)^\intercal - \hat{\nabla}H(\XX)\Xh_t^\intercal}.
% \end{align}
% \IKTcomment{I am just curious, presumably $\lr{\bb{I} \baroplus \hat{\nabla} H(\XX)\hat{\nabla} H(\XX)^\intercal}\lr{\bb{I}-\bb{K}} \rl$ is really poorly conditioned
% in the cases that regularization is needed.  Did you observe this numerically?}  
Inferring $\Lh$ based on solving \eqref{eq:NC-H-OpInf} will be called the NC-H-OpInf method, and is summarized in Algorithm~\ref{alg:NC-H-OpInf}.  While this inference similarly does not satisfy the hypotheses of Proposition~\ref{prop:newtrunc}, it is interesting to note that ``one shot'' computation of $\Lh$ using Algorithm~\ref{alg:NC-H-OpInf} \rev{occasionally} works quite well in practice when the basis $\uu$ is chosen from the SVD of $\bb{Y}$ (see Section~\ref{sec:numerics}).

% Doing this yields a reduced operator $\Lh$ which can be used along with an appropriate time-stepping algorithm to solve the reduced-order model $\dot{\xh}=\Lh\nabla H\lr{\xx_0+\uu\xh}$.  

\begin{algorithm}[htb]
\caption{Noncanonical Hamiltonian Operator Inference (NC-H-OpInf)}\label{alg:NC-H-OpInf}
\begin{algorithmic}[1]
\Require Snapshots $\XX\in\mathbb{R}^{N\times n_s}$ of model solution; snapshots $\nabla H(\XX)\in\mathbb{R}^{N\times n_s}$ of the gradient $\nabla H$ of the Hamiltonian; integer $n>0$ and real number $\eta>0$.
\Ensure Antisymmetric, reduced operator $\Lh\in\mathbb{R}^{n\times n}$ approximating the Poisson operator governing the H-ROM $\dot{\xh}=\Lh\nabla\hat{H}$.
\State Employ the user's preferred algorithm to build a (mean-centered) POD basis $\uu\in\mathbb{R}^{N\times n}$ from snapshot data.
\State Form projected quantities $\Xh = \ut\XX \in\mathbb{R}^{n\times n_s}$ and $\hat{\nabla}H(\XX) = \ut\nabla H(\XX) \in\mathbb{R}^{n\times n_s}$.
\State Solve the $n^2\times n^2$ linear system \eqref{eq:NC-H-OpInf} for $\rev{\Lh}\in\mathbb{R}^{n\times n}$.
% \Return $\Lh = \Mh - \Mh^\intercal$.
\end{algorithmic}
\end{algorithm}

\begin{remark}
    Note that NC-H-OpInf can be used (along with a symplectic time integrator) to obtain dynamics which preserve any quantity $H$, regardless of whether or not it corresponds to a true Hamiltonian structure.  In this way, it can be considered a gray-box method requiring only snapshots and a desired conserved quantity.
\end{remark}

% \begin{remark}
% Notice that truncation is generally not optimal here, since $\hat{\nabla}H(\XX)\hat{\nabla}H(\XX)^\intercal$ is generally not diagonal. 
% \end{remark}

% However, performing OpInf in this way again preserves the conclusion of Proposition~\ref{prop:reuse}.

% \subsection{Newest idea}
% Suppose we only know snapshots in some coordinates $\xx$ which we believe to be Hamiltonian.  We can attempt to infer the product of operators $\Lh\Ah$ in the following way.  First, notice that $\Xh_t = \Lh\Ah\Xh$ implies 
% \[ \Xh^\intercal\XX_t\pm\Xh_t^\intercal\Xh = \Xh^\intercal\lr{\Lh\Ah\mp\Ah^\intercal\Lh}\Xh, \qquad \lr{\Lh\Ah\mp\Ah^\intercal\Lh}^\intercal = \pm\lr{\Lh\Ah\mp\Ah^\intercal\Lh}, \]
% so that $\Lh\Ah\mp\Ah^\intercal\Lh$ is symmetric/antisymmetric.  Therefore, letting $\hat{\bb{C}}_\pm = \Xh^\intercal\Xh_t\pm\Xh_t^\intercal\Xh$ we can solve 
% \[\min_{\Mh}\nn{\hat{\bb{C}}_\pm - \Xh^\intercal\lr{\Mh\pm\Mh^\intercal}\Xh}^2,\]
% yielding a pair of operators $\Mh_+^\intercal = \Mh$ and $\Mh_-^\intercal = -\Mh_-$ which converge to $\Lh\Ah\mp\Ah^\intercal\Lh$.  From these, we can then recover the desired operator $\Lh\Ah = (1/2)\lr{\Mh_+ + \Mh_-}$. \red{this is pretty much black-box. the drawback is that everything is linear. Tested this: exact same performance as generic OpInf.  Maybe I should have seen this coming......}

\section{Analysis}\label{sec:theory}
Now that the C-H-OpInf and NC-H-OpInf procedures have been described, it is important to validate that the inferred operators approximate their intrusive counterparts in an appropriate sense.  To accomplish this, the following mild assumptions are needed.

\begin{assumption}\label{asmp:POD}
    The span of the POD basis $\uu\in\mathbb{R}^{N\times n}$ tends to $\mathbb{R}^N$ as $n\to N$, i.e., for any $\xx\in\mathbb{R}^N$,
    \[\lim_{n\to N}\, \nn{\bb{P}^\perp\xx} = 0,\]
    where $\bb{P}^\perp:=\bb{I}-\uu\ut$.
\end{assumption}

\begin{assumption}\label{asmp:timederiv}
    The approximate time derivatives $\xx_t$ converge to the true derivatives $\dot{\xx}$ as the time step $\Delta t\to 0$, i.e.,
    \[\lim_{\Delta t\to 0} \max_i\, \nn{\xx_{t}(t_i) -\dot{\xx}(t_i)}=0.\]
\end{assumption}

\begin{assumption}\label{asmp:fullrank}
    The snapshot matrices $\XX, \nabla H(\XX) \in \mathbb{R}^{N\times n_s}$ have maximal rank.
\end{assumption}

This allows for the following result regarding the convergence of NC-H-OpInf.  
% \textcolor{red}{Please check these arguments, at least at a high level.  They are not difficult, but a second pair of eyes would be good in case there are any obvious mistakes.} \IKTcomment{I checked the proofs.  You are using some clever tricks (adding/subtracting matrices) and common inequalities like the triangle inequality, etc.  The logic looks sound to me.  Takes me back to days when I used to develop error estimates using
% similar tricks/inequalities.  I added a couple notes on things that could be clarified regarding theorem assumptions.  Another thing that I thought of is that 
% ROMs don't always converge as $\Delta t \to 0$ -- sometimes there is an ``optimal" intermediate $\Delta t$ for which a ROM has 
% the lowest error; but I think the proofs make sense as is.}

\begin{theorem}\label{thm:NC-H-OpInf}
    Under Assumptions~\ref{asmp:POD}, \ref{asmp:timederiv}, and \ref{asmp:fullrank}, the inferred operator $\Lh$ from the NC-H-OpInf procedure in Algorithm~\ref{alg:NC-H-OpInf} converges to the intrusive operator $\bar{\LL}=\ut\LL\uu$ as $\Delta t\to 0$ and $n\to N$.  
\end{theorem}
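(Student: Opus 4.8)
The plan is to exploit the variational characterization of the constrained least-squares minimizer from Theorem~\ref{thm:main} and to show that the intrusive operator $\bar{\LL}=\ut\LL\uu$ is an asymptotically exact feasible point as $\Delta t\to 0$ and $n\to N$. First I would note that $\bar{\LL}$ is itself skew-symmetric, since $\bar{\LL}^\intercal = \ut\LL^\intercal\uu = -\ut\LL\uu = -\bar{\LL}$, so it is an admissible competitor in the minimization $\argmin_{\Lh^\intercal=-\Lh}\nn{\Xh_t - \Lh\hat{\nabla}H(\XX)}^2$ that defines $\Lh$. The governing object is therefore the residual $\bb{r} := \Xh_t - \bar{\LL}\,\hat{\nabla}H(\XX)$ of the intrusive operator, which I claim tends to zero in the joint limit.

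To control $\bb{r}$, I would use that the snapshots lie on an exact trajectory of the FOM \eqref{eq:hamilFOM}, so that (columnwise) $\dot\XX=\LL\nabla H(\XX)$, and insert the splitting $\bb{I}=\uu\ut+\bb{P}^\perp$ to obtain
\[ \bb{r} = \ut\lr{\XX_t-\dot\XX} + \ut\LL\lr{\bb{I}-\uu\ut}\nabla H(\XX) = \ut\lr{\XX_t-\dot\XX} + \ut\LL\bb{P}^\perp\nabla H(\XX). \]
Because $\ut$ has orthonormal rows, the first term is bounded by $\nn{\XX_t-\dot\XX}$, which vanishes as $\Delta t\to 0$ by Assumption~\ref{asmp:timederiv}, while the second is bounded by the fixed constant $\nn{\LL}$ times $\nn{\bb{P}^\perp\nabla H(\XX)}$, which vanishes as $n\to N$ by Assumption~\ref{asmp:POD}. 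Hence $\nn{\bb{r}}\to 0$, with the two error sources decoupling cleanly across the two limits.

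Next I would convert closeness of residuals into closeness of operators. Writing $\bb{E}=\Lh-\bar{\LL}$ (again skew-symmetric) and using that $\Lh\,\hat{\nabla}H(\XX)$ is the orthogonal projection of $\Xh_t$ onto the range of the linear map $\DD\mapsto\DD\,\hat{\nabla}H(\XX)$ restricted to skew-symmetric $\DD$, optimality yields $\IP{\bb{r}-\bb{E}\hat{\nabla}H(\XX)}{\bb{E}\hat{\nabla}H(\XX)}=0$ and hence $\nn{\bb{E}\hat{\nabla}H(\XX)}\leq\nn{\bb{r}}$. Finally, setting $G:=\hat{\nabla}H(\XX)\hat{\nabla}H(\XX)^\intercal$ gives $\nn{\bb{E}\hat{\nabla}H(\XX)}^2=\mathrm{tr}\lr{\bb{E}G\bb{E}^\intercal}\geq\lambda_{\min}(G)\nn{\bb{E}}^2$, so that $\nn{\Lh-\bar{\LL}}\leq\lambda_{\min}(G)^{-1/2}\nn{\bb{r}}\to 0$, which is the claimed convergence.

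The main obstacle is the last coercivity step: I must guarantee that $\lambda_{\min}(G)$ stays bounded below uniformly as $n$ grows, since otherwise the factor $\lambda_{\min}(G)^{-1/2}$ could blow up. Here Assumption~\ref{asmp:fullrank} is essential, because $\uu$ has orthonormal columns and thus $\lambda_{\min}(G)=\min_{\nn{\bb{v}}=1}(\uu\bb{v})^\intercal\nabla H(\XX)\nabla H(\XX)^\intercal(\uu\bb{v})\geq\lambda_{\min}\lr{\nabla H(\XX)\nabla H(\XX)^\intercal}$, whose right-hand side is a fixed positive constant independent of $n$ precisely when $\nabla H(\XX)$ has maximal rank; this also matches the unique-solvability hypothesis of Theorem~\ref{thm:main}. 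A secondary point to verify is the interplay of the limits with the snapshot count: if $\Delta t\to 0$ is realized by refining a fixed time window then $n_s\to\infty$, and both $\nn{\bb{r}}^2$ and $\lambda_{\min}(G)$ scale like $n_s$, so the ratio $\nn{\bb{r}}/\sqrt{\lambda_{\min}(G)}$ still vanishes; I would either normalize the objective by $n_s$ or argue this scaling explicitly to keep the estimate clean.
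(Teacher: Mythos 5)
Your proof is correct and follows essentially the same route as the paper's: you bound the optimal residual by that of the feasible skew-symmetric competitor $\bar{\LL}=\ut\LL\uu$, split it into the time-differencing error $\ut\lr{\XX_t-\dot{\XX}}$ and the projection error $\ut\LL\bb{P}^\perp\nabla H(\XX)$, and pass to the limit using the three assumptions. The only substantive difference is that you spell out, via the orthogonality relation and the coercivity bound $\lambda_{\min}\lr{\hat{\nabla}H(\XX)\hat{\nabla}H(\XX)^\intercal}\nn{\Lh-\bar{\LL}}^2\leq\nn{\lr{\Lh-\bar{\LL}}\hat{\nabla}H(\XX)}^2$, the ``elementary calculation'' and maximal-rank step that the paper leaves implicit when converting residual convergence into operator convergence.
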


\begin{proof}
First, notice that
\begin{align*}
    \nn{\Xh_t-\Lh\hat{\nabla}H(\XX)} &= \nn{\lr{\Xh_t-\dot{\Xh}} + \lr{\dot{\Xh}-\bar{\LL}\hat{\nabla}H(\XX)} + \lr{\bar{\LL}-\Lh}\hat{\nabla} H(\XX)} \\
    &= \nn{\ut\lr{\XX_t-\dot{\XX}} + \ut\lr{\dot{\XX}-\LL\nabla H(\XX)} + \ut\LL\bb{P}^\perp\nabla H(\XX) + \lr{\bar{\LL}-\Lh}\hat{\nabla} H(\XX)} \\
    &= \nn{\ut\lr{\XX_t-\dot{\XX}} + \ut\LL\bb{P}^\perp\nabla H(\XX) + \lr{\bar{\LL}-\Lh}\hat{\nabla} H(\XX)} \\
    &\leq \nn{\uu}\lr{\nn{\XX_t-\dot{\XX}}+\nn{\LL}\nn{\bb{P}^\perp\nabla H(\XX)}} + \nn{\bar{\LL}-\Lh}\nn{\hat{\nabla} H(\XX)}.
\end{align*}
Therefore, for each $n\leq N$,
\begin{align*}
    \min_{\Lh}\nn{\Xh_t-\Lh\hat{\nabla}H(\XX)} &\leq \min_{\Lh} \left[\nn{\uu}\lr{\nn{\XX_t-\dot{\XX}}+\nn{\LL}\nn{\bb{P}^\perp\nabla H(\XX)}} + \nn{\bar{\LL}-\Lh}\nn{\nabla H(\XX)}\right] \\
    &= \nn{\uu}\lr{\nn{\XX_t-\dot{\XX}}+\nn{\LL}\nn{\bb{P}^\perp\nabla H(\XX)}}.
\end{align*}
By Assumptions \ref{asmp:POD} and \ref{asmp:timederiv}, for any $\varepsilon>0$ there exists an $n' < N$ and $\Delta t' > 0$ such that 
\[\nn{\ut\lr{\XX_t-\dot{\XX}} + \ut\LL\bb{P}^\perp\nabla H(\XX)} \leq \nn{\uu}\lr{\nn{\XX_t-\dot{\XX}}+\nn{\LL}\nn{\bb{P}^\perp\nabla H(\XX)}} < \frac{\varepsilon}{2}.\]
Therefore, for $n\geq n'$ and $\Delta t \leq \Delta t'$ it follows from an elementary calculation that 
\[ \min_{\Lh}\nn{\lr{\bar{\LL}-\Lh}\hat{\nabla} H(\XX)} < \varepsilon,\]
from which it can be concluded that $\Lh\to\bar{\LL}$, since $\hat{\nabla} H(\XX)$ has maximal rank.  
\end{proof}

A similar result holds for C-H-OpInf provided a cotangent lift basis $\uu$ is used.

\begin{theorem}\label{thm:C-H-OpInf}
    Under Assumptions~\ref{asmp:POD}, \ref{asmp:timederiv}, \ref{asmp:fullrank}, and using a cotangent lift POD basis $\uu$, the inferred operator $\Ah$ from the C-H-OpInf procedure in Algorithm~\ref{alg:C-H-OpInf} converges to the intrusive operator $\bar{\Aa}=\ut\Aa\uu$ as $\Delta t\to 0$ and $n\to N$.
\end{theorem}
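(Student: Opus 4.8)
The plan is to follow the proof of Theorem~\ref{thm:NC-H-OpInf} almost verbatim, the cotangent lift hypothesis supplying the two structural facts that make the argument go through. First I would record that, for $\uu=\mathrm{Diag}(\bar{\uu},\bar{\uu})$ with $\bar{\uu}^\intercal\bar{\uu}=\bb{I}$, the reduced Poisson operator is exactly $\Jh=\ut\JJ\uu=\JJ_n$, which is orthogonal. Two things follow. (i) The identity $\Jh^\intercal\Jh=\bb{I}$ holds exactly, so the system \eqref{eq:C-H-OpInf} actually solved by Algorithm~\ref{alg:C-H-OpInf} coincides with the unapproximated system \eqref{eq:C-H-OpInf-real}; hence the operator $\Ah$ returned by the algorithm is the genuine minimizer of the constrained least-squares problem \eqref{eq:min_prob}, which is what lets me compare it against feasible competitors. (ii) A short block computation gives the intertwining identity $\ut\JJ=\JJ_n\ut$, which will play the role that skew-symmetry of $\LL$ played in the noncanonical case.

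Next I would build the residual decomposition. Because the snapshots solve the canonical FOM, their exact time derivative satisfies $\dot{\XX}=\JJ(\Aa\XX+\nabla f(\XX))$; projecting and using the intertwining identity gives $\dot{\Xh}=\ut\dot{\XX}=\JJ_n\ut\Aa\XX+\JJ_n\hat{\nabla}f(\XX)$. Writing the residual at a candidate $\Ah$ as $\Xh_t-\Jh\Ah\Xh-\Jh\hat{\nabla}f(\XX)$ and adding and subtracting $\dot{\Xh}$, the known nonlinear contributions $\JJ_n\hat{\nabla}f(\XX)$ cancel, leaving the three-term splitting $\ut(\XX_t-\dot{\XX})+\JJ_n\ut\Aa\bb{P}^\perp\XX+\JJ_n(\bar{\Aa}-\Ah)\Xh$ after inserting $\bb{I}=\uu\ut+\bb{P}^\perp$ before $\Aa\XX$. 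This is precisely the NC-H-OpInf decomposition, with the unknown \emph{linear} part $\Aa\XX$ supplying the projection defect and the nonlinear part contributing nothing.

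With the splitting in hand I would argue as in Theorem~\ref{thm:NC-H-OpInf}. The comparison point must be admissible: since $\Aa^\intercal=\Aa$, its projection $\bar{\Aa}=\ut\Aa\uu$ is symmetric and hence feasible for the constraint $\Ah^\intercal=\Ah$. Evaluating the objective at $\Ah=\bar{\Aa}$ annihilates the operator-error term and bounds the minimum by $\nn{\XX_t-\dot{\XX}}+\nn{\Aa}\nn{\bb{P}^\perp\XX}$ (using $\nn{\uu}=\nn{\JJ_n}=1$). Assumptions~\ref{asmp:timederiv} and \ref{asmp:POD} drive both terms below $\varepsilon/2$ once $\Delta t$ is small and $n$ is large. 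Feeding this back through the triangle inequality at the minimizer, and using invertibility of $\JJ_n$ to remove it, yields $\nn{(\bar{\Aa}-\Ah)\Xh}<\varepsilon$, whence $\Ah\to\bar{\Aa}$ because $\Xh=\ut\XX$ inherits maximal rank from $\XX$ (Assumption~\ref{asmp:fullrank}).

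The work is mostly bookkeeping, and the single place demanding care is the role of the cotangent lift. Both the exact identity $\Jh^\intercal\Jh=\bb{I}$ (needed so that Algorithm~\ref{alg:C-H-OpInf} returns the true minimizer rather than a solution of an approximated system) and the intertwining $\ut\JJ=\JJ_n\ut$ (needed for the clean decomposition) fail for a generic POD basis, which is exactly why the hypothesis is imposed; I would verify both from the explicit block forms of $\uu$ and $\JJ$. The only genuinely delicate step is the final one, where smallness of $\nn{(\bar{\Aa}-\Ah)\Xh}$ must be converted into smallness of $\nn{\bar{\Aa}-\Ah}$: this requires $\Xh$ to have full row rank, the same maximal-rank caveat invoked in Theorem~\ref{thm:NC-H-OpInf}, which is guaranteed in the $n\to N$ limit under Assumption~\ref{asmp:fullrank}.
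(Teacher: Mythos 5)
Your proposal is correct and follows essentially the same route as the paper's proof: decompose the inference residual into a time-discretization term, a projection-defect term, and the operator-error term $\Jh\lr{\bar{\Aa}-\Ah}\Xh$, bound the minimum by evaluating at the feasible comparison point $\bar{\Aa}$, and conclude from Assumptions~\ref{asmp:POD}--\ref{asmp:fullrank} and maximal rank of $\Xh$. Your version is in fact slightly tidier: the intertwining identity $\ut\JJ=\JJ_n\ut$ makes the paper's extra defect terms $\ut\JJ\bb{P}^\perp\Aa\XX$ and $\ut\JJ\bb{P}^\perp\nabla f(\XX)$ vanish identically, and you make explicit two points the paper leaves implicit, namely the feasibility of $\bar{\Aa}$ under the symmetry constraint and the coincidence of \eqref{eq:C-H-OpInf} with \eqref{eq:C-H-OpInf-real} for a cotangent lift basis.
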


\begin{remark}
    Note that the assumption of a cotangent lift basis in Theorem~\ref{thm:C-H-OpInf} can be dropped provided \eqref{eq:C-H-OpInf-real} is solved instead of \eqref{eq:C-H-OpInf} in the C-H-OpInf Algorithm~\ref{alg:C-H-OpInf}.
\end{remark}

\begin{proof}
First, notice that 
\begin{align*}
    \dot{\XX}-\bb{J}\uu\ut&\lr{\Aa\uu\ut\XX+\nabla f(\XX)} \\
    &= \lr{\dot{\XX}-\bb{J}\lr{\Aa\XX+\nabla f(\XX)}} + \bb{J}\lr{\bb{P}^\perp\bb{AX} + \bb{P}^\perp\nabla f(\XX)+ \uu\ut\Aa\bb{P}^\perp\XX} \\
    &= \bb{J}\lr{\bb{P}^\perp\bb{AX} + \bb{P}^\perp\nabla f(\XX)+ \uu\ut\Aa\bb{P}^\perp\XX},
\end{align*}
since $\JJ\nabla H(\XX) = \bb{J}\lr{\Aa\XX+\nabla f(\XX)}$.  Therefore, it follows as before that for every $n\leq N$,
\begin{align*}
    &\nn{\Xh_t-\Jh\lr{\Ah\Xh+\hat{\nabla}f(\XX)}} = \nn{\lr{\Xh_t -\dot{\Xh}} + \lr{\dot{\Xh}-\Jh\lr{\bar{\Aa}\Xh+\hat{\nabla}f(\XX)}} + \Jh\lr{\bar{\Aa}-\Ah}\Xh} \\
    &\qquad= \nn{\ut\lr{\XX_t-\dot{\XX}} + \ut\bb{J}\lr{\bb{P}^\perp\bb{AX} + \bb{P}^\perp\nabla f(\XX)+ \uu\ut\Aa\bb{P}^\perp\XX} + \Jh\lr{\bar{\Aa}-\Ah}\Xh},
\end{align*}
Now, for any $\varepsilon>0$ we can choose $n'<N$ and $\Delta t'>0$ so that 
\begin{align*}
    &\nn{\ut\lr{\XX_t-\dot{\XX}} + \ut\JJ\lr{\bb{P}^\perp\bb{AX} + \bb{P}^\perp\nabla f(\XX) + \uu\ut\Aa\bb{P}^\perp\XX}} \\
    &\leq \nn{\uu}\lr{\nn{\XX_t-\dot{\XX}} + \nn{\bb{P}^\perp} \nn{\JJ}\lr{\nn{\Aa}\nn{\XX}+\nn{\nabla f(\XX)}+\nn{\uu}^2\nn{\Aa}\nn{\XX}}} < \frac{\varepsilon}{2},
\end{align*}
and therefore we have
\[\min_{\Ah}\nn{\Jh\lr{\bar{\Aa}-\Ah}\Xh} < \varepsilon,\]
provided $n\geq n'$ and $\Delta t<\Delta t'$.  Hence, $\Ah\to\bar{\Aa}$ as desired , since $\Jh,\Xh$ have maximal rank.
\end{proof}

\begin{remark}
While useful, the results of this section \rev{only hold} in the ``infinite data limit'', and so cannot guarantee good performance of the OpInf methods (and projection-based ROMs in general) in all cases of practical interest, particularly in the predictive regime.  It is an ongoing effort to develop rigorous estimates which are more valuable in the presence of partial or limited data. 
\end{remark}

\section{Numerical Examples}\label{sec:numerics}

% \IKTcomment{You should probably say somewhere at the beginning of this section that the online computational costs of all the models evaluated are approximately the same for a particular basis size $n$.}

Here, numerical results are reported on several benchmark problems from hydrodynamics and linear elasticity, including a linear wave equation, \rev{a manufactured test case which has a non-separable canonical Hamiltonian form,} the Korteweg-de Vries equation, the Benjamin-Bona-Mahoney equation, and a 3D linear elastic clamped plate problem undergoing high-frequency oscillations.  The primary error metrics used for comparison will be relative $\ell_2$ error in the state approximation,
\[ R\ell_2\lr{\XX,\tilde{\XX}} = \frac{\nn{\XX-\tilde{\XX}}_2}{\nn{\XX}_2}, \]
as well as \rev{signed} error in the Hamiltonian (or other conserved quantity) approximation \rev{$H(\xx(t))-H_0$} where $H_0 = H(\xx(0))$.  When speaking about the properties of POD bases, it will also be useful to evaluate the snapshot energy, computed for a given rank $r$ snapshot matrix $\XX$ with singular values $\{\sigma_i\}_{i=1}^r$ and POD basis size \rev{$n\leq r$} as
\[E_s\lr{\XX,n} = \frac{\sum_{k=1}^n \sigma_k}{\sum_{k=1}^r \sigma_k}.\]
Note that, when appropriate, both uncentered ($\xt=\uu\xh$) and mean-centered ($\xt = \xx_0+\uu\xh$) Galerkin projections will be considered.  This will be denoted by the letters ``MC'' in the figures below.  Of course, mean-centering requires a POD of the centered snapshot matrix discussed in Section~\ref{subsec:hampod}, and is infeasible for a general OpInf method.  On the other hand, NC-H-OpInf is amenable to this technique, since the inferred operator $\Lh$ does not interface directly with the approximate solution $\xt$.

When evaluating the performance of the H-OpInf methods in Section~\ref{sec:hopinf}, comparisons are drawn with the standard intrusive Galerkin ROM (G-ROM) and Hamiltonian ROM (H-ROM) discussed previously, as well as the standard Galerkin OpInf (G-OpInf) when appropriate.  Reproductive as well as predictive problems are considered, encompassing both prediction in time as well as prediction across parameter space.  Note that all ROMs considered are equally efficient online; since the chosen examples have polynomial nonlinearities, their resulting ROMs do not depend on the full-order state space $N$, instead scaling only with the reduced basis size $n$.

\rev{\begin{remark}
    On canonical Hamiltonian examples, the NC-H-OpInf algorithm will infer only $\hat{\JJ} \approx \ut\JJ\uu$, which is already known.  Since it is instructive to see that the NC-H-OpInf ROM behaves appropriately on these examples, comparisons including it are presented for these cases, although it should be noted that this is not the intended purpose of NC-H-OpInf.
\end{remark}}

\subsection{Linear Wave Equation} \label{sec:lin_wave}
First, consider the one-dimensional linear wave equation with constant speed 
$c$,
\begin{equation}\label{eq:lin_wave_eqn}
\begin{split}
\vp_{tt} &= c^2 \vp_{ss}, \qquad 0\leq s\leq l, \\
\vp(0) &= h(y(s)), \qquad \vp_t(0) = 0,
\end{split}
\end{equation}
where the boundary conditions are periodic and the (parameterized) initial condition is a cubic spline defined by
\[ h(y) = \begin{cases} 1 - \frac{3}{2}y^2 + \frac{3}{4}y^3 & 0 \leq y \leq 1, \\ \frac{1}{4}\lr{2-y}^3 & 1<y\leq 2, \\ 0 & y > 2,
\end{cases} \qquad y(s,\alpha) = \alpha\nn{s-\frac{1}{2}}. \]
Letting $\xx = \begin{pmatrix}q & p\end{pmatrix}^\intercal \in\mathbb{R}^2$ where $q=\vp$ and $p=\vp_t$, this problem is readily recast in the canonical Hamiltonian form
\[ \dot{\xx} = \JJ\nabla H(\xx) = \begin{pmatrix}0 & 1 \\ -1 & 0\end{pmatrix}\begin{pmatrix}-c^2\partial_{ss} & 0 \\ 0 & 1 \end{pmatrix}\begin{pmatrix}q \\ p\end{pmatrix},\]
where the Hamiltonian functional is given by
\[ H\lr{\xx} = \frac{1}{2}\int_0^l \lr{p^2 + c^2q_s^2}\, ds, \]
and it follows quickly from differentiation that $H_q = -c^2q_{ss}, H_p = p$. As discussed in Section~\ref{subsec:linrom}, semi-discretizing in $\xx$ and applying AVF integration to this system yields the implicit midpoint rule
\[ \frac{\xx^{k+1}-\xx^{k}}{\Delta t} = \JJ\bb{A}\lr{\frac{\xx^{k+1}+\xx^{k}}{2}} = \begin{pmatrix}0 & \bb{I} \\ -\bb{I} & 0\end{pmatrix}\begin{pmatrix}-c^2\bb{D}_2 \\ 0 & \bb{I}\end{pmatrix}\lr{\frac{\xx^{k+1}+\xx^k}{2}},\]
where $\xx = \begin{pmatrix} \qq & \pp \end{pmatrix}^\intercal$ has been overloaded, $\bb{D}_2$ denotes the circulant matrix which results from using a three-point stencil finite difference method to discretize the 1-D Laplace operator, and the discrete Hamiltonian (also overloaded as $H$) is given by 
\[ H(\xx) = \frac{1}{2}\sum_{i=1}^{N/2}\lr{ p_i^2 + \frac{\lr{q_{i+1}-q_i}^2 + \lr{q_i-q_{i-1}}^2}{4\Delta x^2}}.\]
Note that the AVF method will preserve this discrete Hamiltonian exactly by construction.  Some snapshots of this solution for different values of $\alpha$ are displayed in Figure~\ref{fig:ToyWaveFOM}.

\begin{figure}[htb]
    \centering
    \includegraphics[width=\textwidth]{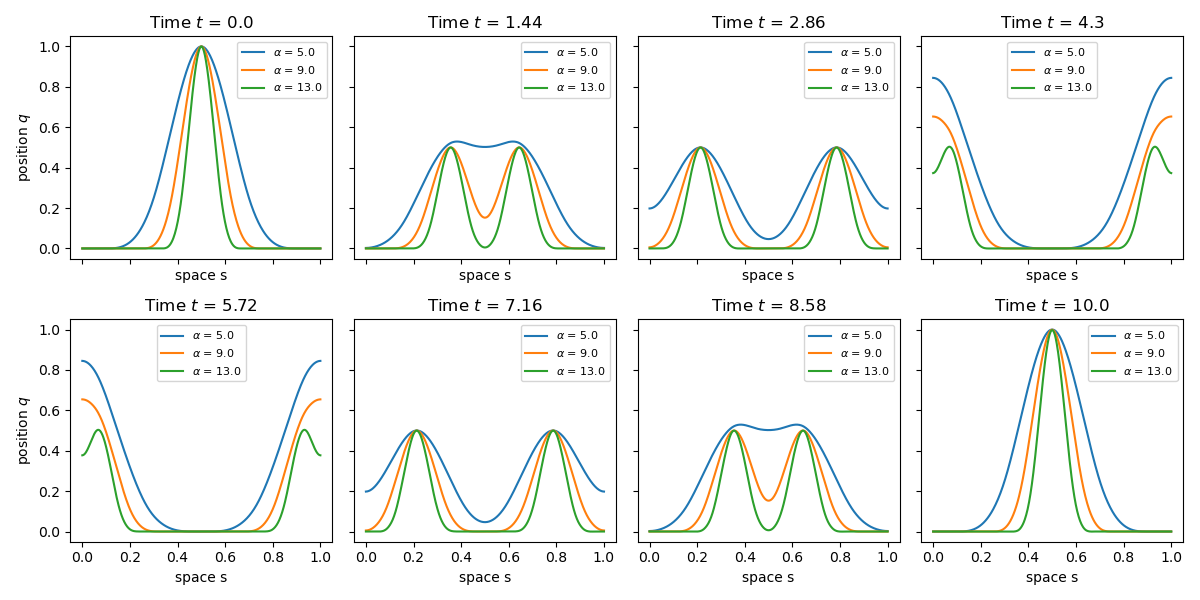}
    \caption{Solution snapshots from the linear wave example for different values of the parameter $\alpha$.}
    \label{fig:ToyWaveFOM}
\end{figure}

% \IKTcomment{Why not call it $H_N$ and avoid the overloading?}

% \IKTcomment{Aren't you missing the FOM in Figure 3?  Seems you'd need that for completeness, to have something to compare the ROMs to.  Also, this figure really needs axis labels as well as a colorbar!}

To evaluate the performance of the ROMs discussed thusfar, two experiments will be conducted: one testing prediction in time, and one testing prediction in parameter space.  For each, the wave speed is fixed to $c=0.1$, the length to $l=1$, and the spatial domain is divided into $M=500$ equally sized intervals (yielding a state vector $\xx$ of dimension $N=2M=1000$).  
% \rev{Note that G-OpInf is applied with a Tikhonov regularization of $10^{-6}$ as discussed in Remark~\ref{rem:tikhonov}.}

% Additionally, the regularization parameters $\eta$ are chosen to be $10^{-6}, 10^{-12}, 10^{-12}$ for G-OpInf, NC-H-OpInf, and C-H-OpInf, respectively.

% As mentioned in Section~\ref{sec:pre....}, this is a symplectic integration scheme which will exactly preserve the discrete energy of the system.

\subsubsection{Reproductive versus Predictive Dynamics}

The first goal is to compare the C-H-Opinf and NC-H-Opinf ROM methods discussed in Section~\ref{sec:hopinf} to their intrusive counterparts when predicting trajectories outside the temporal range of their training data.  For this, a total of 501 snapshots of the FOM solution with initial condition parameter $\alpha=5$ are uniformly collected on the time interval $[0,T]$ where $T=10$.  These data are used to train three POD bases: one constructed in the ``ordinary way'' by forming the SVD of a data matrix of size $N\times n_t$ containing snapshots of $\xx$, another constructed using the cotangent lift algorithm described in Section~\ref{subsec:hampod}, and the final constructed block-wise using the SVD of snapshot data for position $\qq$ and momentum $\pp$ separately (also described in Section~\ref{subsec:hampod}).  The snapshot energies and projection errors associated to these bases are shown in Figure~\ref{fig:ToyWavePOD}. It is evident that all bases are capable of capturing roughly $99\%$ of the snapshot energy with only $n=15$ modes, despite exhibiting a slowly decaying projection error characteristic of hyperbolic problems.

\begin{figure}[htb]
    \centering
    \includegraphics[width=\textwidth]{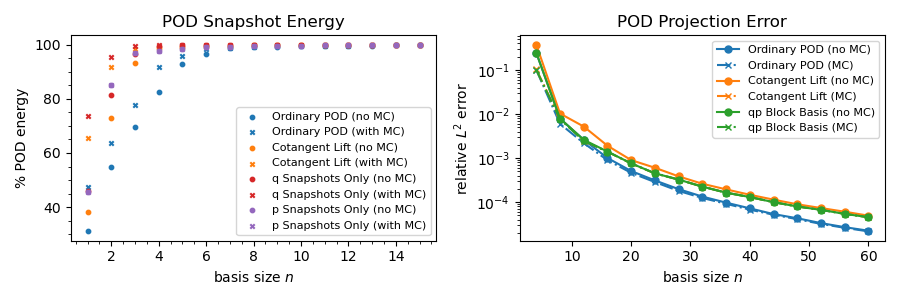}
    \caption{POD snapshot energies (left) and  projection errors (right) corresponding to the bases used in the nonparametric $(\alpha=5)$ linear wave example.  ``MC'' indicates mean-centering of the snapshots was performed.}
    \label{fig:ToyWavePOD}
\end{figure}

Figure~\ref{fig:ToyWaveRepr} plots the relative ROM errors as a function of basis size in the case where the ROMs are integrated only in the range of the training data, i.e. $t\in[0,10]$.  Notice that both the intrusive G-ROM and the G-OpInf ROM are less accurate than their Hamiltonian counterparts, and that the G-OpInf ROM is somewhat unstable with the addition of basis modes.  It is further interesting to observe the differences in performance between the ROM algorithms as the underlying basis is changed.  Particularly, both the cotangent lift and $(q,p)$-block basis lead to lower relative errors than ordinary POD, although ordinary POD has the significant (empirical) advantage of stability under OpInf truncation.  Indeed, in the case of the ordinary POD basis, all operators used in the OpInf ROMs were computed in ``one shot'' via truncation from the operators learned at the largest basis size.  While this is not guaranteed to be optimal according to Proposition~\ref{prop:newtrunc}, it is interesting to note that this resulted in almost no degradation of performance.  This contrasts highly with the case of the cotangent lift and $(q,p)$ block bases, for which OpInf truncation led to unusable results (not pictured). 

\begin{figure}[htb]
    \centering
    \includegraphics[width=\textwidth]{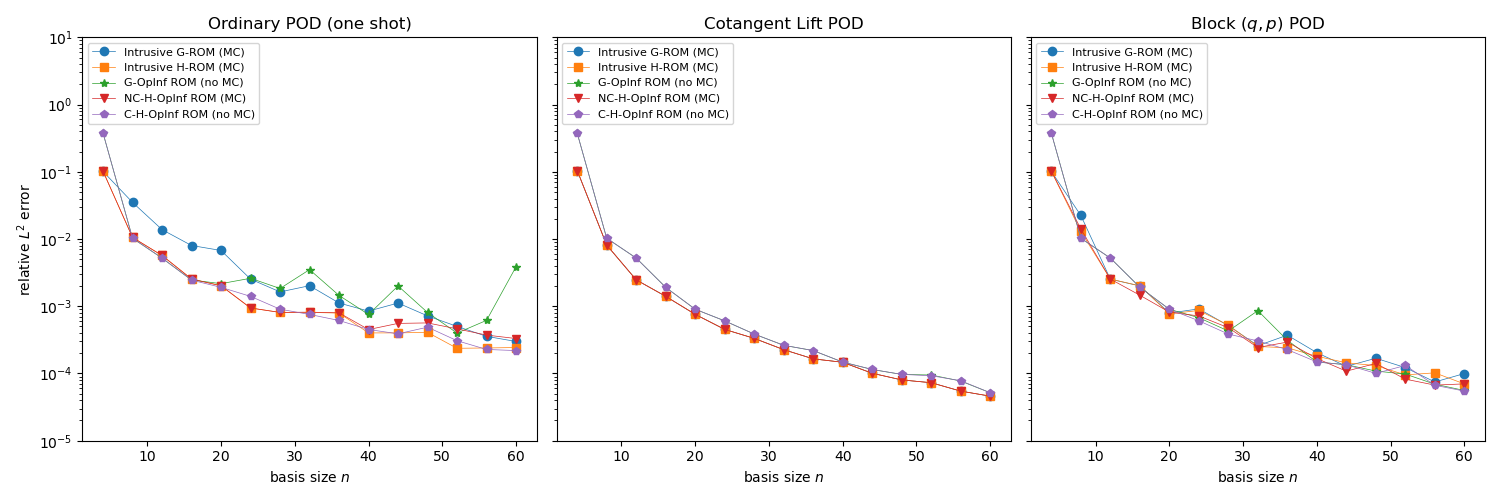}
    \caption{Relative state errors as a function of basis modes for the ROMs in the linear wave example (reproductive case $T=10$).  ``MC'' indicates the use of a mean-centered reconstruction.}
    \label{fig:ToyWaveRepr}
\end{figure}

\begin{figure}[htb]
    \centering
    \includegraphics[width=\textwidth]{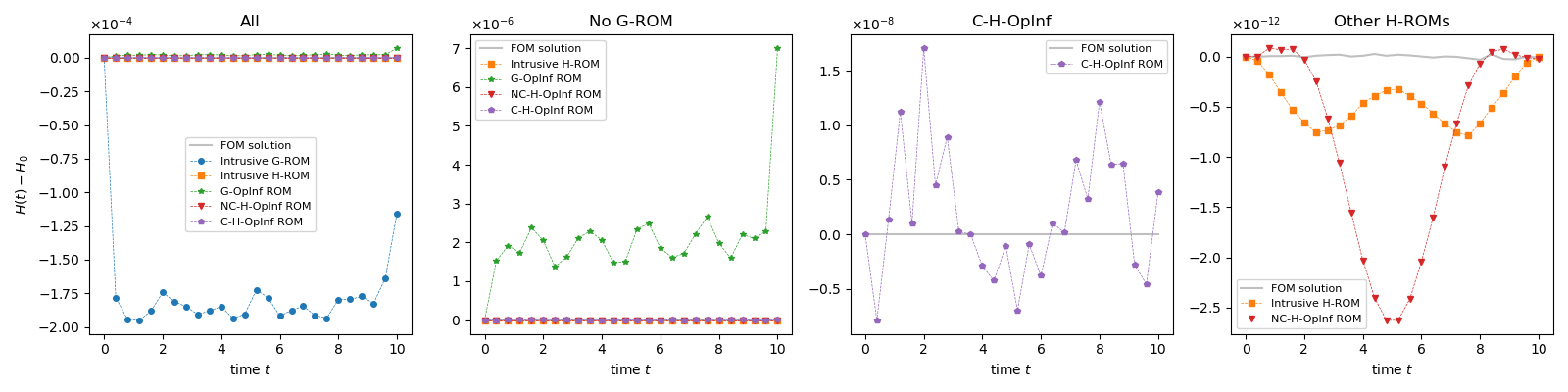}
    \caption{ROM energy errors for the linear wave example in the reproductive case ($T=10$) when using a block $(\qq,\pp)$ POD basis with mean-centering (where applicable) and with $n=16$ modes.}
    \label{fig:ReprWaveEnergy}
\end{figure}

To show the effect of each ROM on energy preservation, Figure~\ref{fig:ReprWaveEnergy} uses the block $(q,p)$ basis case with $n=16$ modes to show the change in the Hamiltonian $H$ over time.  From this, it is seen that the intrusive H-ROM and NC-H-OpInf ROM conserve energy exactly, while the C-H-OpInf ROM conserves energy to order $10^{-8}$.  Of course, this is a consequence of the fact that the matrix $\Ah$ learned by C-H-OpInf represents only an approximation to the gradient of the true reduced Hamiltonian $\hat{H}$.  On the other hand, note that C-H-OpInf still conserves $H$ much better than G-OpInf or the intrusive G-ROM, and is guaranteed to exactly preserve the approximate reduced energy $\tilde{H} = \frac{1}{2}\xh^\intercal\Ah\xh$ (not pictured), which follows since the matrix $\Jh = \ut\JJ\uu$ is skew-symmetric.  It is further remarkable that the conservation properties of the H-ROMs displayed in these plots do not depend on the  basis construction mechanism or the number of basis modes, $n$. 
% \IKTcomment{How come the G-H-OpInf approach has no MC and the others have MC in  Figure~\ref{fig:ToyWaveRepr}?  Same goes for some of the later figures.  Might be worth commenting preemptively, since this is something reviewers may ask about too.}

\begin{figure}[htb]
    \centering
    \includegraphics[width=\textwidth]{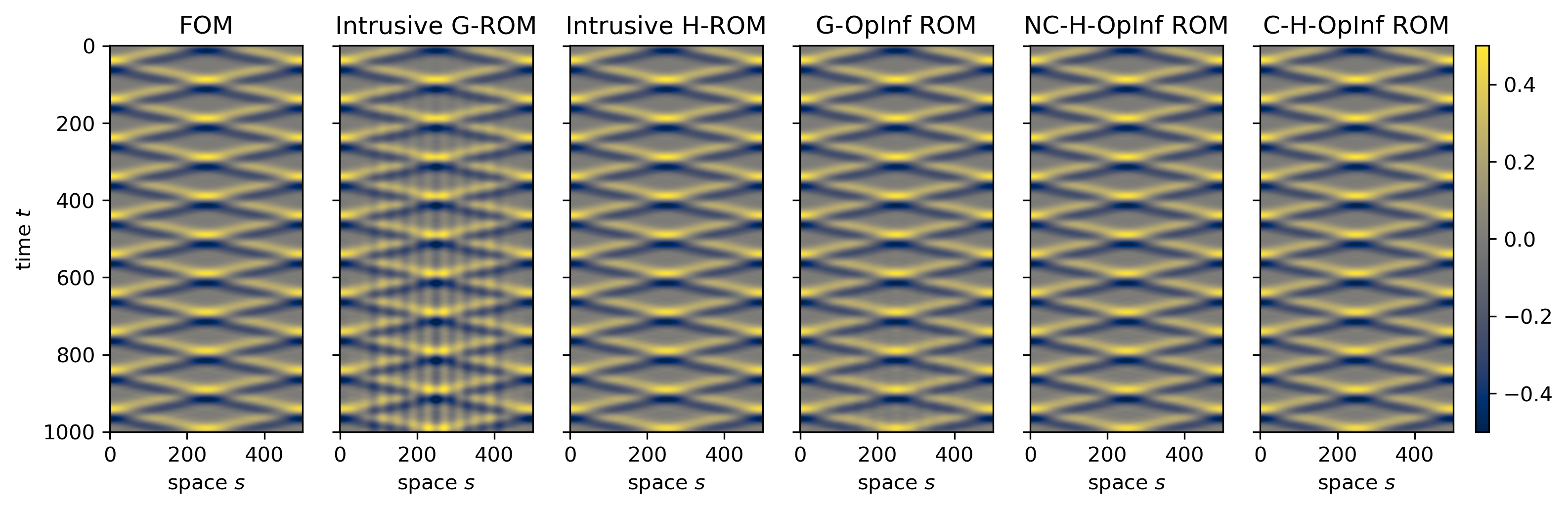}
    \caption{Plots of the FOM and ROM solutions to the linear wave equation in the predictive case ($T=100$) when using a standard POD basis with $n=16$ modes.  Note that mean-centered reconstructions were used for all but the G-OpInf and C-H-OpInf ROMs.}
    \label{fig:ToyWaveImshow}
\end{figure}

% \IKTcomment{Isn't the solution just going to repeat itself in time?  That would make prediction in time somewhat trivial...  and leaves me wondering why all the ROMs don't perform well.}

Moving beyond the reproductive case, it is useful to see what happens when the ROMs are tested on an interval of integration which is much larger.  Figure~\ref{fig:ToyWavePred} plots the relative ROM errors as a function of basis modes when the ROMs are tested over an interval of $[0,T]$ with $T=100$, which is ten times the interval of training.  Here the instabilities in the G-OpInf ROM are made readily apparent, as certain numbers of modes lead extreme blow-ups regardless of the underlying basis construction.  It is interesting to note that the intrusive G-ROM also exhibits similar blow-up in the cases (not pictured here) when the POD basis is constructed with ordinary POD and no mean-centering is applied. Conversely, both the intrusive and OpInf H-ROMs exhibit a steady and predictable decrease in error with the addition of basis modes.  Note that a comparative visualization of the FOM and ROM solutions is shown in Figure~\ref{fig:ToyWaveImshow}, which  plots each solution when an ordinary POD basis is used with $n=16$ modes.

\begin{figure}[htb]
    \centering
    \includegraphics[width=\textwidth]{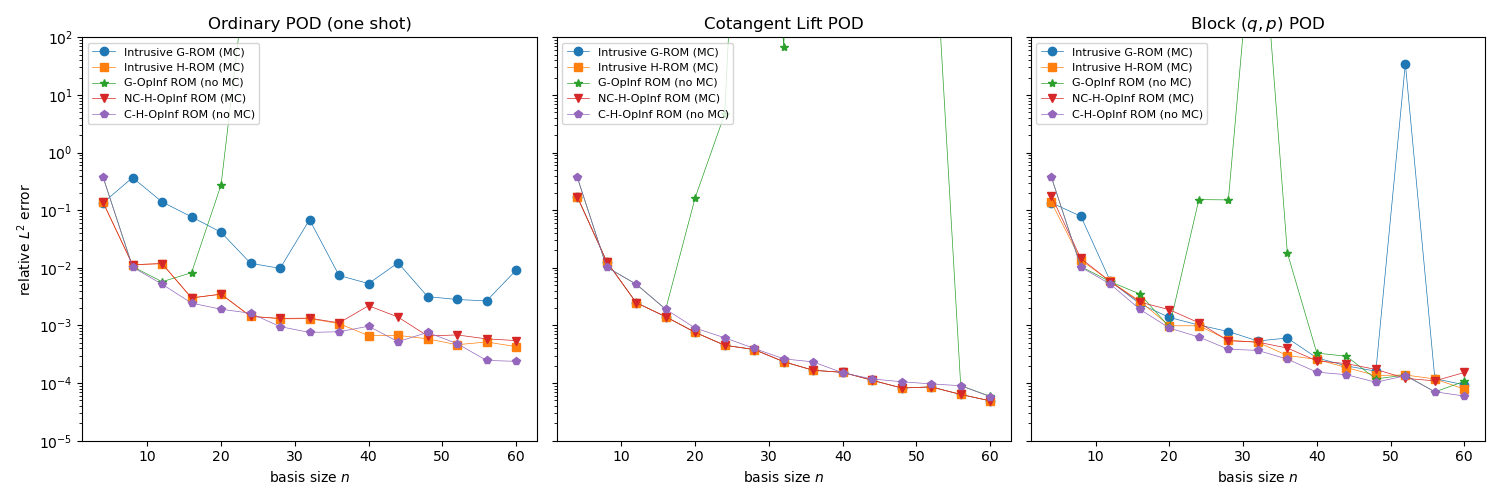}
    \caption{Relative state errors as a function of basis modes for the ROMs in the linear wave equation example (predictive case $T=100$). ``MC'' indicates the use of a mean-centered reconstruction.}
    \label{fig:ToyWavePred}
\end{figure}

Figure~\ref{fig:PredWaveEnergy} displays the variation in the value of the Hamiltonian over this larger integration range when the ROMs are computed using a $(q,p)$ block basis of $n=16$ modes.  As before, the intrusive G-ROM and OpInf G-ROM are not sufficiently conservative, which has consequences for their accuracy and stability.  Conversely, the NC-H-OpInf ROM conserves $H$ on the same order as the intrusive H-ROM, and the C-H-OpInf ROM conserves $H$ to order $10^{-8}$, exhibiting similar performance to integration over the training interval.

\begin{figure}[htb]
    \centering
    \includegraphics[width=\textwidth]{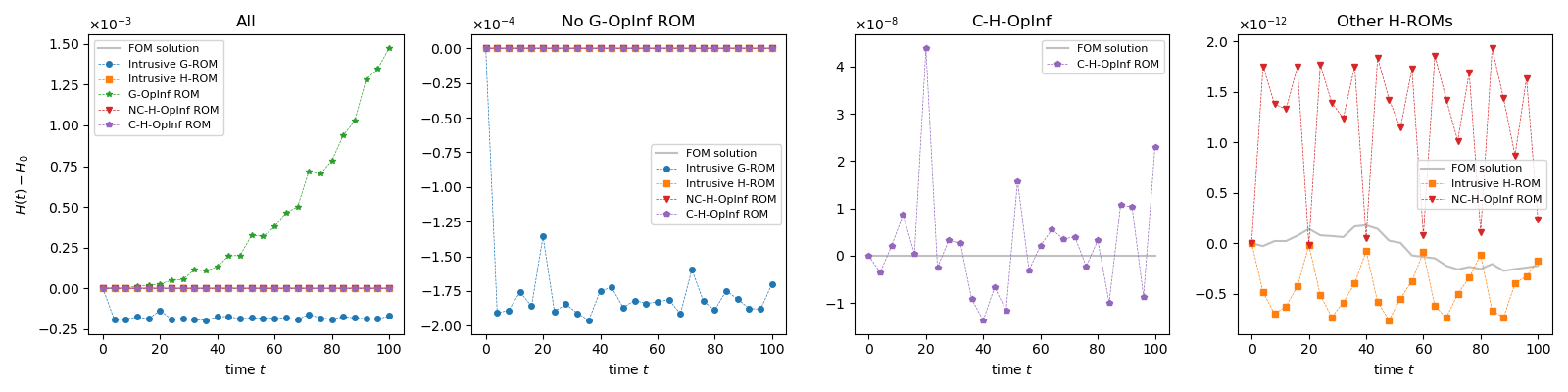}
    \caption{ROM energy errors for the linear wave equation example in the predictive case ($T=100$) when using a block $(\qq,\pp)$ POD basis with mean-centering (where applicable) and with $n=16$ modes.}
    \label{fig:PredWaveEnergy}
\end{figure}

\subsubsection{Parametric Case}

In addition to prediction in time, it is also useful to consider applying ROMs for the prediction of solutions across the parameter space spanned by $\alpha\in\mathbb{R}$, which controls the initial state of the wave (c.f. Figure~\ref{fig:ParaWaveSolns}).  To that end, the next experiment examines how well the present ROM methods are able to predict solutions with variable initial conditions.  To accomplish this, eleven uniformly distributed parameters $\alpha\in[5,15]$ are chosen for training, and $501$ snapshots of the FOM solution in the range $[0,10]$ are collected using each parameter instance.  These data are then concatenated to form the snapshot matrix which is used to train the POD decompositions.  The snapshot energies and projection errors associated to this procedure are shown in Figure~\ref{fig:ToyWavePODparam}, where it is evident that the inclusion of multiple solution trajectories slows down both the increase in the snapshot energy and the decay of the projection error.

% \IKTcomment{It might be interesting to comment to what extent the solution changes with the variation of the ICs.  Does it change trivially or significantly?}

\begin{figure}[htb]
    \centering
    \includegraphics[width=\textwidth]{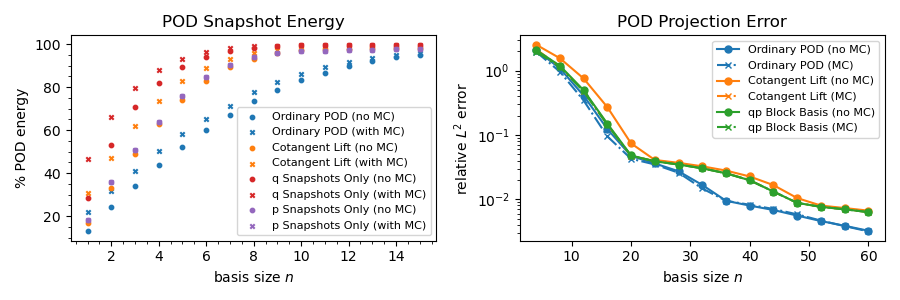}
    \caption{POD snapshot energies (left) and  projection errors (right) corresponding to the bases used in the parametric linear wave example. ``MC'' indicates that mean-centering of the snapshots was performed.}
    \label{fig:ToyWavePODparam}
\end{figure}

For testing, six uniformly distributed parameters $\alpha \in [5.5,14.5]$ are chosen (note that these are disjoint from the training parameters), and snapshot data of each solution in the temporal range $t \in [0,100]$ is collected for comparison with the ROM integration.  The ROMs are then tested over this interval beginning from each unseen initial condition, and the average relative error over all test snapshots is reported.

% \textcolor{red}{Comment from Irina: it's interesting that the NC-H-OpInf ROM doesn't work well for larger basis sizes with the qp basis.  Is this worth commenting on?  Curiously the approach preserves energy conservation despite being very inaccurate.}

\begin{figure}[htb]
    \centering
    \includegraphics[width=\textwidth]{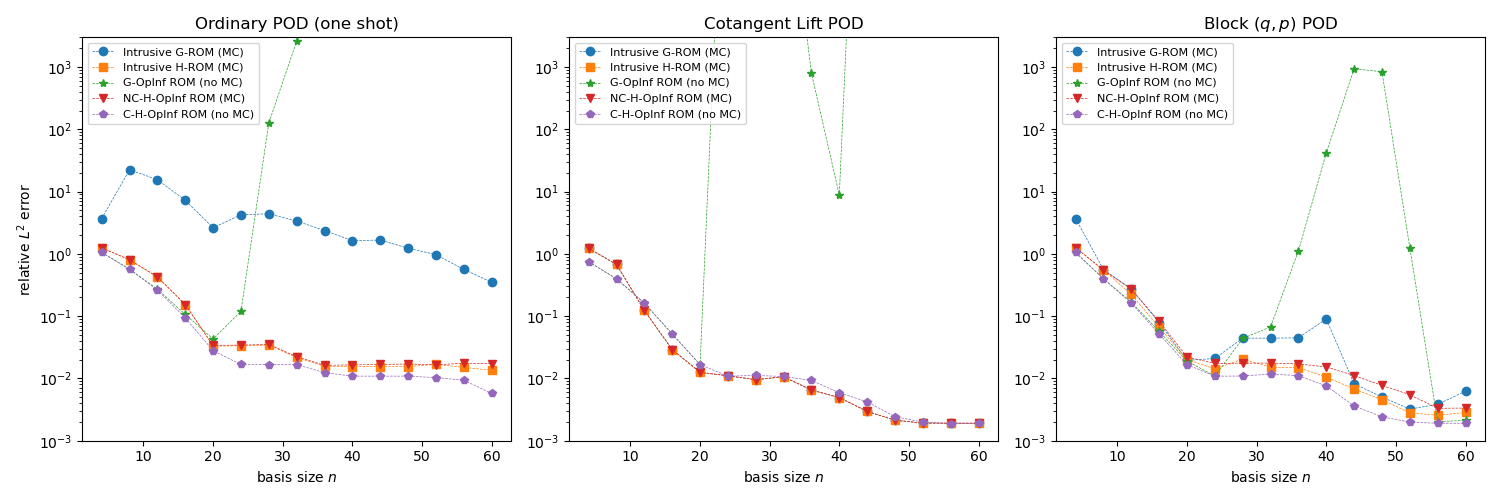}
    \caption{Relative state errors as a function of basis modes for the ROMs in the linear wave equation example (parametric predictive case $T=100$). ``MC'' indicates the use of a mean-centered reconstruction.}
    \label{fig:ToyWavePara}
\end{figure}

% Note that mean-centering was applied in this case only to the ROMs built with the ordinary POD basis, where it was shown to stabilize the intrusive G-ROM.

Figure~\ref{fig:ToyWavePara} illustrates the results of this experiment.  As in the purely predictive case, we see that the G-OpInf ROM is highly sensitive to basis size, while the intrusive H-ROM and H-OpInf ROMs exhibit a predictable increase in accuracy with the addition of basis modes.  Moreover, the intrusive G-ROM is significantly less accurate in the case of an ordinary POD basis, and indeed blows up similarly to the G-OpInf ROM in the case (not pictured) that mean-centering is not applied.  A consequence of this is illustrated in Figure~\ref{fig:ParaWaveSolns}, which shows the FOM and ROM solutions in the case that $\alpha=9.1$ and the ROMs are computed using an ordinary POD basis with mean-centering and with $n=28$ modes.  Notice that the G-OpInf ROM becomes increasingly unstable while the others remain bounded and close to the FOM solution throughout the range of integration.

\begin{figure}[htb]
    \centering
    \includegraphics[width=\textwidth]{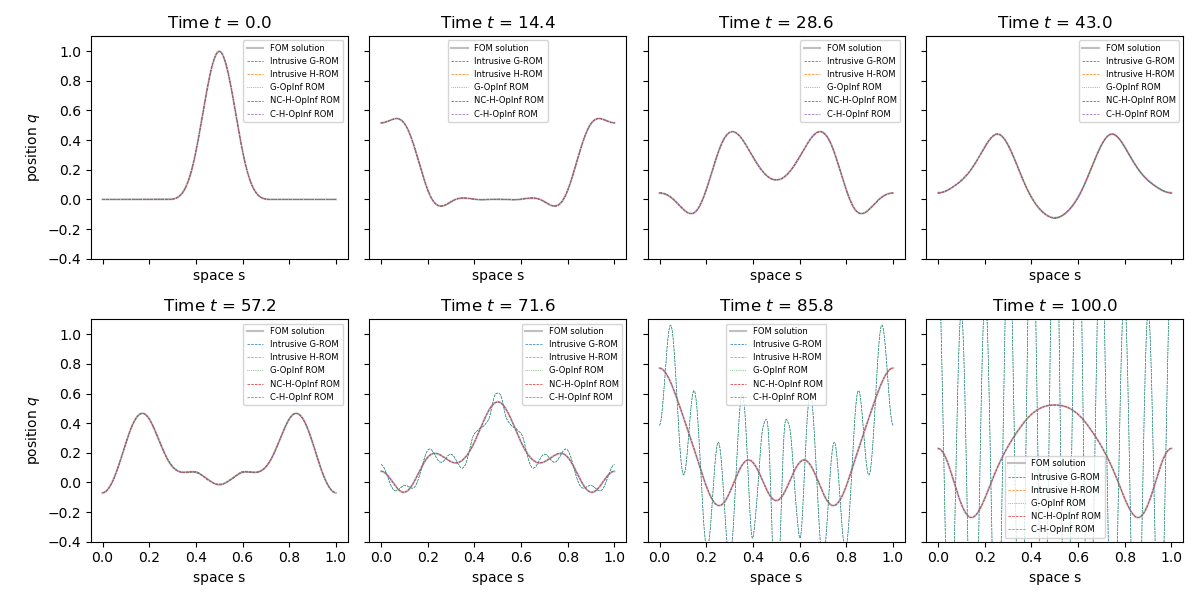}
    \caption{FOM and ROM solutions for the linear wave equation example in the parametric predictive case ($\alpha=9.1, T=100$) when using an ordinary POD basis without mean-centering and with $n=28$ modes.}
    \label{fig:ParaWaveSolns}
\end{figure}

Finally, it is illustrative to observe the energy plots in Figure~\ref{fig:ParaWaveEnergy}, computed using a block $(q,p)$ basis with mean-centering and with $n=16$ modes.  Here it is obvious that the improved conservation properties of the intrusive and OpInf H-ROMs persist in this setting as well, leading to improved accuracy and stability over time when compared to the G-ROMs which do not have this property.

\begin{figure}[htb]
    \centering
    \includegraphics[width=\textwidth]{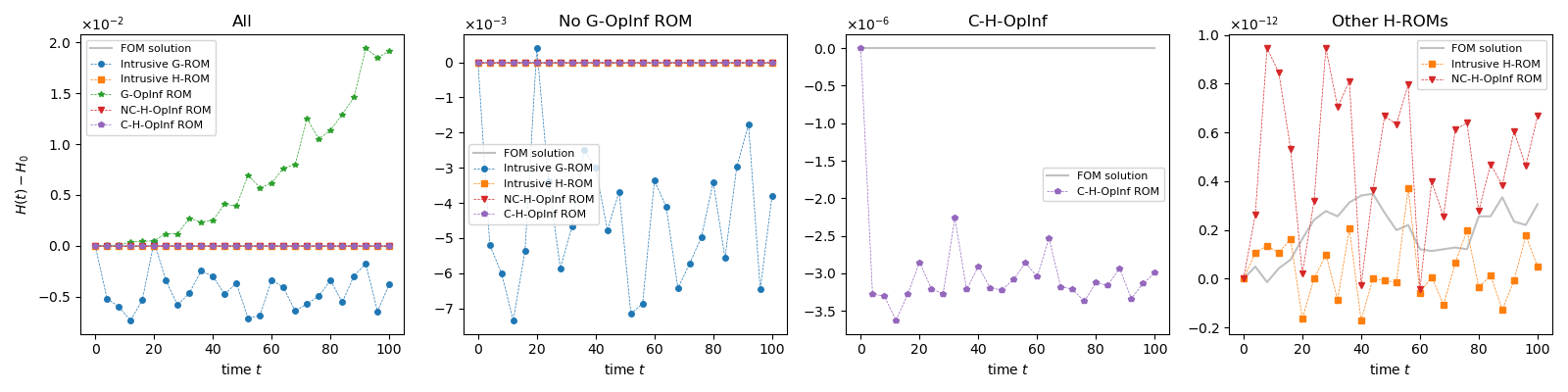}
    \caption{ROM energy errors for the linear wave equation example in the parametric predictive case ($\alpha=9.1, T=100$) when using a block $(q,p)$ POD basis with mean-centering (where applicable) and with $n=16$ modes.}
    \label{fig:ParaWaveEnergy}
\end{figure}

    % \begin{minipage}{0.75\textwidth}
    %     \includegraphics[width=\textwidth]{figs/1d3res.png}
    % \end{minipage}
    % \begin{minipage}{0.75\textwidth}
    %     \includegraphics[width=\textwidth]{figs/1d5res.png}
    % \end{minipage}
    % \begin{minipage}{0.75\textwidth}
    %     \includegraphics[width=\textwidth]{figs/1d10res.png}
    % \end{minipage}
    % \begin{minipage}{0.75\textwidth}
    %     \includegraphics[width=\textwidth]{figs/1d15res.png}
    % \end{minipage}

% The inferred operator exactly conserves the discrete Hamiltonian....

\subsection{A Non-separable Canonical Example}\label{subsec:nonseparable}\rev{
Since the linear wave equation can be similarly handled with the techniques in \cite{sharma2022hamiltonian}, it is worth considering a simple canonical example where the C-H-OpInf method is necessary.  Consider the Hamiltonian $H(q,p) = 1+qp$, which, after discretization as before, generates the canonical dynamics
\[ \frac{\xx^{k+1}-\xx^{k}}{\Delta t} = \JJ\bb{A}\lr{\frac{\xx^{k+1}+\xx^{k}}{2}} = \begin{pmatrix}0 & \bb{I} \\ -\bb{I} & 0\end{pmatrix}\begin{pmatrix} 0 & \bb{I} \\ \bb{I} & 0\end{pmatrix}\lr{\frac{\xx^{k+1}+\xx^k}{2}}.\]
Clearly, $\nabla H(\bb{x}) = \bb{Ax}$ does not satisfy the separability hypothesis of \cite[Algorithm 1]{sharma2022hamiltonian}, and therefore that method should not be effective at learning this system.  Conversely, the C-H-OpInf Algorithm~\ref{alg:C-H-OpInf} applies regardless of the separability of $H$, so it is expected that this system can still be learned through this approach.  To see that this is the case, a parameterized initial condition is considered,
\begin{align*}
\xx_0(\alpha
) = \begin{pmatrix}\qq_0 & \pp_0\end{pmatrix}^\intercal = \begin{pmatrix} e^{-\alpha(\qq+1)}\sin(\alpha\qq) & \pp\end{pmatrix}^\intercal,
\end{align*}
and, as before, eleven uniformly distributed parameters $\alpha\in[5,15]$ are chosen for training, and $501$ snapshots of the FOM solution in the range $[0,2]$ are collected using each parameter instance.  The resulting POD snapshot energies and projection errors are shown in Figure~\ref{fig:canonPODparametric}, along with some solution snapshots in Figure~\ref{fig:canonFOM}}.

\begin{figure}[htb]
    \centering
    \includegraphics[width=\textwidth]{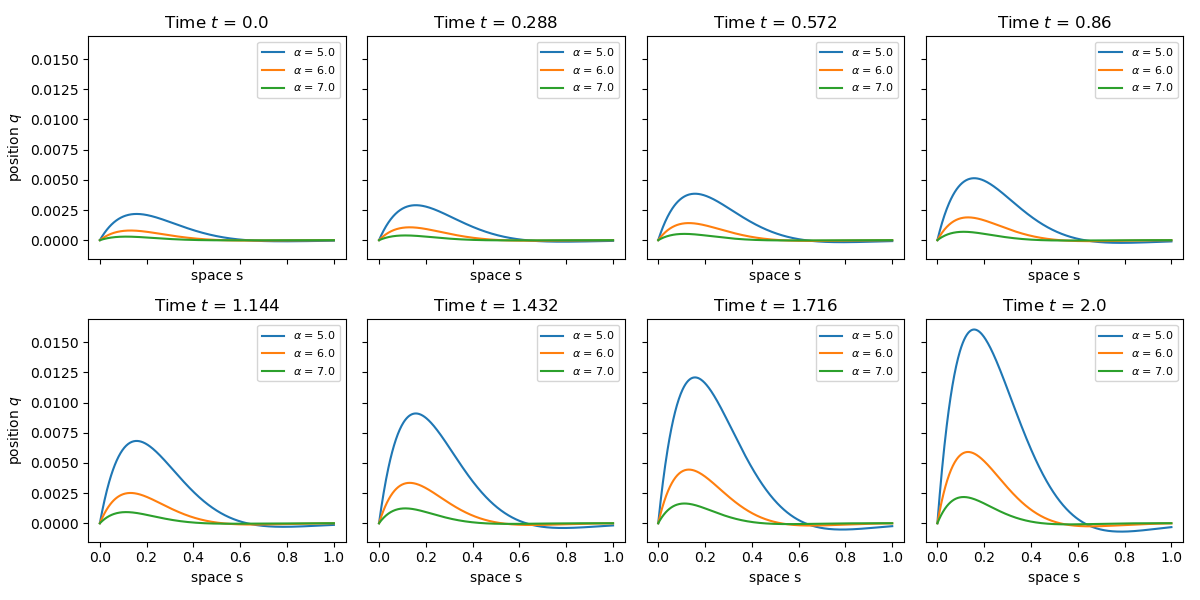}
    \caption{\rev{Solution snapshots from the non-separable canonical example for different values of the parameter $\alpha$.}}
    \label{fig:canonFOM}
\end{figure}

\begin{figure}[htb]
    \centering
    \includegraphics[width=\textwidth]{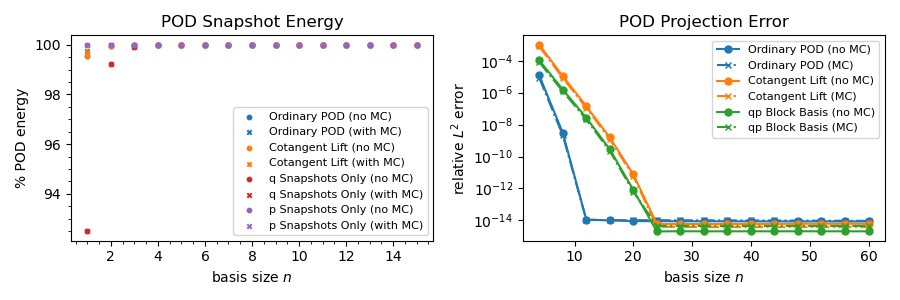}
    \caption{\rev{POD snapshot energies (left) and projection errors (right) corresponding to the bases used in the non-separable canonical example. ``MC'' indicates that mean-centering of the snapshots was performed.}}
    \label{fig:canonPODparametric}
\end{figure}

\rev{Again, the predictive case is considered.  For testing, six uniformly distributed parameters $\alpha \in [5.5,14.5]$, disjoint from the training data, are chosen, and snapshot data of each solution in the range $[0,10]$ is collected for comparison with the ROM integration.  The ROMs are then tested over this interval beginning from each unseen initial condition, and the average relative error over all test snapshots is reported.  In addition to the ROMs seen in the linear wave equation example, note that the H-OpInf ROM of \cite{sharma2022hamiltonian} discussed in Section~\ref{subsec:prevhopinf} is also reported.}

\begin{figure}[htb]
    \centering
    \includegraphics[width=\textwidth]{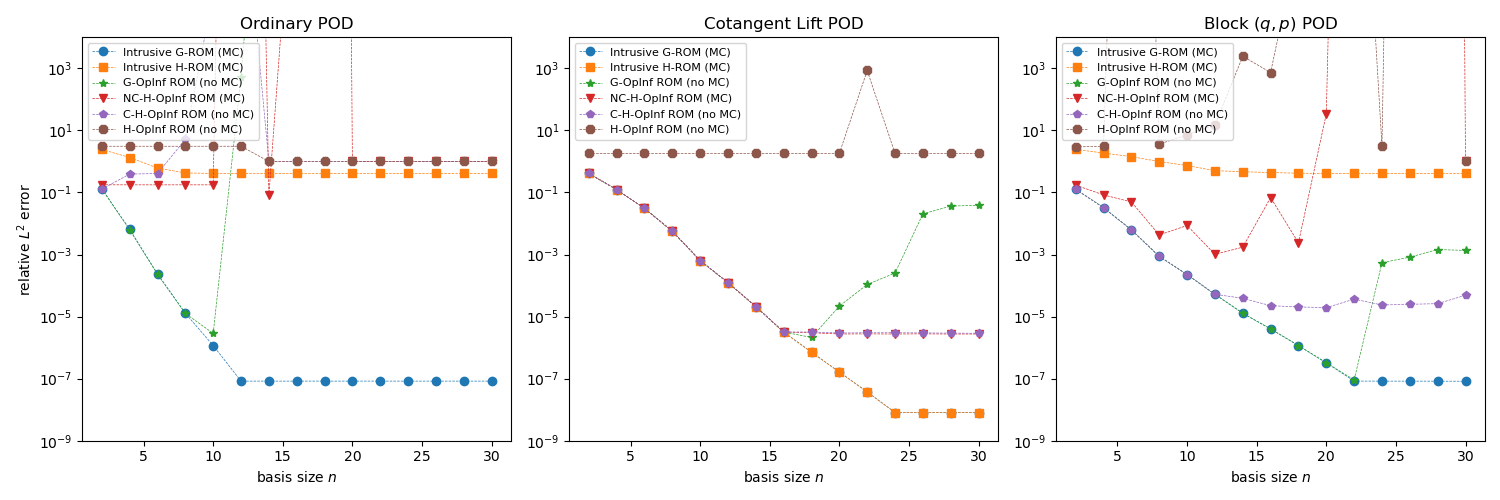}
    \caption{\rev{Relative state errors as a function of basis modes for the ROMs in the non-separable canonical example (parametric predictive case $T=10$). ``MC'' indicates the use of a mean-centered reconstruction.}}
    \label{fig:canonicalROMerrors}
\end{figure}

\rev{The results of this experiment are displayed in Figure~\ref{fig:canonicalROMerrors}.  As expected, H-OpInf cannot produce a useful ROM, while C-H-OpInf is effective whenever the POD basis is built block-wise or with the cotangent lift algorithm.  Interestingly, no Hamiltonian ROM algorithm is useful in the case where the POD basis is built from an SVD of the full snapshot matrix, while the intrusive Galerkin ROM appears to work quite well. This could be due to the fact that this Hamiltonian system decouples over $\qq$ and $\pp$: a quick calculation shows that $\qq = e^t\qq_0$ and $\pp=e^{-t}\pp_0$, so the scale separation in $\qq,\pp$ grows exponentially as $t$ increases. Conversely, C-H-OpInf with a cotangent lift basis learns an accurate and stable ROM, while the H-OpInf algorithm is unable to do so due to its assumption of a block diagonal $\Ah$.  In addition to the state errors, conservation of the system Hamiltonian is displayed in Figure~\ref{fig:canonicalEnergy}, where it is clear that C-H-OpInf is conservative to a much higher order than either the Galerkin ROMs or the H-OpInf ROM.}

\begin{figure}[htb]
    \centering
    \includegraphics[width=\textwidth]{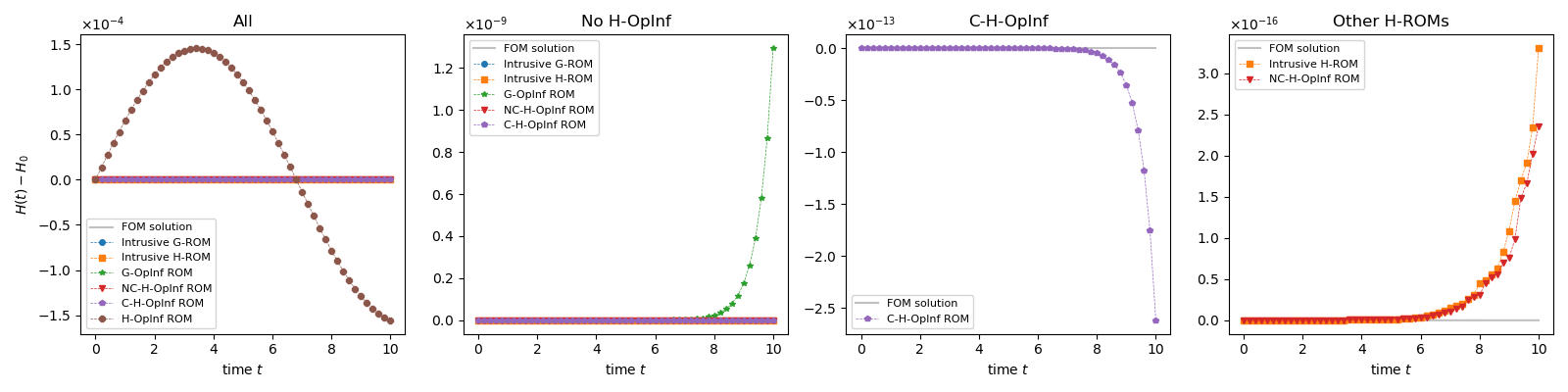}
    \caption{\rev{ROM energy errors for the non-separable canonical example in the parametric predictive case ($\alpha=7.3, T=10$) when using a cotangent lift POD basis with mean-centering (where applicable) and with $n=10$ modes.  Note that the intrusive G-ROM and intrusive H-ROM are identical in this case.}}
    \label{fig:canonicalEnergy}
\end{figure}

\subsection{Korteweg-De Vries equation}\label{subsec:kdv}
Moving beyond canonical Hamiltonian systems, consider the Korteweg-De Vries (KdV) equation \cite{karasozen2013energy}
\[ \dot{x} = \alpha xx_s + \rho x_s + \gamma x_{sss}, \qquad x\in [-l,l]\times [0,T], \]
which depends on the parameters $\alpha,\rho,\gamma\in\mathbb{R}$.  This equation has infinitely many integrals of motion \cite{nutku1985on}, the first few of which are mass, momentum, and energy:
\begin{align*}
    M(x) = \int_{-l}^l x\,ds, \qquad P(x) = \int_{-l}^l x^2\,ds, \qquad E(x) = \int_{-l}^l \lr{\frac{\alpha}{6}x^3 + \frac{\rho}{2}x^2 - \frac{\gamma}{2}x_s^2} ds.
\end{align*}
Moreover, KdV has a noncanonical bi-Hamiltonian structure, meaning that it can be recast as a Hamiltonian system in two distinct ways.  While only the first form will be considered here, the second form is also interesting and (to date) no POD-ROMs for it have been seen in the literature.  Therefore, some additional discussion regarding this second form is included in Appendix~\ref{app:kdv}.

\subsubsection{First Hamiltonian Formulation}
Consider the Hamiltonian functional $H(x)=E(x)$,
and note that its gradient satisfies 
\[\nabla H(x) = \frac{\alpha}{2}x^2 + \rho x + \gamma x_{ss}.\]
Then, recalling that $L:=\partial_s$ is an antisymmetric operator with respect to the usual metric on $L^2(\mathbb{R})$, it follows that $\dot{x} = L\nabla H(x)$ is a Hamiltonian system equivalent to the KdV equation.  Since $L$ has nontrivial kernel, this system is not canonical, meaning that there is no obvious way to separate the state $x$ into position and momentum variables.  Assuming periodic boundary conditions and a discretization $\xx\in\mathbb{R}^N$, the differential operators $\partial_s$ and $\partial_{ss}$ can be discretized with central finite differences as the circulant matrices
\[ \LL = \frac{1}{2\Delta x}\begin{pmatrix}0 & 1 & 0 & 0 & \ldots & -1 \\-1 & 0 & 1 & 0 & \ldots & 0 \\ & & \ddots & \ddots & \ddots & \\ 0 & \ldots & 0 & -1 & 0 & 1 \\ 1 & \ldots & 0 & 0 & -1 & 0 \end{pmatrix}, \qquad \bb{B} = \frac{1}{\lr{\Delta x}^2}\begin{pmatrix}-2 & 1 & 0 & 0 & \ldots & 1 \\1 & -2 & 1 & 0 & \ldots & 0 \\ & & \ddots & \ddots & \ddots & \\ 0 & \ldots & 0 & 1 & -2 & 1 \\ 1 & \ldots & 0 & 0 & 1 & -2 \end{pmatrix}, \]
yielding the semidiscrete Hamiltonian system 
% \IKTcomment{I suggest using $H_N$ for the discrete Hamiltonian, as noted earlier.}
\[ \dot{\xx} = \LL\nabla H(\xx) = \LL\lr{\frac{\alpha}{2}\xx^2 + \rho\xx + \nu\bb{B}\xx}. \]
Notice that the only nonlinearity in this system is polynomial in $\xx$, meaning that the quadrature necessary for AVF time discretization (see Section~\ref{subsec:linrom}) can be computed exactly.  This leads to the fully discrete system
\[ \frac{\xx^{k+1}-\xx^k}{\Delta t} = \LL\left[\frac{\alpha}{6}\lr{\lr{\xx^k}^2 + \xx^k\xx^{k+1} + \lr{\xx^{k+1}}^2} + \lr{\rho\bb{I} + \nu\bb{B}}\xx^{k+\frac{1}{2}}\right], \]
where $\xx^{k+\frac{1}{2}} = \lr{1/2}\lr{\xx^k + \xx^{k+1}}$ and vector products are interpreted element-wise.  This represents the KdV FOM and is solved by Newton iteration.  More precisely, at each time step $k$ we have the ($\Delta t$-normalized) residual and Jacobian functions
\begin{align*}
\bb{R}^k\lr{\vv} &= \vv-\xx^k - \Delta t\,\LL\left[\frac{\alpha}{6}\lr{\lr{\xx^k}^2 + \xx^k\vv + \vv^2} + \lr{\frac{\rho\bb{I} + \nu\bb{B}}{2}}\lr{\xx^{k}+\vv}\right], \\
\bb{J}^k\lr{\vv} &= \bb{I} - \frac{\Delta t}{2}\LL\left[\frac{\alpha}{3}\lr{\mathrm{Diag}\lr{\xx^k} + 2\,\mathrm{Diag}\lr{\vv}} + \rho\bb{I}+\nu\bb{B}\right],
\end{align*}
which are easily constructed and used to iterate $\vv^{i+1} = \vv^i - \JJ^k\lr{\vv^i}^{-1}\bb{R}^k\lr{\vv^i}$ until convergence.  It can be checked that this scheme exactly preserves the discrete Hamiltonian,
\[H(\xx) = \frac{1}{2}\sum_{j=1}^N\lr{\frac{\alpha}{3}x_j^3 + \rho x_j^2-\nu\lr{\frac{x_{j+1}-x_j}{\Delta x}}^2}\Delta x.\]

From this, it is possible to compute the intrusive G-ROM and intrusive H-ROM as described in Section~\ref{subsec:linrom}. Particularly, straightforward Galerkin projection onto a reduced basis contained in the columns of $\uu$ yields the reduced-order G-ROM system
\begin{align*}
\dot{\xh} &= \ut\LL\nabla H\lr{\xx_0 + \uu\xh} = \ut\LL\left[\frac{\alpha}{2}\lr{\xx_0+\uu\xh}^2 + \lr{\rho\bb{I}+\nu\bb{B}}\lr{\xx_0+\uu\xh}\right] \\
&= \lr{\frac{\alpha}{2}\ut\LL\xx_0^2 + \ut\LL\lr{\rho\bb{I}+\nu\bb{B}}\xx_0} + \ut\LL\lr{\alpha\,\mathrm{Diag}\lr{\xx_0} + \lr{\rho\bb{I}+\nu\bb{B}}}\uu\xh + \frac{\alpha}{2}\ut\LL\lr{\uu\xh}^2 \\
&:= \hat{\bb{c}} + \hat{\bb{C}}\xh + \hat{\bb{T}}\lr{\xh,\xh},
\end{align*}
where $\hat{\bb{T}}$ is a precomputable order-three tensor with components $\hat{T}^a_{bc} = \lr{\alpha/2}U^a_iL^i_jU^j_bU^j_c$.  Similarly, a reduced-order H-ROM system is given by 
\begin{align*}
\dot{\xh} &= \Lh\nabla\hat{H}\lr{\xh} = \Lh\ut\left[\frac{\alpha}{2}\lr{\xx_0+\uu\xh}^2 + \lr{\rho\bb{I}+\nu\bb{B}}\lr{\xx_0+\uu\xh}\right] \\
&= \Lh\left[\frac{\alpha}{2}\lr{\ut\xx_0^2 + 2\,\ut\mathrm{Diag}\lr{\xx_0}\uu\xh + \ut\lr{\uu\xh}^2} + \lr{\rho\bb{I}+\nu\bb{B}}\lr{\xx_0+\uu\xh}\right], \\
&= \Lh\left[\lr{\frac{\alpha}{2}\ut\xx_0^2 + \ut\lr{\rho\bb{I}+\nu\bb{B}}\xx_0} + \ut\lr{\alpha\,\mathrm{Diag}\lr{\xx_0} + \lr{\rho\bb{I}+\nu\bb{B}}}\uu\xh + \frac{\alpha}{2}\ut\lr{\uu\xh}^2 \right] \\
&:= \Lh\lr{\hat{\bb{c}} + \hat{\bb{C}}\xh + \hat{\bb{T}}\lr{\xh,\xh}},
\end{align*}
where $\hat{\bb{T}}:\mathbb{R}^n\times\mathbb{R}^n\to\mathbb{R}^n$ is a precomputable order-three tensor with components $\hat{T}^a_{bc} = (\alpha/2) U^a_i U^i_b U^i_c$.  In either case, applying AVF for temporal discretization and using the fact that $\hat{\bb{T}}$ is symmetric in its lower indices yields the fully discrete ROM (note that $\Lh=\bb{I}$ in the G-ROM),
\begin{align*}
\frac{\xh^{k+1}-\xh^k}{\Delta t} = \Lh\left[\hat{\bb{c}} + \hat{\bb{C}}\xh^{k+\frac{1}{2}} + \frac{1}{3}\lr{2\,\hat{\bb{T}}\lr{\xh^{k},\xh^{k+\frac{1}{2}}} + \hat{\bb{T}}\lr{\xh^{k+1},\xh^{k+1}}}\right],
\end{align*}
which is again solvable with Newton iterations.  In this case, the ($\Delta t$-normalized) residual and Jacobian at time step $k$ are given by
\begin{align*}
\hat{\bb{R}}^k\lr{\hat{\vv}} &= \hat{\vv}-\xh^k - \Delta t\,\Lh\left[\hat{\bb{c}} + \frac{1}{2}\bb{C}\lr{\xh^k+\hat{\vv}} + \frac{1}{3}\lr{\hat{\bb{T}}\lr{\xh^k,\xh^k+\hat{\vv}} + \hat{\bb{T}}\lr{\hat{\vv},\hat{\vv}}} \right], \\
\hat{\bb{J}}^k\lr{\hat{\vv}} &= \bb{I} - \Delta t\,\Lh\left[ \frac{1}{2}\bb{C} + \frac{1}{3}\lr{\Th\lr{\xh^k} + 2\,\Th\lr{\hat{\vv}}}\right],
\end{align*}
where $\Th\lr{\xh^k}$ indicates that the symmetric tensor $\Th$ is applied to the vector $\xh$ in either of its lower indices, yielding an $n\times n$ matrix.

The goal is now to compare these intrusive ROMs to the NC-H-OpInf ROM from Section~\ref{sec:hopinf} as well as a G-OpInf ROM which does not incorporate any structure information.  To facilitate a fair comparison, the G-OpInf procedure employed presently will not be black-box, but will instead aim to infer $\Lh$ in the intrusive H-ROM $\dot{\xh} = \Lh\hat\nabla H(\xh)$ similarly to NC-H-OpInf, but using the generic technique of Section~\ref{subsec:genopinf}.  This way, both the G-OpInf ROM and the NC-H-OpInf ROM are assumed to use analytic knowledge of the nonlinear part of $\nabla\hat{H}$, and both OpInf ROMs can be integrated
similarly to the intrusive H-ROM, but with the intrusive governing operator replaced by the inferred one.  For experimental parameters, we choose $l=20$, $(\alpha,\beta,\gamma) = (-6,0,-1)$, 
$N=500$, and an initial condition 
\[x_0(s) = \sech^2\lr{\frac{s}{\sqrt{2}}},\]
which generates a soliton solution for $s \in \mathbb{R}$.  To train the OpInf ROMs, 1001 snapshots of the solution $\xx$ and the gradient $\nabla H(\xx)$ are collected uniformly on the interval $[0,T]$ with $T=20$.

\begin{figure}[htb]
    \centering
    \includegraphics[width=\textwidth]{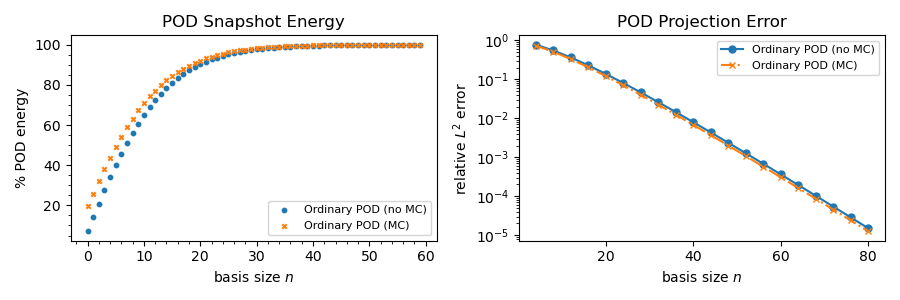}
    \caption{POD snapshot energies (left) and projection errors (right) corresponding to the bases used in the KdV equation example.  ``MC'' indicates mean-centering of the snapshots was performed.}
    \label{fig:KdVpod}
\end{figure}

Recall that there is no analogue of a block basis or cotangent lift method in the case of noncanonical Hamiltonian systems, so the POD bases $\bb{U}$ employed here are trained using the full snapshot matrix.  The associated snapshot energies and projection errors are displayed in Figure~\ref{fig:KdVpod}, where it is evident that the snapshot energy accumulates quite slowly with the addition of basis modes.  On the other hand, the use of ordinary POD bases again allows for the one-shot computation of all OpInf ROMs via truncation from the OpInf solution at the highest number of modes, creating large savings in computational cost.  While this is unlikely to be provably optimal in view of Proposition~\ref{prop:newtrunc}, the empirical difference in performance is small enough to justify the substantial decrease in computational time necessary for computing the ROMs.

% Note that, in the case of noncanonical Hamiltonian systems, there is no analogue of a block basis or cotangent lift method.   \IKTcomment{I think you should say this earlier when you are describing the different basis construction approaches.}

\begin{figure}[htb]
    \centering
    \begin{minipage}{0.5\textwidth}
         \includegraphics[width=\textwidth]{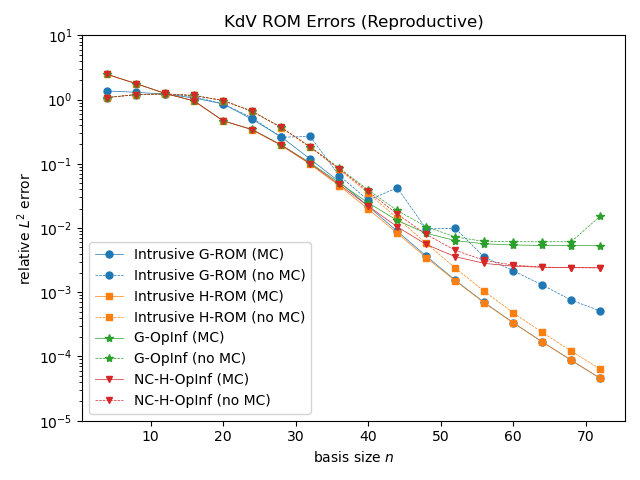}
    \end{minipage}%
    \begin{minipage}{0.5\textwidth}
         \includegraphics[width=\textwidth]{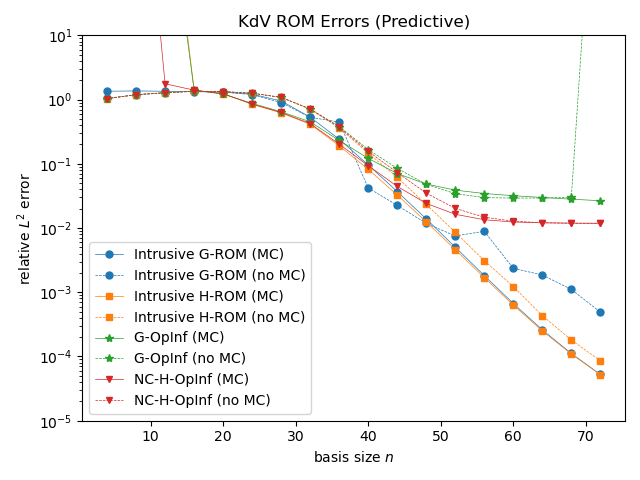}
    \end{minipage}
    \caption{Relative state errors as a function of basis modes for the ROMs in the KdV equation example.  Left: reproductive case ($T=20$).  Right: predictive case ($T=100$). ``MC'' indicates the use of a mean-centered reconstruction.}
    \label{fig:KdVerrors}
\end{figure}

As before, the performance of these ROMs in both predictive and reproductive cases is considered.
The relative state errors of each ROM as a function of basis modes are shown in Figure~\ref{fig:KdVerrors}, with the reproductive case ($T=20$) on the left and the predictive case ($T=100$) on the right.  Reported are the errors with and without mean-centering by $\xx_0$, as it is interesting to observe the effect of this choice.  Notice that 
mean-centering in the POD basis appears to make the intrusive ROMs more accurate and the OpInf ROMs more stable, perhaps because it ensures that the value of the Hamiltonian is exact at $\xh=\bb{0}$.
% \IKTcomment{Is there intuition for this result?}
However, in either case the NC-H-OpInf ROM remains more accurate and stable than the G-OpInf ROM, demonstrating the benefits of preserving antisymmetry in the learned operator.  Figures \ref{fig:KdVimshow} and \ref{fig:KdVsoln} provide a comparative illustration of the FOM and ROM solutions in the case that $n=32$ modes and mean-centering is applied.  While both OpInf ROMs are capable of predicting the general trajectory of the soliton, the NC-H-OpInf ROM exhibits much less artifacting over the rest of the domain\textemdash a consequence of capturing the correct latent space dynamics.  Note that, in either case, the performance of the OpInf ROM improves substantially as the number of modes increases, eventually leveling off around $n=60$ as a consequence of the failure of the learned dynamics to remain Markovian (see \cite{peherstorfer2020sampling}).

\begin{figure}[htb]
    \centering
    \includegraphics[width=\textwidth]{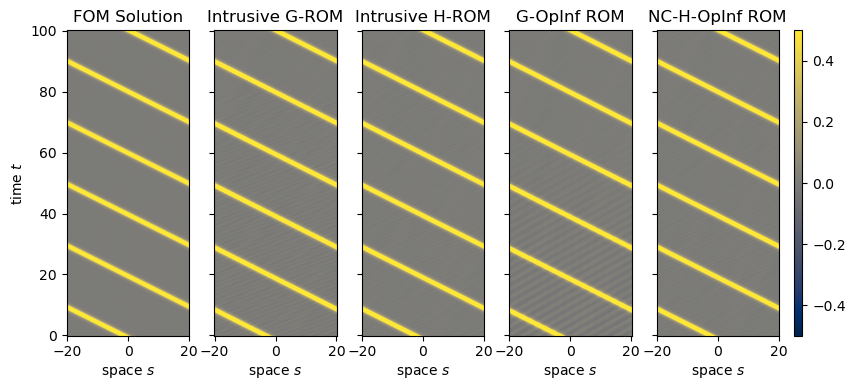}
    \caption{FOM and ROM solutions to the KdV  equation example with mean-centering (where applicable) and $n=32$ modes (predictive case $T=100$).}
    \label{fig:KdVimshow}
\end{figure}

\begin{figure}[htb]
    \centering
    \includegraphics[width=\textwidth]{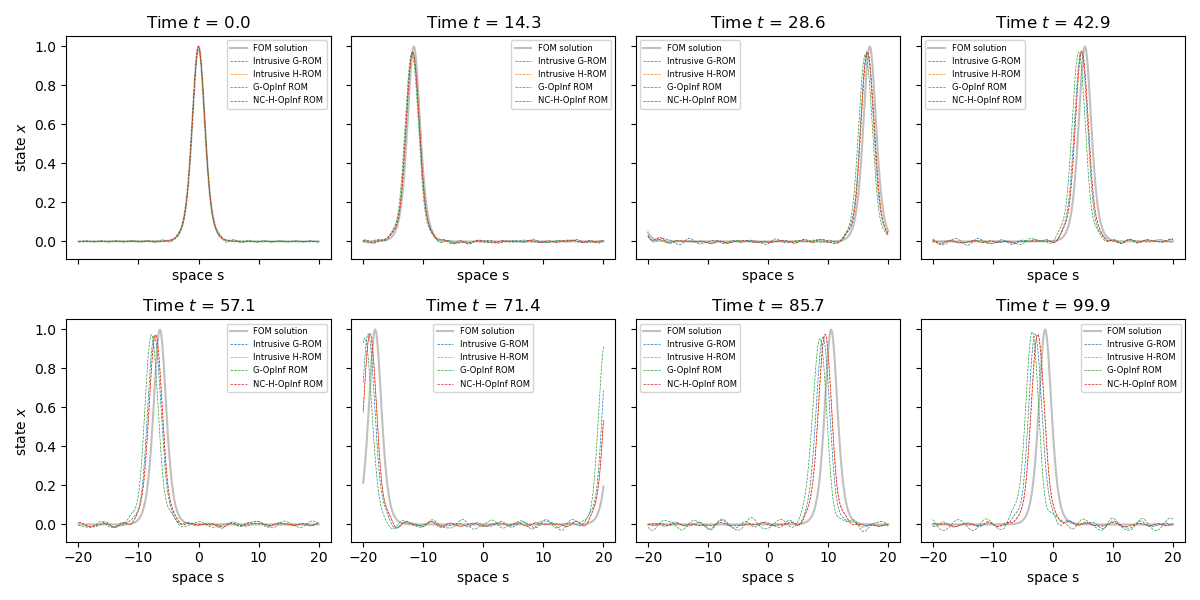}
    \caption{Snapshots in time correesponding to the FOM and ROM solutions to the KdV equation example with mean-centering (where applicable) and $n=32$ modes (predictive case $T=100$). }
    \label{fig:KdVsoln}
\end{figure}

Besides decreased state errors, Figure~\ref{fig:KdVConservedT100} shows the improved conservation of energy, mass, and momentum displayed by the H-ROMs over the G-ROMs when a mean-centered POD basis with $n=48$ modes is used.  Again, the conservation behavior of the intrusive H-ROM and the NC-H-OpInf ROM is orders of magnitude more accurate than the intrusive G-ROM or the G-OpInf ROM, reflecting the notion of the Hamiltonian as a conserved quantity.  It is also clear that the mass and momentum are preserved by the H-ROMs at least as well as the by the G-ROMs, demonstrating that other conserved quantities are not sacrificed for Hamiltonian preservation.  Finally, it is useful to note that, as before in the canonical case, this behavior persists regardless of the number of basis modes used in the ROM.

\begin{figure}[htb]
    \centering
    \includegraphics[width=\textwidth]{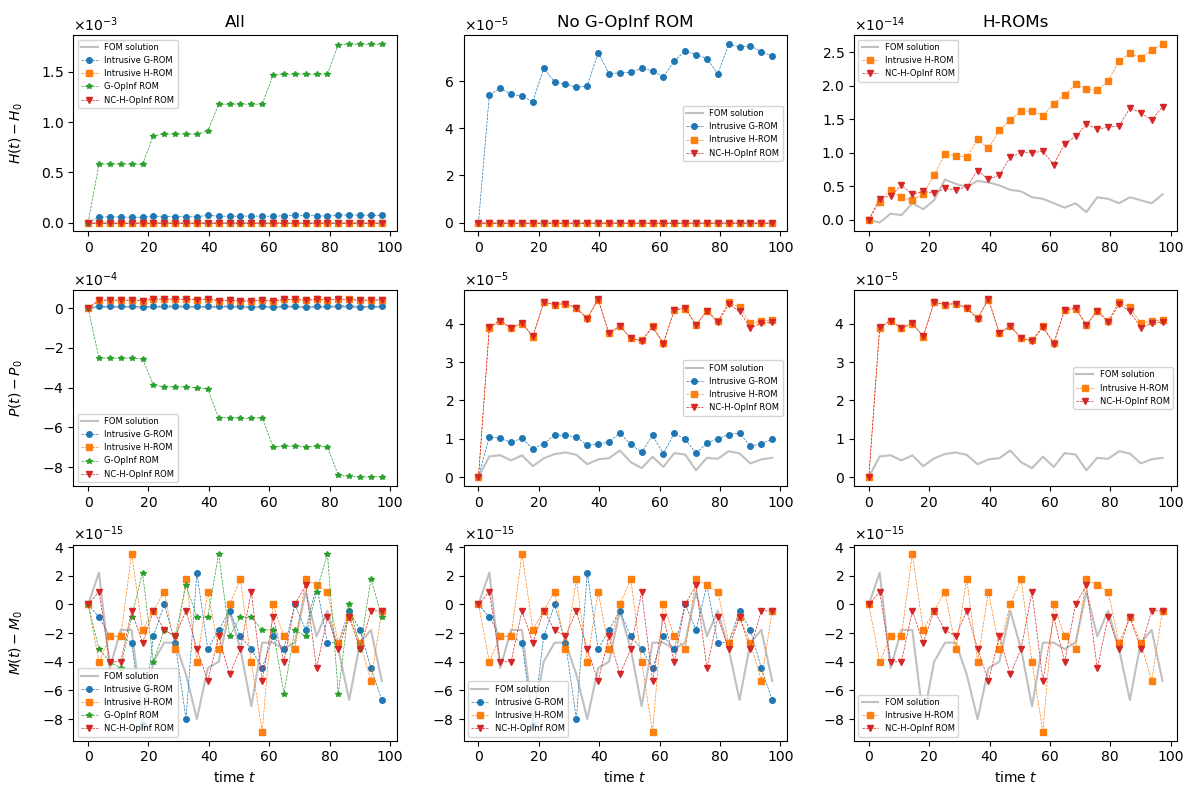}
    \caption{Errors in conserved quantities for the (mean-centered) ROMs in the KdV equation example in the predictive case ($T=100$) when using a POD basis with $n=48$ modes.}
    \label{fig:KdVConservedT100}
\end{figure}

\subsection{Benjamin-Bona-Mahoney Equation}

% \IKTcomment{Maybe I missed it, but do you say what spatial mesh resolution was used for the BBM problem?}

% \IKTcomment{Figure 10 seems out of place...  it's shown with the BBM results and appears not to be referenced in the text.  Should it go earlier with the wave equation results?  The figure really needs x and y labels, like all of them, or perhaps more so.  You also need a colorbar to explain what regions are high/low. }

As another example, consider the Benjamin-Bona-Mahoney (BBM) equation, also referred to as the regularized long-wave equation,
\[\dot{x} = \alpha x_s + \beta xx_s - \gamma \dot{x}_{ss}.\]
The BBM equation represents an alternative to the KdV equation introduced in \cite{peregrine1966calculations} and later in \cite{benjamin1972model}, intended as a model for the unidirectional propagation of long-range water waves with small amplitude.  This equation has a noncanonical Hamiltonian form defined by the data:
\[L=-\lr{1-\partial_s^2}^{-1}\partial_s,\qquad H(x) = \frac{1}{2}\int_0^\ell \alpha x^2 + \frac{\beta}{3}x^3\,ds, \]
where $L^2\lr{\mathbb{R}}$ is skew-symmetric and $H$ is the Hamiltonian.  The BBM equation is distinct from KdV in that it is not completely integrable, possessing only three globally conserved quantities.  In addition to $H$, these are the momentum and kinetic energy:
\[P(x) = \int_0^\ell \lr{x - \gamma x_{ss}} ds, \qquad KE(x) = \frac{1}{2}\int_0^\ell \lr{x^2 + \gamma x_s^2} ds. \]

Because the governing operator $L$ is unwieldy to spatially discretize, the BBM equation has (to the authors' knowledge) never been simulated in Hamiltonian form.  On the other hand, it is straightforward to discretize this system with pseudospectral techniques.  In particular,  denote the Fourier and inverse Fourier transforms of a function $f:\mathbb{R}\to\mathbb{R}$ by
\[\hat{f}(\xi) := \lr{\mathcal{F}f}(\xi) = \int_{-\infty}^{\infty}f(\xi)e^{-2\pi i\xi x}, \qquad f(x) = \lr{\mathcal{F}^{-1}\hat{f}}(x) = \int_{-\infty}^{\infty}\hat{f}(\xi)e^{2\pi i\xi x}.\]
Then, basic properties of the Fourier transform (see e.g. \cite{evans2010partial}) show that the BBM equation has the equivalent (non-Hamiltonian) expression
\[ \dot{x} = \mathcal{F}^{-1}\lr{\frac{-2\pi i\,\mathcal{F}\lr{\nabla H(x)}(\xi)}{1+4\gamma\pi^2\xi^2}}(x), \]
where $\nabla H(x) = \alpha x + (\beta/2)x^2$.  The FOM used presently is generated from this expression by semidiscretizing $x$ as $\xx\in\mathbb{R}^N$ with $N=1024$ and utilizing the fast Fourier transform and ``solve\_ivp'' functions found in the SciPy library \cite{scipy2020}. More precisely, given the discrete Hamiltonian
\[H(\xx) = \frac{1}{2}\sum_{j=1}^N\lr{\alpha x_j^2 + \frac{\beta}{3}x_j^3}\Delta x,\]
the FOM is computed by solving the system
\[ \dot{\xx} = \bm{\mathcal{F}}^{-1}\lr{\frac{-2\pi i\,\bm{\mathcal{F}}\lr{\nabla H(\xx)}}{1+4\gamma\pi^2\bm{\xi}^2}},\]
with an explicit Runge-Kutta method of order 8, and $\bm{\mathcal{F}},\bm{\mathcal{F}}^{-1}$ are the discrete Fourier and inverse Fourier transforms defined in terms of the vector $\bb{k}=\bb{m}=\begin{pmatrix}0 & 1 & ... & N-1\end{pmatrix}^\intercal$ of nonnegative integers at most $N-1$,
\[\bm{\mathcal{F}}(\xx) = \sum_{m=0}^{N-1} x_m\,\mathrm{exp}\lr{-\frac{2\pi i}{N}\bb{k}m}, \qquad \bm{\mathcal{F}}^{-1}(\bm{\xi}) = \frac{1}{N}\sum_{k=0}^{N-1} \xi_k\,\mathrm{exp}\lr{\frac{2\pi i}{N}k\bb{m}}.\]

Since it is challenging to build an intrusive Hamiltonian ROM for the BBM system, it is useful to see if the governing operator can be effectively learned by the OpInf methods seen in Section~\ref{sec:hopinf}.  This would allow for a nonintrusive spatial ROM which preserves the underlying Hamiltonian structure, which could be valuable in cases where conservation is paramount.  As before, accomplishing this means inferring $\Lh$ in $\dot{\xh}=\Lh\nabla\hat{H}\lr{\xh}$, which is readily done by solving the linear system in equation \eqref{eq:NC-H-OpInf}.  Similar to the case of KdV, this result will be compared to the nonintrusive ROM generated by inferring $\Lh$ using the generic OpInf technique described in Section~\ref{subsec:genopinf}.  Provided a suitable $\Lh$ is available, the desired Hamiltonian ROM becomes 
\begin{align*}
    \dot{\xh} &= \Lh\nabla\hat{H}\lr{\xx_0 + \uu\xh} = \Lh\ut\left[\alpha\lr{\xx_0+\uu\xh} + \frac{\beta}{2}\lr{\xx_0+\uu\xh}^2\right] \\ 
    &= \Lh\left[ \ut\lr{\alpha\xx_0+\frac{\beta}{2}\xx_0^2} + \ut\lr{\alpha\bb{I}+\beta\,\mathrm{Diag}\,\xx_0}\uu + \Th\lr{\xh,\xh} \right] \\
    &= \Lh\lr{\hat{\bb{c}}+\hat{\bb{C}}\xh + \Th\lr{\xh,\xh}},
\end{align*}
where $\Th$ is precomputable with components $T^a_{bc} = (\beta/2)U^a_iU^i_bU^i_c$ similar to the case of KdV.  With these definitions of $\hat{\bb{c}},\hat{\bb{C}},\Th$, AVF integration yields the BBM ROM system
%($\Lh=\bb{I}$ for the G-ROM)
\begin{align*}
\frac{\xh^{k+1}-\xh^k}{\Delta t} = \Lh\left[\hat{\bb{c}} + \hat{\bb{C}}\xh^{k+\frac{1}{2}} + \frac{1}{3}\lr{2\,\hat{\bb{T}}\lr{\xh^{k},\xh^{k+\frac{1}{2}}} + \hat{\bb{T}}\lr{\xh^{k+1},\xh^{k+1}}}\right],
\end{align*}
which is solvable with Newton iterations identically to the KdV system.

% Moreover, there is a corresponding OpInf G-ROM which can be assembled for comparison,
% \begin{align*}
%     \dot{\xh} &= \ut\LL\nabla\hat{H}\lr{\xx_0 + \uu\xh} = \ut\LL\left[\alpha\lr{\xx_0+\uu\xh} + \frac{\beta}{2}\lr{\xx_0+\uu\xh}^2\right] \\ 
%     &= \ut\LL\lr{\alpha\xx_0+\frac{\beta}{2}\xx_0^2} + \ut\LL\lr{\alpha\bb{I}+\beta\,\mathrm{Diag}\,\xx_0}\uu + \Th\lr{\xh,\xh} \\
%     &= \hat{\bb{c}}+\hat{\bb{C}}\xh + \Th\lr{\xh,\xh},
% \end{align*}
% where $\hat{T}^a_{bc}=(\beta/2)U^a_iL^i_jU^j_bU^j_c$ is precomputable.

\begin{figure}[htb]
    \centering
    \includegraphics[width=\textwidth]{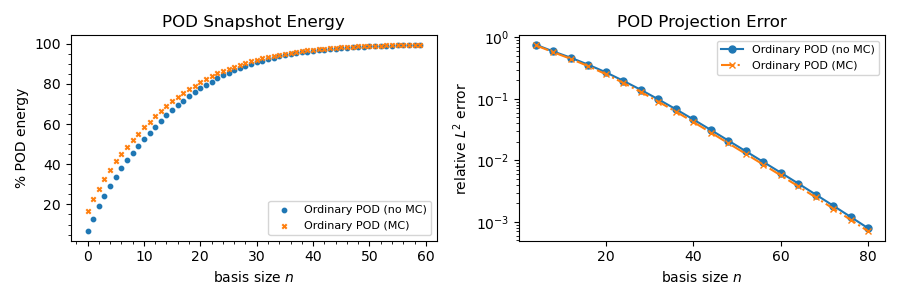}
    \caption{Left: POD snapshot energies (left) and projection errors (right) corresponding to the bases used in the BBM equation example. ``MC'' indicates mean-centering of the snapshots was performed.}
    \label{fig:BBMpodmodes}
\end{figure}

% \IKTcomment{Do figures 13-14 have MC?  Should make this clear in caption.}

% \IKTcomment{I would refer to the POD reconstruction error in Fig. 11 as the projection error and give a formula for it.  Also, is it relative or absolute error?  I think it's relative based on the text, but would clarify it in the caption. 
%  For the left two subplots, I feel like showing the percent energy captured might be more illustrative.  I'd plot it on a semilogy scale.  }

For the present experiment, the parameters in the governing equation are set to $(\alpha,\beta,\gamma)=\lr{1,1,10^{-4}}$ and 2001 snapshots of $\xx, \nabla H(\xx)$ are collected in the interval $[0,T]$ for $T=0.5$, starting from the initial condition 
\[\xx_0 = 7\sech^2\lr{\sqrt{\frac{1}{5\gamma}}\lr{\bb{s}-0.25}} + 3\sech^2\lr{\sqrt{\frac{1}{6\gamma}}\lr{\bb{s}-0.35}}.\]
This generates a nonperiodic 2-solitary wave solution, which experiences an inelastic collision over the length of the training integration.  The relative POD energies and reconstruction errors of the computed POD bases are shown in Figure~\ref{fig:BBMpodmodes}, where it is seen that the reconstruction error decays quite slowly as a function of basis modes.  It is further interesting to observe that the first eigenvector of the mean-centered basis contains much more information than the others, although this does not appear to yield a faster decrease in reconstruction error.  These data are used to train the NC-H-OpInf ROM and a corresponding G-OpInf ROM.

% \IKTcomment{I'd have to see the percent energies on a semilogy scale to evaluate better, but I am not really surprised by this.  You see in contrast w/o the MC, there is a bigger difference in energy between the first and second POD mode - this is because the first POD mode basically represents the mean/base state.}

For testing, AVF time integration is carried out to $T=0.5$ and $T=1$, respectively, representing reproductive and predictive scenarios.  The relative errors of these ROMs as a function of basis size are displayed in Figure~\ref{fig:BBMerrors}, where it can be seen that the errors for the NC-H-OpInf ROMs are about half of those for the G-OpInf ROMs.  However, it is also clear that this example poses a much greater challenge for either OpInf ROM, likely due to the nonperiodic and inelastic nature of the solitary wave collisions present in the BBM solution, as well as the complicated form of the governing operator $\LL$.  It is interesting to note the effect of mean-centering here:  in either case, there is a significant gain in performance for middling numbers of modes (20-60) which diminishes as more modes are added.

\begin{figure}[htb]
    \centering
    \begin{minipage}{0.5\textwidth}
         \includegraphics[width=\textwidth]{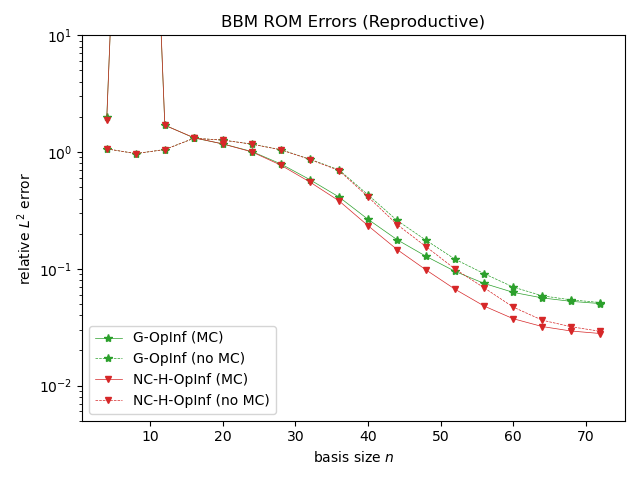}
    \end{minipage}%
    \begin{minipage}{0.5\textwidth}
         \includegraphics[width=\textwidth]{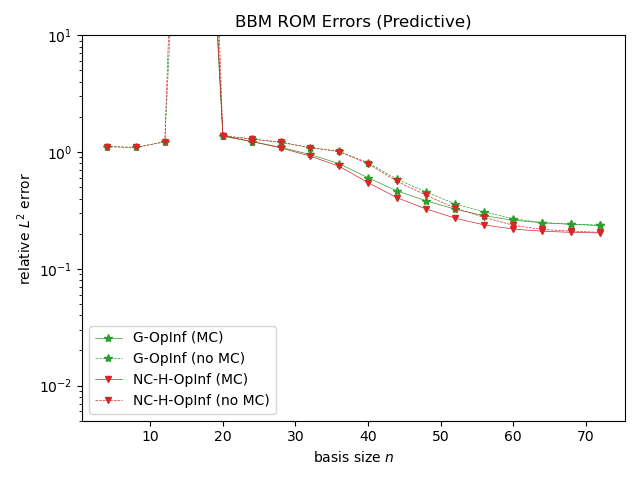}
    \end{minipage}
    \caption{Relative state errors as a function of basis modes for the ROMs in the BBM equation example.  Left: reproductive case ($T=0.5$).  Right: predictive case ($T=1$).  ``MC'' indicates the use of a mean-centered reconstruction.}
    \label{fig:BBMerrors}
\end{figure}

Visual comparisons of the FOM and ROM solutions in the predictive case are shown in Figures~\ref{fig:BBMsoln} and \ref{fig:BBMimshow}, where two inelastic collisions are pictured and the second collision occurs outside the range of the training data (note the difference in the tails).  Even at $n=44$ modes, the collisions are relatively well captured, validating the hypothesis that the NC-H-OpInf procedure can produce a useful and nonintrusive spatial ROM even when the FOM is pseudospectral and the involved operator $\Lh$ cannot be readily discretized by standard techniques.  

% It is also remarkable that the NC-H-OpInf R

\begin{figure}[htb]
    \centering
    \includegraphics[width=\textwidth]{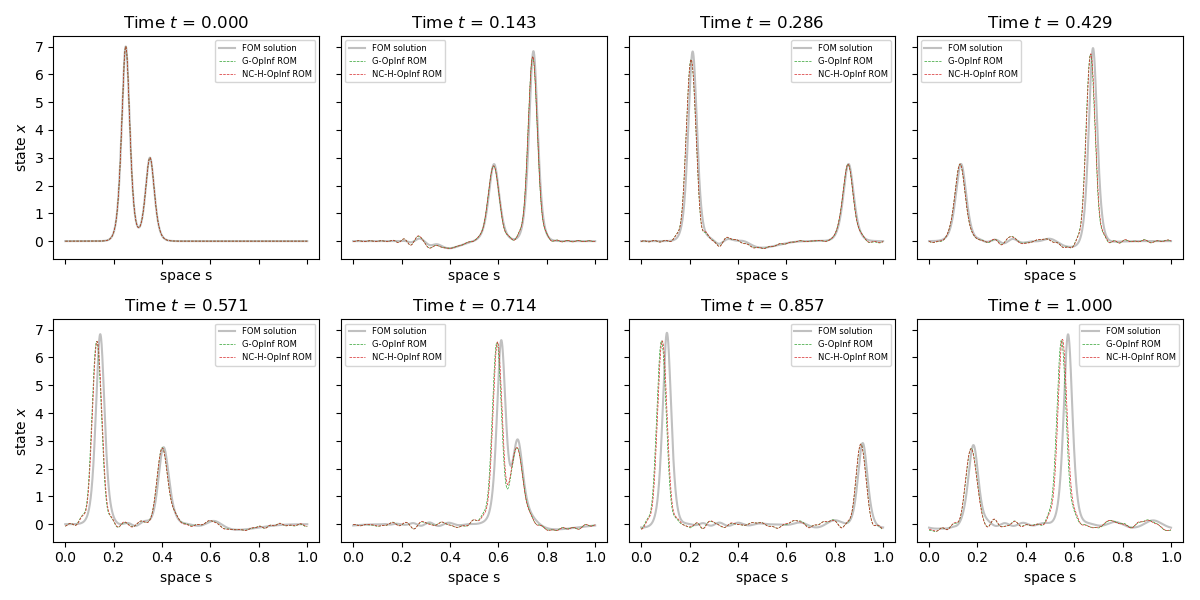}
    \caption{Space-time plots showing the evolution of the BBM ROMs in the predictive case ($T=1$) with mean-centering and using $n=44$ modes.}
    \label{fig:BBMsoln}
\end{figure}

\begin{figure}[htb]
    \centering
    \includegraphics[width=\textwidth]{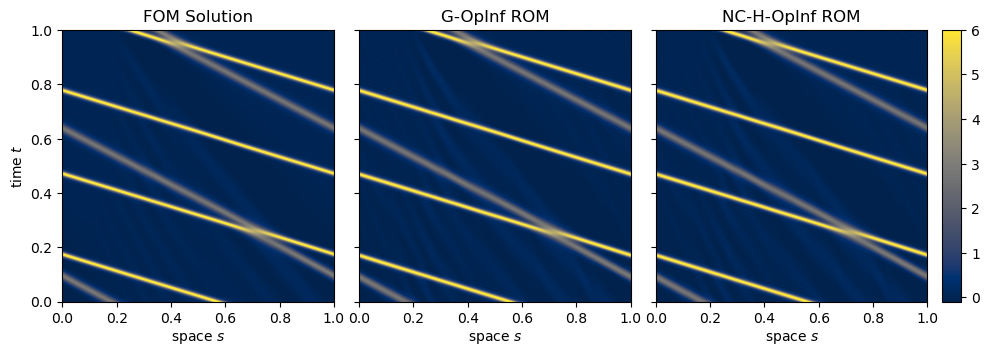}
    \caption{Snapshots showing the evolution of the BBM ROMs in the predictive case ($T=1$) with mean-centering and using $n=44$ modes.}
    \label{fig:BBMimshow}
\end{figure}

Moving beyond state errors, the difference in conserved quantities between the NC-H-OpInf and G-OpInf ROMs is displayed in Figure~\ref{fig:BBMconserved}, using mean-centered POD bases and $n=44$ modes.  From this, it is evident that the Hamiltonian is conserved exactly by the NC-H-OpInf ROM but not by the G-ROM (note that the FOM is conservative to $\mathcal{O}\lr{10^{-12}}$), likely enabling the NC-H-OpInf ROM to capture small-scale features like the tails of the solitons in Figure~\ref{fig:BBMsoln} much more realistically.  Moreover, it appears that both OpInf ROMs are capable of conserving momentum exactly and kinetic energy to a relatively low-order.  It is interesting to note that mean-centering makes a difference here:  without this choice (not shown here), the momentum conservation of both ROMs is on the same order as the kinetic energy conservation.

% While this shows that either OpInf ROM is capable of conserving momentum exactly, it is clear that both the Hamiltonian and kinetic energy functionals are much better conserved by the NC-H-OpInf ROM solution.  It is interesting to note that mean centering makes a difference here: without this choice, the conservation properties of the G-OpInf ROM are much worse \IKT{(not shown here)}{}.

\begin{figure}[htb]
    \centering
    \includegraphics[width=\textwidth]{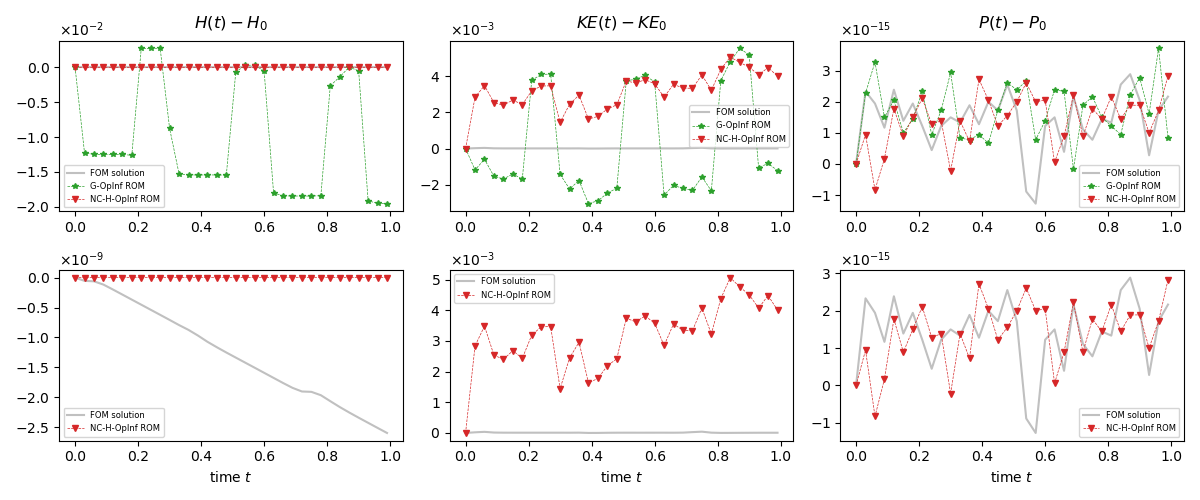}
    \caption{Plots showing the evolution of the conserved quantities for the mean-centered ROMs in the BBM equation example in the predictive case ($T=1$) using $n=44$ modes.}
    \label{fig:BBMconserved}
\end{figure}

% \IKTcomment{This is nit-picky and optional, but you could comment that the FOM in Figure \ref{fig:BBMconserved} is conservative to $O(1e-12)$ even though it looks like there is a lack of conservation when one first glances at the relevant subfigure.}

% \textcolor{red}{Irina: you are invited to start writing on this section also.  I think we should include some brief background on the problem (probably not more than about 1 page, but I leave that discretion up to you), and then I can fill in the experimental details with pictures.  Please suggest visualizations you would like to see, if you have preferences.} 
% \IKTcomment{I noticed that I am using the boldtensors library to create bold matrices representing vectors/matrices, and you are not, Anthony.  We should probably make the notation consistent in the paper...}

\subsection{Three-dimensional Linear Elasticity} \label{subsec:3d_lin_elast}
The final example considered in this work involves a moderate size three-dimensional (3D) linear elasticity problem, given by the following equations of motion:
\begin{equation} \label{eq:3delasticity}
    \rho \ddot{\bb{q}} = \nabla \cdot  \bm{\sigma}, \hspace{0.5cm} \text{on } \Omega \in \mathbb{R}^3.
\end{equation}
In \eqref{eq:3delasticity}, $\bb{q} \in \mathbb{R}^3$ is the displacement vector, $\rho>0$ is the material density, and $\bm{\sigma}$ is the Cauchy stress tensor. We assume that the material is elastic and follows Hooke's law, so that the components of $\bm{\sigma}$ satisfy
\begin{equation} \label{eq:sigma}
    \sigma_{ij}:= \lambda \text{Tr}(\bm{\epsilon}) \delta_{ij} + 2\mu \epsilon_{ij}, \quad 1\leq i \leq j \leq 3,
\end{equation}
where $\lambda, \mu >0$ are the Lam\'{e} coefficients and 
\begin{equation}
    \bm{\epsilon} := \frac{1}{2} \left[ \nabla \bb{q} + (\nabla \bb{q})^\intercal \right]
    \end{equation}
    is the infinitesimal strain tensor.  It can be shown \cite{Marsden_BOOK_1998} that the Hamiltonian for \eqref{eq:3delasticity} can be expressed in terms of (noncanonical) position and velocity variables: 
\begin{equation} \label{eq:3d_ham_lin_elastic}
    H(\bb{q}, \dot{\bb{q}}) = \frac{1}{2}\int_{\Omega} \left(\rho |\dot{\qq}|^2  + 
    \lambda [\text{Tr}(\bm{\epsilon})]^2 + 2\mu \nn{\bm{\epsilon}}^2\right)dV.
\end{equation}
We remark that, in 1D, \eqref{eq:3d_ham_lin_elastic} reduces to the linear wave equation \eqref{eq:lin_wave_eqn} considered earlier in Section \ref{sec:lin_wave}.  The main purpose of this example is to demonstrate the utility of non-intrusive ROMs on an application in which the FOM is implemented within a large HPC code without embedded ROM capabilities, making intrusive model reduction infeasible.

\begin{figure}[htbp!]
        \begin{center}
                \subfigure[$t = 0$ s]{
      \includegraphics[width=0.25\textwidth]
                      {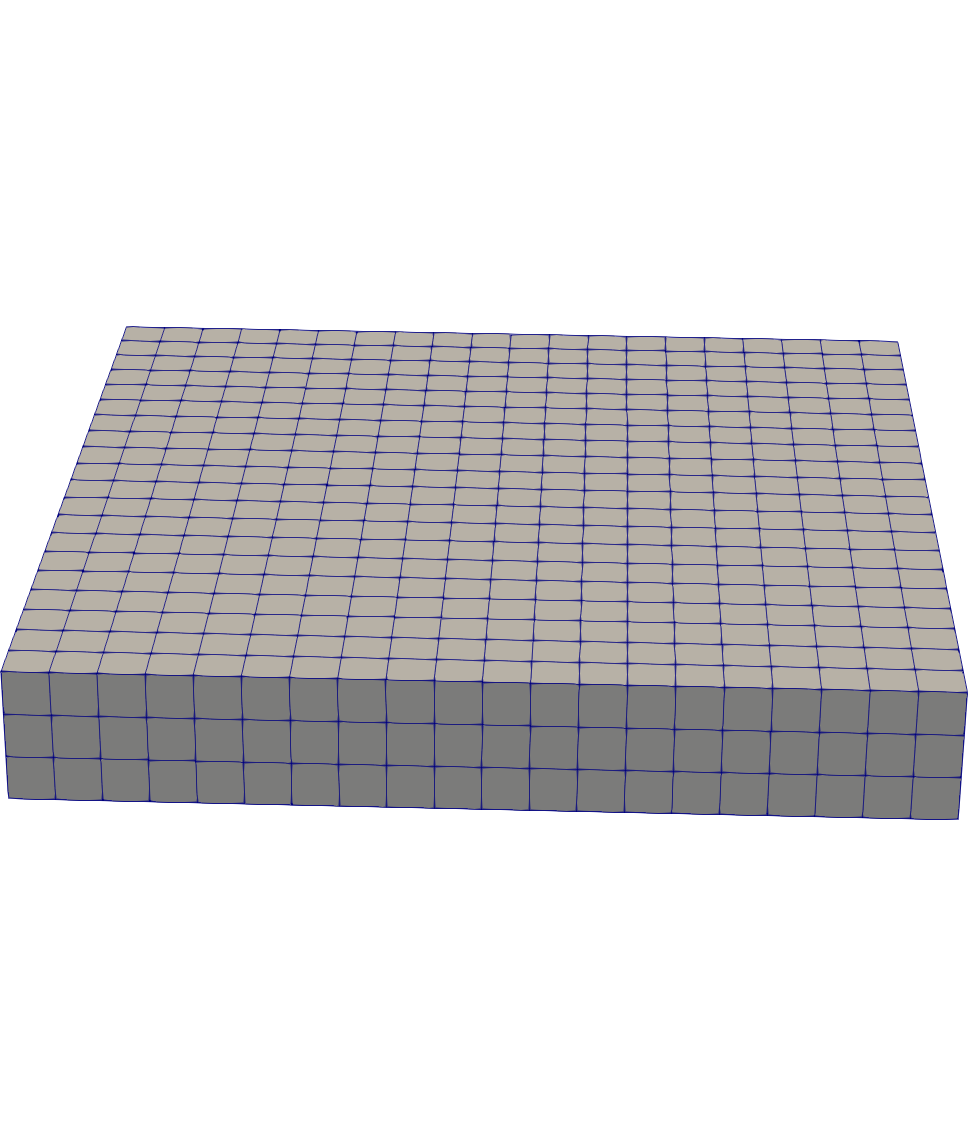}} \hspace{1cm}
    \subfigure[$ t = 1.0 \times 10^{-3}$ s]{
      \includegraphics[width=0.25\textwidth]
                      {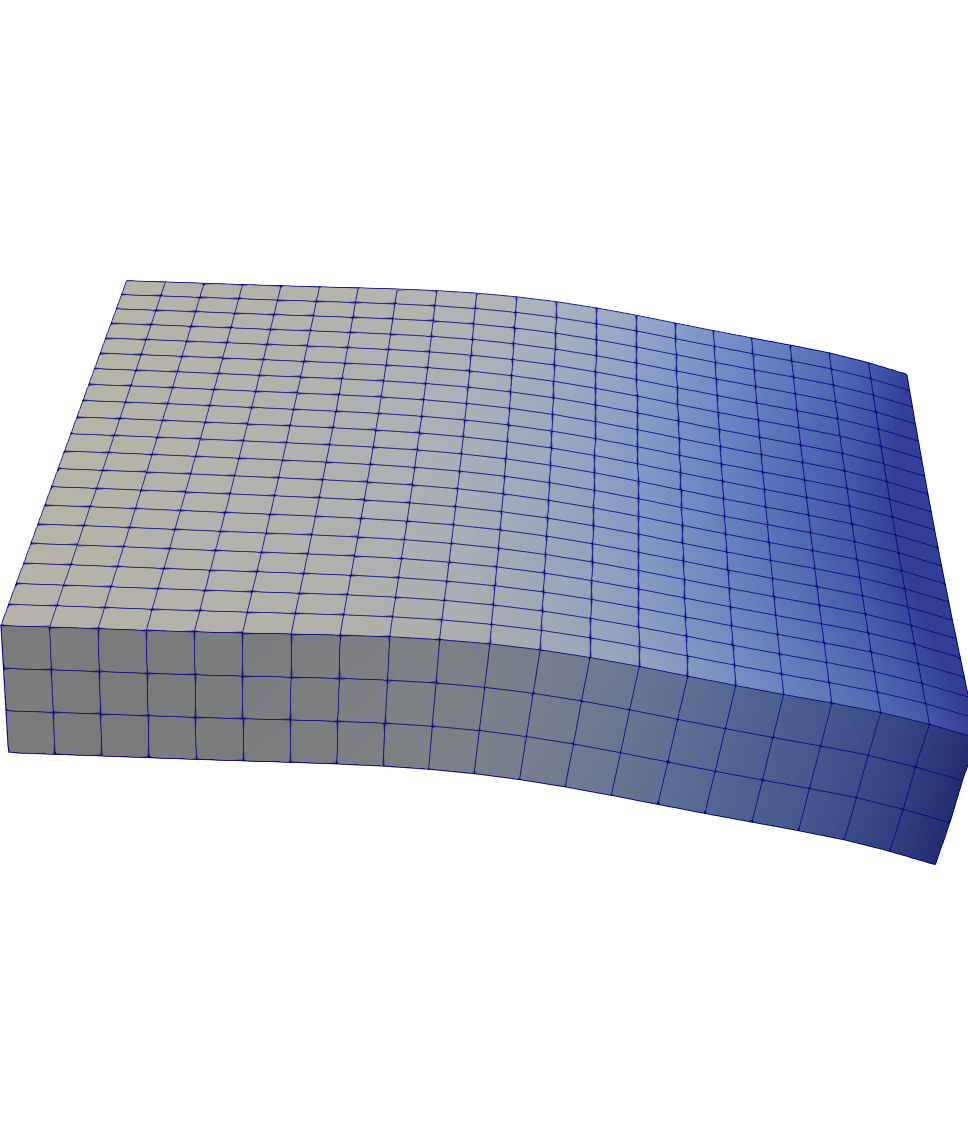}} \hspace{1cm}
                       \subfigure[$ t = 2.0 \times 10^{-3}$ s]{
      \includegraphics[width=0.25\textwidth]
                      {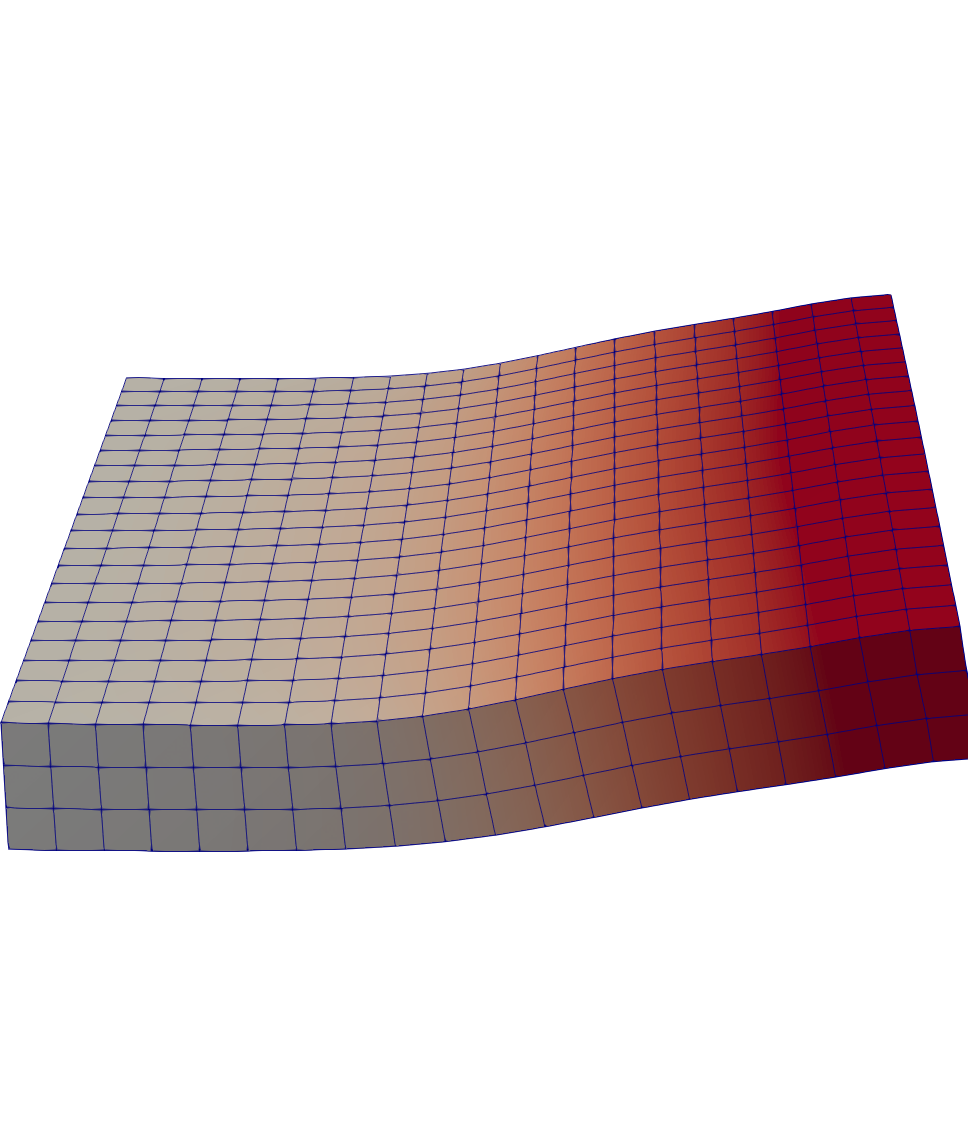}}
                       \subfigure[$ t = 9.0 \times 10^{-3}$ s]{
      \includegraphics[width=0.25\textwidth]
                        {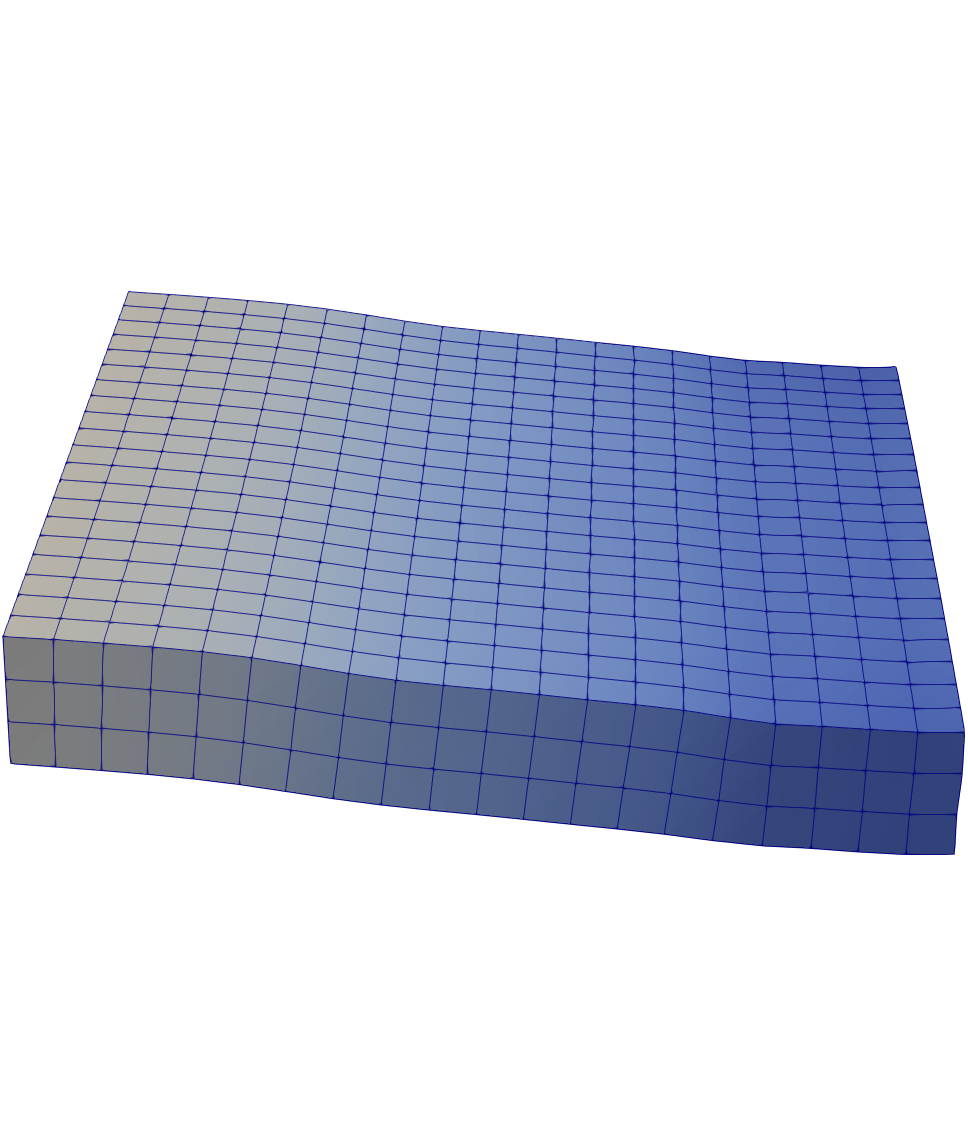}} \hspace{1cm}
                       \subfigure[$ t = 1.80 \times 10^{-2}$ s]{
      \includegraphics[width=0.25\textwidth]
                      {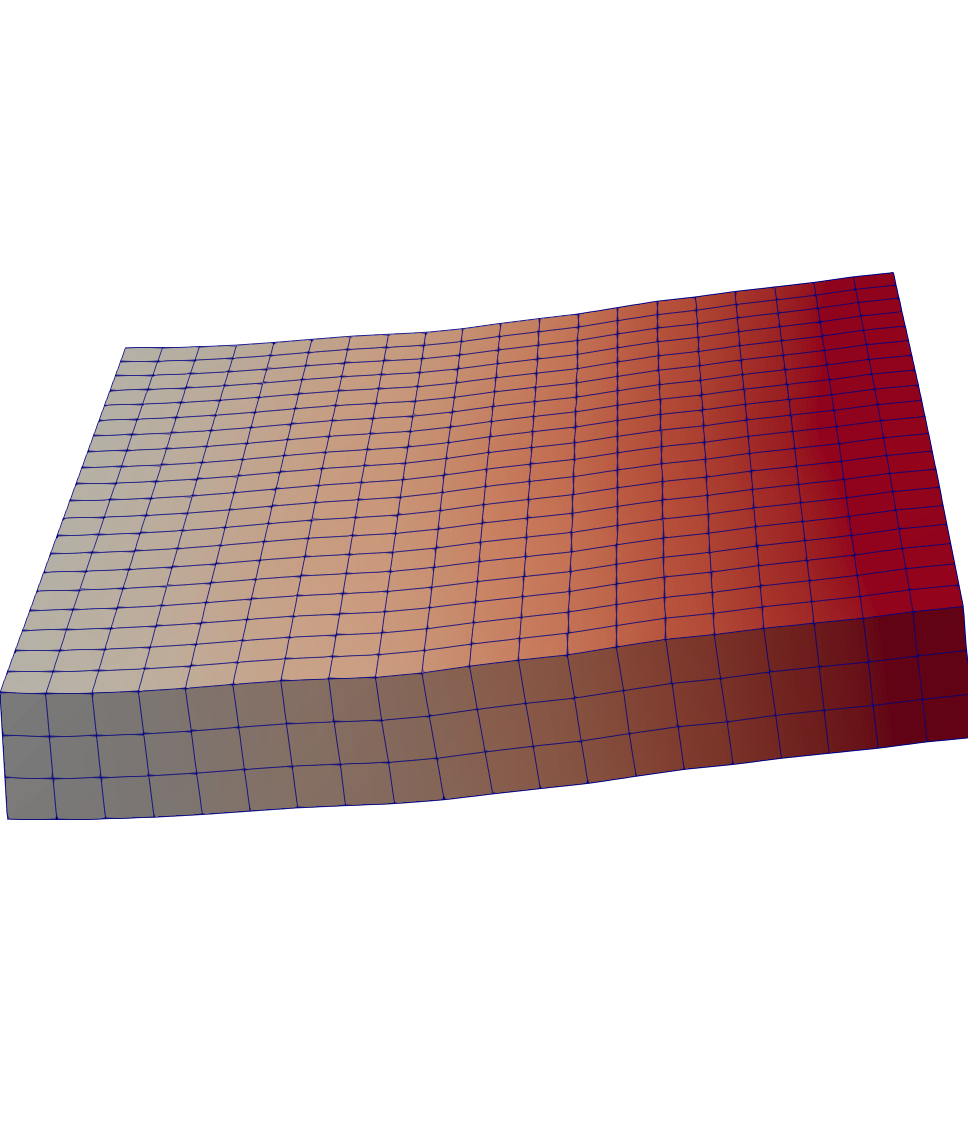}} \hspace{1cm}
      %                 \subfigure[$ t = 1.80 \times 10^{-2}$ s]{
      %\includegraphics[width=0.025\textwidth]
      %                {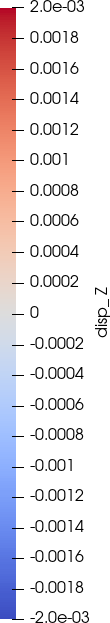}} 
        \end{center}
        \caption{Plot of FOM $s_3$--displacement, scaled by a factor of ten, at several times for the 3D linear elastic cantilever beam problem.  The colorbar range is $-2.3\times 10^{-3}$ m (blue) to $2.3 \times 10^{-3}$ m (red).}
        \label{fig:3d_elasticity_fom}
\end{figure}

% \IKTcomment{Note that one can also write \eqref{eq:3d_ham_lin_elastic} using the Einstein notation, which sums over repeated indices.  Anthony, if you like that better, feel free to change the equation.}  

Herein, equation \eqref{eq:3delasticity} is assumed to be discretized in space using the finite element method (FEM), per common practice in the field of solid mechanics.  Doing so gives a semi-discrete system of the form
\begin{equation} \label{eq:lin_elasticity_discrete}
    \bb{M} \ddot{\qq} + \bb{K} \qq = \bb{0}, 
\end{equation}
where (overloading notation) $\qq \in \mathbb{R}^M$ is the discretized displacement field, and $\bb{M} \in \mathbb{R}^{M \times M}$ and $\bb{K} \in \mathbb{R}^{M \times M}$ are the mass and stiffness matrices, respectively.  
% \IKTcomment{You can't define $\qq$ to be in $\mathbb{R}^M$ and $\qq\in\mathbb{R}^3$...  this is why I did not like the overloaded notation.}
Letting $\pp := \bb{M} \dot{\qq}$ denote the (overloaded) momentum, $N=2M$, and defining $\xx := \begin{pmatrix}\qq & \pp \end{pmatrix}^\intercal \in \mathbb{R}^{N}$, \eqref{eq:lin_elasticity_discrete} can be written as the following canonical Hamiltonian system: 
\begin{equation} \label{eq:sm_first_order_sys}
    \dot{\xx} = \JJ \nabla H(\xx) = \left( \begin{array}{cc}
    \bb{0} & \bb{I} \\
    -\bb{I} & \bb{0} 
    \end{array}\right) \left( \begin{array}{cc}
    \bb{K} & \bb{0} \\
    \bb{0} & \bb{M}^{-1} 
    \end{array}\right) \bb{x},
\end{equation}
where $H$ is a quadratic discrete Hamiltonian of the form 
\begin{equation}
    H(\bb{x}) = \frac{1}{2} \left(\qq^\intercal \bb{K}\qq + \pp^\intercal \bb{M}^{-1}\pp \right).
\end{equation}

% \IKTcomment{I added a different symbol for the discretized $\qq$ to avoid confusion.  Anthony, it is your call whether you want to keep this or to ``overload" $\qq$.}

The test case considered presently is a classical solid mechanics benchmark involving a vibrating rectangular cantilever plate of size 0.2 $\times$ 0.2 $\times$ 0.03 meters, so that $\Omega = (0, 0.2) \times (0,0.2) \times (0, 0.03) \in \mathbb{R}^3$.  Let $\bb{s}^\intercal:=(s_1, s_2, s_3)^\intercal \in \mathbb{R}^3$ denote the coordinate (position) vector.  Here,  the left side of the plate is clamped, meaning that a homogeneous Dirichlet boundary condition $\qq = \bb{0}$ is imposed on $\Gamma_l:=\{s_2, s_3 \in \bar{\Omega}: s_1 = 0\}$.  Homogeneous Neumann boundary conditions are prescribed on the remaining boundaries of $\Omega$, indicating that these boundaries are free surfaces.  The problem is initialized by prescribing an initial velocity of 100 m/s in the $s_3$-direction on the right boundary of the domain, $\Gamma_r:=\{s_2,s_3 \in \bar{\Omega}:s_1 = 0.2\}$:
\begin{equation} \label{eq:init_velo}
    \dot{\qq}(\bb{s}, 0) = \left( \begin{array}{c}
    0 \\ 0 \\ 100
    \end{array}\right), \hspace{0.5cm} \text{for } \bb{s} \in \Gamma_r.
\end{equation}
A one-dimensional cartoon illustrating the problem setup is shown in Figure \ref{fig:cartoon_3d_elast}.  The initial velocity perturbation \eqref{eq:init_velo} will cause the plate to vibrate and undergo a flapping motion, as shown in Figure \ref{fig:3d_elasticity_fom}.  As the 
plate vibrates, waves will form and propagate in all three coordinate directions within the plate.  Assuming the plate is made of steel, the material parameters\footnote{It is straightforward to calculate the Lam\'{e} coefficients appearing in \eqref{eq:sigma} from the Young's modulus $E$ and the Poisson ratio $\nu$ using the formulas $\lambda = \frac{E\nu}{(1+\nu)(1-2\nu)}$ and $\mu = \frac{E}{2(1+\nu)}$.} are as follows: $E = 200$ GPa (Young's modulus), $\nu = 0.25$ (Poisson's ratio), and $\rho = 7800$ kg/m$^3$ (density).

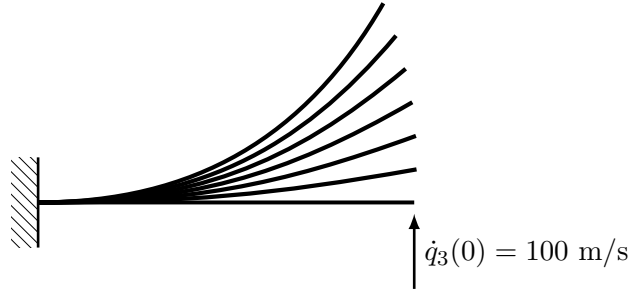
\begin{figure}[htbp!]
        \begin{center}
  \begin{tikzpicture}
    %the points
    \point{begin}{0}{0};
    \point{middle}{2.5}{0};
    \point{end}{5}{0};
   \point{end2}{5}{-1}
    %the beam
    \beam{2}{begin}{end};
    %the support
    \support{3}{begin}[-90];
    %the load
 %   \load{1}{middle}[90];
    \load{1}{end}[-90];
    %the inscription of the load
   % \notation{1}{middle}{$F_1$};
    \notation{1}{end2}{$\dot{q}_{3}(0)=100$ m/s};
    %the deflection curves
% without correction
%    \foreach [evaluate={\in=180-\b*2}] \b in {5,10,...,30}
%      \draw[red,-, ultra thick] (begin) to[out=0,in=\in] (-\b:5);
% quater circle with full radius
%    \draw[red] (begin) -- (end) arc (0:-90:5) -- cycle;
% polar coordinates with some correction of the radius to account for bend
    \foreach [evaluate={\in=180+\b*2}] \b in {5,10,...,30}{
      \draw[-, ultra thick] (begin) to[out=0,in=\in] (\b:5+\b*0.01);
% quater circles with sortened radius
%      \draw[red] (begin) -- (5-\b*0.01,0) arc (0:-90:5-\b*0.01) -- cycle;
    }
  \end{tikzpicture} 
  \caption{One-dimensional cartoon illustrating 3D linear elastic cantilever plate problem setup.}
  \label{fig:cartoon_3d_elast}
  \end{center}
  \end{figure} 

\begin{figure}[htb]
    \centering
    \includegraphics[width=\textwidth]{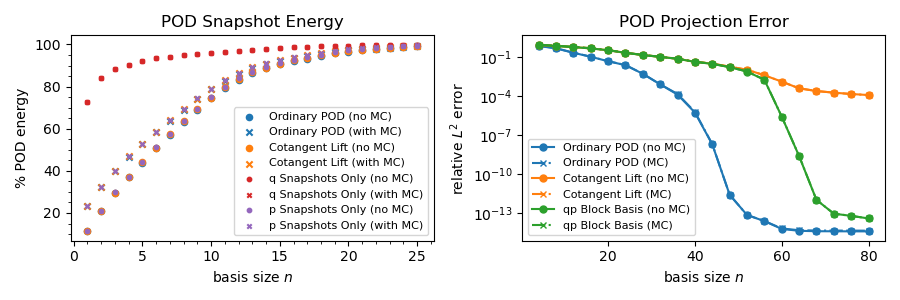}
    \caption{POD snapshot energies (left) and  projection errors (right) corresponding to the bases used in the 3D cantilever plate example.  ``MC'' indicates that mean-centering of the snapshots was performed.}
    \label{fig:SMpodEnergy}
\end{figure}

To build the full order model from which our non-intrusive OpInf ROMs are constructed, we utilize the open-source\footnote{Albany-LCM is available on github at the following URL: \url{https://github.com/sandialabs/LCM}.} Albany-LCM multi-physics code base \cite{albany, Mota:2017, Mota:2022}\footnote{For details on how to reproduce the results in this subsection, the reader is referred to Section \ref{sec:repro}.} and discretize the domain $\Omega$ with a uniform mesh of $20 \times 20 \times 3$ hexahedral elements.  To generate snapshots, the FOM system \eqref{eq:lin_elasticity_discrete} is advanced forward from time $t=0$ to time $t = 2 \times 10^{-2}$ s using a symplectic implicit Newmark time-stepping scheme with parameters $\beta = 0.25$ and $\gamma = 0.5$, and time-step $\Delta t = 1.0 \times 10^{-4}$ s.  Plots of the $s_3$ component of the displacement 
%scaled by a factor of 10
are shown at several different times in Figure \ref{fig:3d_elasticity_fom}.  
The resulting 201 snapshots, each of length 5292, are used to build POD bases of varying sizes, from 4 to 100 POD modes.  Figure~\ref{fig:SMpodEnergy} shows the snapshot energies and reconstruction errors of these bases as a function of basis modes.  Once the POD bases are constructed, several intrusive and non-intrusive ROMs are created and evaluated as discussed earlier in this manuscript.  All ROMs are evaluated in the time-predictive regime, by integrating the governing system forward in time until $t = 0.1$ s ($5 \times$ longer than the training time).  

\begin{figure}[htb]
    \centering
    \includegraphics[width=\textwidth]{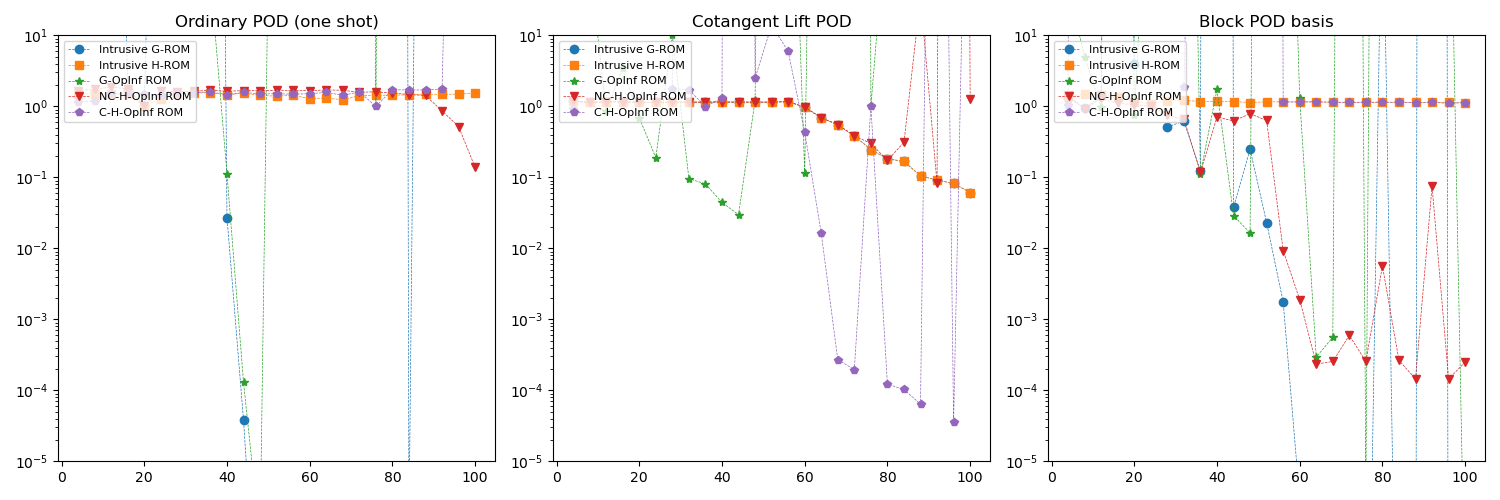}
    \caption{Relative state errors as a function of basis modes for the ROMs in the 3D cantilever plate example (predictive case $T=0.1$). ``MC'' indicates the use of a mean-centered reconstruction.}
    \label{fig:SMerrorsPred}
\end{figure}

\begin{landscape}
\begin{table}[htbp]
% \centering
\caption{Table corresponding to the plots in Figure~\ref{fig:SMerrorsPred}, corresponding to the 3D cantilever plate example and showing the ROM errors as a function of basis size. Dashes indicate lack of convergence.}
\label{tab:SMerrors}\rev{
\begin{adjustbox}{max width=1.3\textwidth}
\begin{tabular}{|ll|lllllllllllll|}
\hline
\multicolumn{1}{|l|}{\textbf{POD Basis}} & \textbf{ROM Type} & \multicolumn{13}{c|}{\textbf{basis size $n$}} \\ \hline
\multicolumn{2}{|l|}{} & 4 & 12 & 20 & 28 & 36 & 44 &  52 & 60 & 68 & 76 & 84 & 92 & 100 \\ \hline
\multicolumn{1}{|l|}{\multirow{5}{*}{Ordinary POD}} & Intrusive G-ROM & \num{5.67e+02} & \num{2.30e+10} & \num{1.00e+00} & \num{2.66e+10} & \num{1.43e+24} & \num{3.79e-05} & - & - & \num{1.72e+23} & \num{5.56e+23} & \num{1.25e-06} & \num{8.71e+02} & \num{6.56e+04} \\
\multicolumn{1}{|l|}{} & Intrusive H-ROM & \num{1.50e+00} & \num{1.50e+00} & \num{1.00e+00} & \num{1.51e+00} & \num{1.56e+00} & \num{1.53e+00} & \num{1.39e+00} & \num{1.28e+00} & \num{1.23e+00} & \num{1.43e+00} & \num{1.45e+00} & \num{1.47e+00} & \num{1.53e+00} \\
\multicolumn{1}{|l|}{} & G-OpInf ROM & \num{8.58e+19} & \num{1.51e+22} & - & \num{2.75e+05} & \num{6.55e+01} & \num{1.29e-04} & \num{2.46e+10} & \num{4.13e+34} & - & \num{1.00e+00} & \num{3.78e+146} & \num{1.05e+44} & \num{6.77e+55} \\
\multicolumn{1}{|l|}{} & NC-H-OpInf ROM & \num{1.66e+00} & \num{1.87e+00} & \num{1.00e+00} & \num{1.60e+00} & \num{1.68e+00} & \num{1.66e+00} & \num{1.69e+00} & \num{1.66e+00} & \num{1.68e+00} & \num{1.60e+00} & \num{1.47e+00} & \num{8.60e-01} & \num{1.38e-01} \\
\multicolumn{1}{|l|}{} & C-H-OpInf ROM & \num{1.16e+00} & \num{1.48e+00} & \num{1.51e+00} & \num{1.49e+00} & \num{1.59e+00} & \num{1.57e+00} & \num{1.50e+00} & \num{1.51e+00} & \num{1.43e+00} & \num{1.00e+00} & \num{1.70e+00} & \num{1.74e+00} & \num{5.09e+03} \\ \hline
\multicolumn{1}{|l|}{\multirow{5}{*}{Cotangent Lift}} & Intrusive G-ROM & \num{1.18e+00} & \num{1.16e+00} & \num{1.14e+00} & \num{1.14e+00} & \num{1.15e+00} & \num{1.15e+00} & \num{1.15e+00} & \num{9.75e-01} & \num{5.51e-01} & \num{2.46e-01} & \num{1.68e-01} & \num{9.14e-02} & \num{6.10e-02} \\
\multicolumn{1}{|l|}{} & Intrusive H-ROM & \num{1.18e+00} & \num{1.16e+00} & \num{1.14e+00} & \num{1.14e+00} & \num{1.15e+00} & \num{1.15e+00} & \num{1.15e+00} & \num{9.75e-01} & \num{5.51e-01} & \num{2.46e-01} & \num{1.68e-01} & \num{9.14e-02} & \num{6.10e-02} \\
\multicolumn{1}{|l|}{} & G-OpInf ROM & \num{4.83e+01} & \num{8.50e-01} & \num{6.88e-01} & \num{1.00e+01} & \num{8.07e-02} & \num{2.97e-02} & \num{1.90e+21} & \num{1.15e-01} & \num{6.08e+09} & \num{1.00e+00} & \num{7.34e+35} & \num{1.90e+60} & \num{5.65e+121} \\
\multicolumn{1}{|l|}{} & NC-H-OpInf ROM & \num{1.18e+00} & \num{1.16e+00} & \num{1.14e+00} & \num{1.14e+00} & \num{1.15e+00} & \num{1.15e+00} & \num{1.15e+00} & \num{9.72e-01} & \num{5.50e-01} & \num{3.01e-01} & \num{3.12e-01} & \num{8.21e-02} & \num{1.25e+00} \\
\multicolumn{1}{|l|}{} & C-H-OpInf ROM & \num{9.17e-01} & \num{4.01e+90} & \num{1.22e+05} & \num{1.77e+00} & \num{9.91e-01} & \num{3.15e+67} & \num{1.39e+01} & \num{4.34e-01} & \num{2.70e-4} & \num{1.00e+00} & \num{1.03e-4} & \num{1.48e+14} & \num{5.62e+06} \\
 \hline
\multicolumn{1}{|l|}{\multirow{5}{*}{Block $(q,p)$}} & Intrusive G-ROM & \num{1.32e+00} & \num{1.39e+00} & \num{3.95e+00} & \num{5.12e-01} & \num{1.22e-01} & \num{3.79e-02} & \num{2.30e-02} & \num{2.92e-06} & \num{3.03e-11} & \num{2.94e-11} & \num{2.91e-11} & \num{1.29e+127} & \num{1.75e-11} \\
\multicolumn{1}{|l|}{} & Intrusive H-ROM & \num{1.38e+00} & \num{1.46e+00} & \num{1.16e+00} & \num{1.19e+00} & \num{1.17e+00} & \num{1.17e+00} & \num{1.16e+00} & \num{1.15e+00} & \num{1.14e+00} & \num{1.14e+00} & \num{1.14e+00} & \num{1.13e+00} & \num{1.12e+00} \\
\multicolumn{1}{|l|}{} & G-OpInf ROM & \num{4.74e+01} & \num{9.65e-01} & \num{7.61e-01} & \num{5.46e+80} & \num{1.12e-01} & \num{2.89e-02} & \num{1.93e+28} & \num{1.31e+00} & \num{5.59e-04} & \num{4.05e-06} & \num{4.10e+60} & \num{2.45e+43} & \num{5.95e-08} \\
\multicolumn{1}{|l|}{} & NC-H-OpInf ROM & \num{1.43e+00} & \num{1.29e+00} & \num{1.06e+00} & \num{7.49e-01} & \num{1.17e-01} & \num{6.23e-01} & \num{6.32e-01} & \num{1.90e-03} & \num{2.55e-04} & \num{2.55e-04} & \num{2.63e-04} & \num{7.50e-02} & \num{2.50e-04} \\
\multicolumn{1}{|l|}{} & C-H-OpInf ROM & \num{1.05e+00} & \num{1.68e+69} & \num{4.29e+36} & - & \num{1.55e+10} & \num{1.44e+05} & \num{5.04e+76} & \num{1.15e+00} & \num{1.14e+00} & \num{1.14e+00} & \num{1.14e+00} & \num{1.14e+00} & \num{1.13e+00} \\ \hline
\end{tabular}
\end{adjustbox}}
% \end{table}
% \end{minipage}}
\end{table}

\end{landscape}
\restoregeometry

The results of this experiment are displayed in Figure~\ref{fig:SMerrorsPred} and Table~\ref{tab:SMerrors}.  Clearly, the ROMs are highly sensitive to the basis construction technique as well as the number of modes used.  While the intrusive G-ROM and G-OpInf ROMs constructed with a block $(q,p)$ basis yield the lowest minimum errors, they are highly volatile, exhibiting unpredictable behavior as basis modes are added.  \rev{Conversely, the the NC-H-OpInf ROM constructed with a block $(q,p)$ basis and the C-H-OpInf ROM constructed with a cotangent lift basis exhibit some attempts at convergence, although still with significant oscillations.}
%Conversely, the NC-H-OpInf ROM constructed with a block $(q,p)$ basis exhibits more even convergence behavior, \rev{despite some oscillation}, and the same is true of the C-H-OpInf ROM constructed with a cotangent lift basis.
It is interesting to note that the intrusive H-ROM represents a different extreme \rev{with all choices of bases}: it is perfectly stable with the addition of modes, but exhibits $\mathcal{O}(1)$ errors unless a cotangent lift basis is used.  It is further remarkable that the ROM errors in the reproductive case $T=0.02$ (not pictured) are slightly lower (within one order of magnitude), but their stability properties remain unchanged.

\rev{\begin{remark}
    While not pictured here, we have observed that intrusive ROMs based directly on the second-order Euler-Lagrange equations \eqref{eq:lin_elasticity_discrete} do not suffer from the same degree of instability with respect to the addition of basis modes as those based on the first-order Hamiltonian formulation \eqref{eq:sm_first_order_sys}.  This could be due to the fact that Galerkin projection of Lagrangian systems onto a reduced basis automatically respects energy conservation, which is not true in the Hamiltonian case, where an additional corrective projection is needed.  
\end{remark}}

For another visualization, Figure \ref{fig:disp_mags_and_errs} shows plots of the displacement magnitude at the final time $t = 0.1$ s for the FOM (a) and various OpInf ROMs (b)--(d).  Here, we showcase ``best-case scenarios" for each ROM: (b) a G-OpInf ROM with \rev{100} POD modes calculated via the $(q,p)$-block basis approach, (c) a C-H-OpInf ROM with \rev{96} POD modes calculated via the cotangent lift basis, and (d) an NC-H-OpInf ROM with \rev{96} POD modes calculated via the $(q,p)$-block basis approach.  The reader can observe that each ROM is capable of producing solutions which are visually indistinguishable from the FOM solution (see subplots (b)--(d)), although their error distribution patterns are quite different (see subplots (e)--(g)).  We emphasize that, while the G-OpInf ROM is the most accurate, it is also by far the most sensitive \rev{to the size of the reduced basis} (see e.g. Figure~\ref{fig:SMerrorsPred}): \rev{there is no visible trend in terms of the basis size, in contrast with the NC-H-OpInf and C-H-OpInf ROMs which are still volatile but roughly decreasing.  Additionally, it is likely that the results seen here could be improved somewhat by regularizing the OpInf problem in some way; since the choice of regularization technique is a non-obvious matter which is currently under active investigation (e.g., \cite{mcquarrie2021data}), this is left for future work.}

\begin{figure}[h!tb]
 \centering
   \begin{tabular}{ccc}
   & \includegraphics[width=0.3\textwidth]{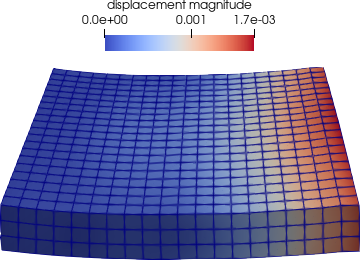} \hspace{0.5cm} \vspace{0.5cm}  \\
     & (a) FOM \vspace{0.5cm} \\
 \includegraphics[width=0.3\textwidth]{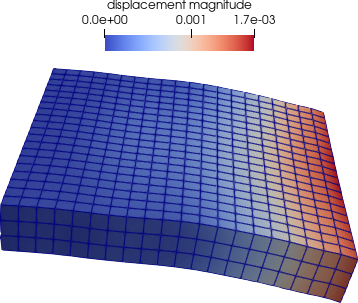} \hspace{0.5cm} & 
  \includegraphics[width=0.3\textwidth]{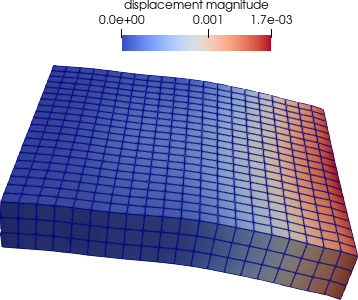} \hspace{0.5cm} 
  &  \includegraphics[width=0.3\textwidth]{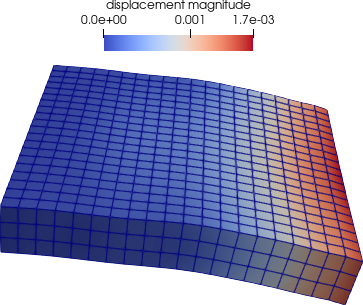} \vspace{0.5cm}\\
    (b) G-OpInf ROM \vspace{0.2cm} & (c) C-H-OpInf ROM \vspace{0.2cm}& (d) NC-H-OpInf ROM \vspace{0.2cm} \\
   \includegraphics[width=0.3\textwidth]{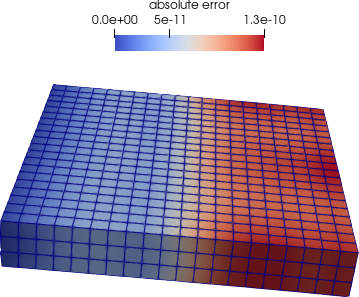} \hspace{0.5cm} & 
  \includegraphics[width=0.3\textwidth]{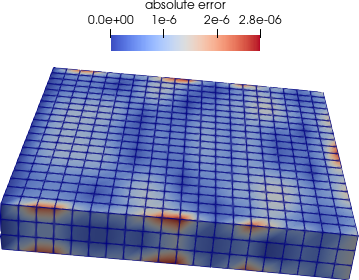} \hspace{0.5cm} 
  &  \includegraphics[width=0.3\textwidth]{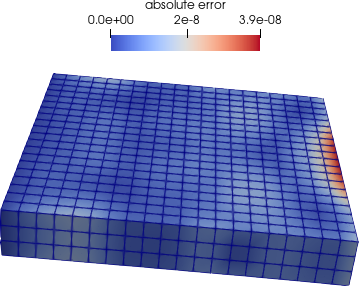} \vspace{0.5cm}  \\
   (e) G-OpInf ROM \vspace{0.1cm} & (f)  C-H-OpInf ROM\vspace{0.1cm}& (g) NC-H-OpInf ROM\vspace{0.1cm} \\
   \\
  \end{tabular}
  \caption{``Best case'' plots of the displacement magnitude at the final time $t = 0.1$ s for the FOM (a) and various OpInf ROMs (b)--(d) for the 3D linear elastic cantilever plate problem.  Subplots (e)--(g) show the spatial distribution of the absolute errors in the displacement magnitude for the various ROMs evaluated, again at the final time $t=0.1$ s.}
  \label{fig:disp_mags_and_errs}
\end{figure}

% Table~\ref{tab:SMreg} and Figure~\ref{fig:SMregularization} studies the effect of the Tikhonov regularization parameter $\eta$ in the OpInf methods presently considered.  While its influence seems to be basis-dependent in all cases, it is clear that less regularization is better for the C-H-OpInf and NC-H-OpInf ROMs (although some is often necessary to avoid a singular left-hand side).  Conversely, the performance of G-OpInf appears to obey this same trend only for the ordinary POD basis, instead displaying a high sensitivity to the order of $\eta$ whenever a cotangent lift or block $(q,p)$ basis is used (this sensitivity is investigated more thoroughly in \cite{mcquarrie2021data}).  This suggests a greater degree of stability for the Hamiltonian structure-preserving OpInf methods developed here, corroborating what was observed in the error plots from before.
To test conservation, Figure~\ref{fig:SMconserved} plots the errors in the value of the Hamiltonian.  Again, it can be seen that the C-H-OpInf and NC-H-OpInf preserve this quantity much better than the G-OpInf ROM, even in cases where the G-OpInf ROM is more accurate.  Unsurprisingly, the conservation properties of the intrusive H-ROM are still superior in all cases, although it is remarkable that this does not always translate to better accuracy in the ROM solution. \rev{This could be due to the fact that the H-ROMs require an additional projection step onto the column space of $\uu$, limiting their accuracy in order to gain exact property preservation.}
% \IKTcomment{Should we cite Shane's work on ``optimal" regularization \cite{mcquarrie2021data}?}

% \begin{figure}[htb]
%     \centering
%     \includegraphics[width=\textwidth]{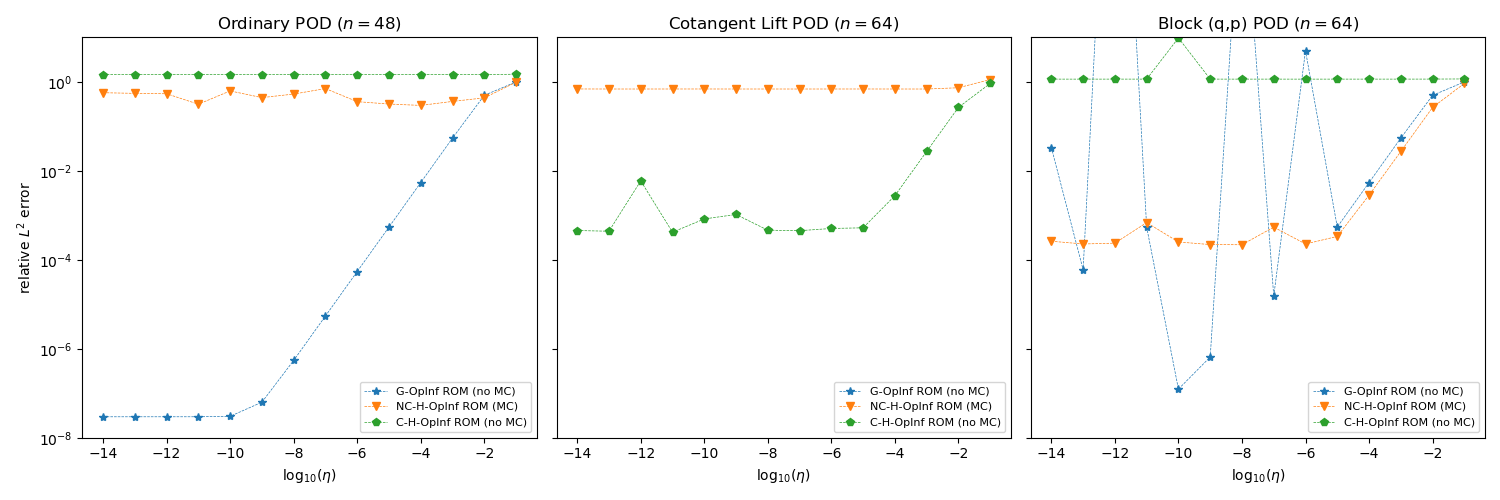}
%     \caption{The effect of the regularization parameter $\eta$ on the OpInf ROMs in the 3D cantilever beam example for three different POD bases. ``MC'' indicates the use of a mean-centered reconstruction. }
%     \label{fig:SMregularization}
% \end{figure}

\begin{figure}[htb]
    \centering
    \includegraphics[width=\textwidth]{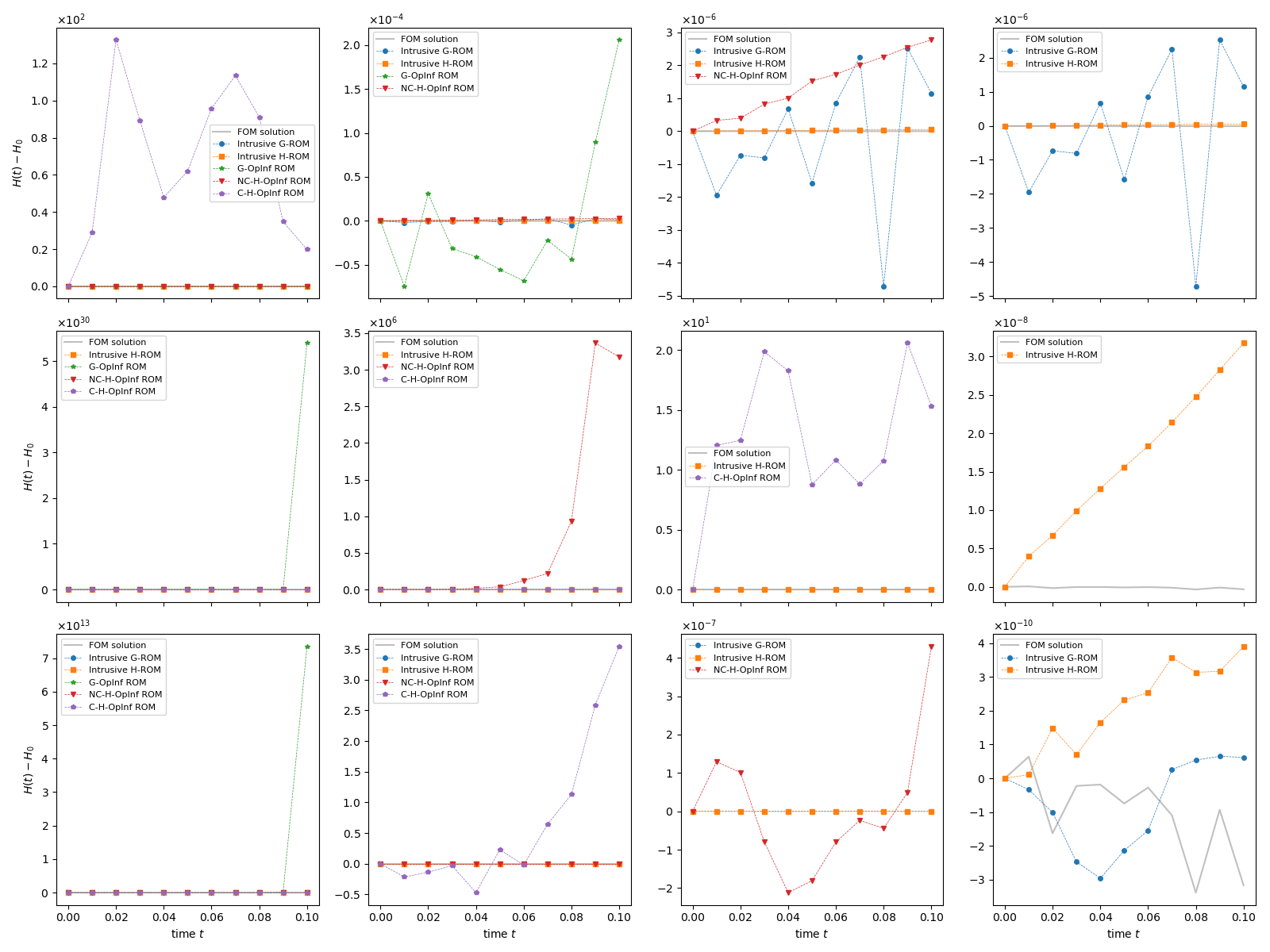}
    \caption{Plots of the error in the Hamiltonian for three different simulations corresponding to the 3D cantilever beam example.  (Top) Ordinary POD basis, $n=48$ modes; (Middle) Cotangent Lift POD basis, \rev{$n=84$} modes; (Bottom) Block $(q,p)$ POD basis, \rev{$n=92$} modes.}
    \label{fig:SMconserved}
\end{figure}

\section{Conclusions and Future Work} \label{sec:concl}
Two gray-box operator inference (OpInf) methods for the nonintrusive model reduction of Hamiltonian dynamical systems have been introduced, and their utility has been demonstrated on several canonical and noncanonical benchmarks.  Being provably convergent to their intrusive counterparts in the limit of infinite data, these OpInf ROMs are shown to recover desirable properties of carefully built intrusive Hamiltonian ROMs such as improved energy conservation without requiring access to FOM simulation code, making them flexible to deploy and leading to improved performance over generic OpInf techniques in reproductive and predictive problems.  Moreover, the technique introduced here has been shown to strictly generalize previous state-of-the-art work on Hamiltonian OpInf methods, reducing to it when the Hamiltonian system in question is canonical, the basis used is a cotangent lift, and the operator to be inferred is block diagonal.

Despite the improvements made here, there are plenty of avenues for future work in the area of Hamiltonian model reduction.  First, the gray-box requirement that the nonlinear part of the Hamiltonian system is known can be feasibly removed when this nonlinearity is polynomial, making the Hamiltonian OpInf methods described  potentially black-box in this case.  Similarly, it would be interesting to apply this technique to systems which have a known conserved quantity but no known Hamiltonian structure, to see if the NC-H-OpInf ROM which preserves this quantity is more accurate and predictively useful than a generic OpInf ROM.  Additionally, it is clear that all structure-preserving ROM techniques to date, intrusive or OpInf, are quite sensitive to basis size when problems become large with complex dynamics.  It would be useful to have stabilized techniques which produce ROMs with more predictable convergence behavior and which do not destroy the delicate mathematical structure important for long-term behavior of the FOM system.  Finally, it would be interesting to extend the techniques mentioned here to quadratic POD bases as well as more general Lie-Poisson variational problems.

% \IKTcomment{Should we mention extensions to quadratic bases and the OpInf for second-order-in-time Euler-Lagrange equations as logical next steps of this work?  It's fine if you don't want to mention it out of fear of being scooped.}

\section{Reproducibility and software availability} \label{sec:repro}

The numerical results presented in Section \ref{subsec:3d_lin_elast}  were generated by running the Albany-LCM open-source
HPC code, available for download on github at the following URL: \url{https://github.com/sandialabs/LCM}. The Albany-LCM code has a strong dependency on Trilinos, available at: \url{https://github.com/trilinos/Trilinos}.  The following shas for Albany-LCM and Trilinos were used in generating the results herein:\\ {\tt 79d3a68bd6176c80ae10e693cd15b1040f6be10f}  and\\ {\tt 322132d613777d48b85d70ed95c0ff4a07c8aed0}, respectively.  
To ensure transparency and reproducibility, we have made available the Albany-LCM input files needed to
reproduce our results, as well as configure scripts for Albany-LCM and Trilinos. These input files can be downloaded from the following github repository: \url{https://github.com/ikalash/HamiltonianOpInf}.  The handwritten Python files and scripts for reproducing the results in Section~\ref{sec:numerics} can be found in this same repository. 

% \IKTcomment{Anthony TODO: put Python code / Jupyter notebooks into this repo.}

\section{Acknowledgement}\label{sec:acknowl}

Support for this work was received
through the U.S. Department of Energy, Office of Science, Office of Advanced Scientific Computing Research, Mathematical Multifaceted Integrated Capability Centers (MMICCS) program, under Field Work Proposal 22025291 and the Multifaceted Mathematics for Predictive Digital Twins (M2dt) project.  The work of the first author (Anthony Gruber) was additionally supported by the John von Neumann Fellowship at Sandia National Laboratories.  The writing of this manuscript was funded in part by the second author’s (Irina Tezaur's) Presidential Early Career Award for Scientists and Engineers (PECASE). 

Sandia National Laboratories is a multi-mission laboratory managed and operated by National Technology and Engineering Solutions of Sandia, LLC., a wholly owned subsidiary of Honeywell International, Inc., for the U.S. Department of Energy’s National Nuclear Security Administration under contract DE-NA0003525.

The authors wish to thank Alejandro Mota for assisting with the formulation of the 3D elasticity problem described herein, and Max Gunzburger for helpful suggestions regarding the choice of basis and treatment of boundary conditions. 

%The authors would also like to thank Eric Parish
%for providing useful feedback on the first full draft of this work, which resulted in an improved
%manuscript.
%References

% \section*{Acknowledgments}
% This work is partially supported by U.S. Department of Energy Scientific Discovery through Advanced Computing under grants DE-SC0020270 and DE-SC0020418.

\bibliographystyle{siam}
\bibliography{biblio}

\section{Appendix}

\subsection{Appendix A: Kronecker Products and Vectorization}\label{app:kron}

We briefly recall some properties of the Kronecker product which are necessary for the results in the body.  Interested readers can find more details in, e.g., \cite{van2000ubiquitous}.  Let $\bb{A}\in\mathbb{R}^{m\times n}, \bb{B}\in\mathbb{R}^{p\times q}$.  The Kronecker product $\bb{A}\otimes\bb{B}\in\mathbb{R}^{mp\times nq}$ is then the matrix of size $mp\times nq$ whose $i,j$-th block (of size $p\times q$) is given by $\lr{\bb{A}\otimes\bb{B}}^i_j = a^i_j\bb{B}$.  It is straightforward to show that $\otimes$ is the matricization of the usual tensor product when expressed with respect to a lexicographical ordering of the standard bases for $\mathbb{R}^n\otimes\mathbb{R}^q$ and $\mathbb{R}^m\otimes\mathbb{R}^p$, since $\lr{\bb{A}\otimes\bb{B}}\lr{\xx\otimes\bb{y}} = \bb{A}\xx\otimes\bb{By}$ for any $\xx\in\mathbb{R}^n$ and $\bb{y}\in\mathbb{R}^q$.  Moreover, there is a linear vectorization operator ``vec'' which stacks the columns of a matrix into a long vector, i.e. $A_{ij} = \lr{\vect\bb{A}}_{m(j-1)+i}$. Since vectorization is obviously invertible, this allows for the following computationally convenient reformulation of linear systems with matrix unknowns.  

\begin{theorem}[Vec trick]
$\vect\lr{\bb{AXB}}=\lr{\bb{B}^\intercal\otimes\bb{A}}\vect\bb{X}$.
\end{theorem}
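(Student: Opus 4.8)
The plan is to exploit bilinearity and reduce everything to the rank-one case, where the claim becomes a direct application of the mixed-product property already recorded in this appendix. Fix dimensions $\bb{A}\in\mathbb{R}^{m\times n}$, $\bb{X}\in\mathbb{R}^{n\times p}$, and $\bb{B}\in\mathbb{R}^{p\times q}$, so that $\bb{AXB}\in\mathbb{R}^{m\times q}$ and both $\vect\lr{\bb{AXB}}$ and $\lr{\bb{B}^\intercal\otimes\bb{A}}\vect\bb{X}$ live in $\mathbb{R}^{mq}$. The first observation I would make is that both sides of the asserted identity are linear in $\bb{X}$: the left side is the composition of the linear map $\bb{X}\mapsto\bb{AXB}$ with the linear operator $\vect$, and the right side is manifestly linear. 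Since every matrix is a finite sum of rank-one matrices $\bb{u}\bb{v}^\intercal$ with $\bb{u}\in\mathbb{R}^n$ and $\bb{v}\in\mathbb{R}^p$, it therefore suffices to verify the identity for $\bb{X}=\bb{u}\bb{v}^\intercal$.

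The key auxiliary fact I would establish first is that $\vect\lr{\bb{u}\bb{v}^\intercal}=\bb{v}\otimes\bb{u}$. This follows directly from the definitions given above: the $j$-th column of $\bb{u}\bb{v}^\intercal$ is $v_j\bb{u}$, so stacking the columns via $\vect$ produces the block vector whose $j$-th block is $v_j\bb{u}$, which is exactly $\bb{v}\otimes\bb{u}$ under the lexicographic ordering used to define $\otimes$. This is the one place where the column-stacking convention must be matched carefully against the block convention $\lr{\bb{A}\otimes\bb{B}}^i_j=a^i_j\bb{B}$, and it is the only genuine bookkeeping step in the argument.

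With $\bb{X}=\bb{u}\bb{v}^\intercal$ in hand, I would compute the two sides and compare. On the left, $\bb{A}\bb{u}\bb{v}^\intercal\bb{B}=\lr{\bb{A}\bb{u}}\lr{\bb{B}^\intercal\bb{v}}^\intercal$ is again rank one, so the auxiliary fact gives $\vect\lr{\bb{AXB}}=\lr{\bb{B}^\intercal\bb{v}}\otimes\lr{\bb{A}\bb{u}}$. On the right, the auxiliary fact gives $\vect\bb{X}=\bb{v}\otimes\bb{u}$, and then the mixed-product property $\lr{\bb{A}\otimes\bb{B}}\lr{\xx\otimes\bb{y}}=\bb{A}\xx\otimes\bb{By}$ stated earlier yields $\lr{\bb{B}^\intercal\otimes\bb{A}}\lr{\bb{v}\otimes\bb{u}}=\lr{\bb{B}^\intercal\bb{v}}\otimes\lr{\bb{A}\bb{u}}$. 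The two expressions coincide, which proves the identity on rank-one matrices and hence, by linearity, in general.

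I do not expect any serious obstacle here: the content is entirely conventional. The only point requiring care is the convention-dependent lemma $\vect\lr{\bb{u}\bb{v}^\intercal}=\bb{v}\otimes\bb{u}$, since an inconsistent ordering of the tensor-product basis would silently transpose the roles of $\bb{A}$ and $\bb{B}^\intercal$; keeping the lexicographic ordering fixed throughout removes this ambiguity, after which the mixed-product property does all the remaining work.
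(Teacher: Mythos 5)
Your proof is correct, but it takes a genuinely different route from the paper's. The paper argues column by column: it writes the $i$-th column of $\bb{AXB}$ as $\bb{AX}\bb{b}_i = \sum_j b^j_i\,\bb{A}\bb{x}_j = \lr{\bb{b}_i^\intercal\otimes\bb{A}}\vect\XX$ and then stacks columns, so the Kronecker structure emerges directly from expanding $\bb{X}\bb{b}_i$ in the columns of $\bb{X}$. You instead exploit linearity in $\bb{X}$ to reduce to the rank-one case $\bb{X}=\bb{u}\bb{v}^\intercal$, isolate the convention-dependent bookkeeping in the single lemma $\vect\lr{\bb{u}\bb{v}^\intercal}=\bb{v}\otimes\bb{u}$, and then let the mixed-product property $\lr{\bb{A}\otimes\bb{B}}\lr{\xx\otimes\bb{y}}=\bb{A}\xx\otimes\bb{B}\bb{y}$ (which the appendix does state beforehand, so you are entitled to it) do the rest. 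The trade-off: the paper's computation is shorter and self-contained, effectively re-deriving the needed fragment of the mixed-product identity inline, whereas your argument is more modular and makes explicit exactly where the column-stacking convention interacts with the block convention for $\otimes$ --- which is precisely the place where such identities are usually gotten wrong. Both are complete and correct; no gaps.
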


\begin{proof}
Let $\bb{a}_i, \bb{x}_i,\bb{b}_i$ denote the $i^{\mathrm{th}}$ column of $\bb{A},\bb{X},\bb{B}$ respectively.  Then, the $i^{\mathrm{th}}$ column of $\bb{AXB}$ is 
\[\lr{\bb{AXB}}_i = \bb{A}\bb{X}\bb{b}_i = \bb{A}b^j_i\bb{x}_j = \lr{b^j_i\bb{A}}\bb{x}_j = \lr{\bb{b}_i^\intercal\otimes \bb{A}}\vect\XX.\]
The conclusion now follows by stacking columns.
\end{proof}

There is also a very concrete (but rather inefficient) way to obtain the transposition matrix $\bb{K}$ satisfying $\vect{\XX}^\intercal = \bb{K}\vect{\XX}$.  While this is true generally for $m\times n$ matrices $\XX$, we state the result for square matrices for ease of notation.

\begin{proposition}
Let $\bb{E}_{ij} = \bb{e}_i\bb{e}_j^\intercal$ denote the $ij$-th basis vector for the matrix space $\mathbb{R}^{n\times n}$.  Then, we have that 
\[\bb{K} = \sum_{ij} \bb{E}_{ji}^\intercal\otimes \bb{E}_{ji},\]
satisfies $\vect{\XX}^\intercal = \bb{K}\vect{\XX}$.
\end{proposition}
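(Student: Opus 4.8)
The plan is to verify the identity by testing both sides against the standard basis of $\mathbb{R}^{n\times n}$ and exploiting linearity. First I would expand an arbitrary $\XX\in\mathbb{R}^{n\times n}$ as $\XX = \sum_{kl}X_{kl}\bb{E}_{kl}$; since both $\vect$ and the map $\XX\mapsto\bb{K}\vect\XX$ are linear, it suffices to check the claim on each basis matrix $\bb{E}_{kl}$. The key preliminary observation is that $\vect$ acts on rank-one basis matrices by $\vect\bb{E}_{kl} = \bb{e}_l\otimes\bb{e}_k$, which follows directly from the column-stacking convention recorded at the start of Appendix A (the $l$-th column of $\bb{E}_{kl}$ is $\bb{e}_k$ and all others vanish). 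Applying this to the transpose gives $\vect\bb{E}_{kl}^\intercal = \vect\bb{E}_{lk} = \bb{e}_k\otimes\bb{e}_l$, so the desired identity reduces to showing $\bb{K}\lr{\bb{e}_l\otimes\bb{e}_k} = \bb{e}_k\otimes\bb{e}_l$ for all $k,l$.

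Next I would compute the action of $\bb{K}$ on the elementary Kronecker vector $\bb{e}_l\otimes\bb{e}_k$ term by term. Using the mixed-product property $\lr{\bb{A}\otimes\bb{B}}\lr{\xx\otimes\bb{y}} = \bb{A}\xx\otimes\bb{B}\bb{y}$ from Appendix A together with $\bb{E}_{ji}^\intercal = \bb{E}_{ij}$, each summand becomes
\[ \lr{\bb{E}_{ij}\otimes\bb{E}_{ji}}\lr{\bb{e}_l\otimes\bb{e}_k} = \bb{E}_{ij}\bb{e}_l \otimes \bb{E}_{ji}\bb{e}_k. \]
The selection rule $\bb{E}_{ab}\bb{e}_c = \bb{e}_a\bb{e}_b^\intercal\bb{e}_c = \delta_{bc}\bb{e}_a$ then forces $\bb{E}_{ij}\bb{e}_l = \delta_{jl}\bb{e}_i$ and $\bb{E}_{ji}\bb{e}_k = \delta_{ik}\bb{e}_j$, so the summand collapses to $\delta_{ik}\delta_{jl}\lr{\bb{e}_i\otimes\bb{e}_j}$. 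Summing over $i,j$ leaves exactly $\bb{e}_k\otimes\bb{e}_l$, which is the reduced identity above.

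Combining the two steps by linearity yields $\bb{K}\vect\XX = \vect\XX^\intercal$ for every $\XX$, completing the argument. I expect the only real obstacle to be bookkeeping: one must keep the $\vect$ column-stacking convention consistent throughout and remember that $\bb{E}_{ji}^\intercal = \bb{E}_{ij}$, after which the double sum annihilates against the two Kronecker deltas essentially automatically. No estimates or limiting arguments are needed, so the proof is purely combinatorial index manipulation built on the mixed-product property already available from Appendix A.
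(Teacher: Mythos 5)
Your proof is correct, but it takes a different route from the paper's. The paper keeps $\XX$ whole and invokes the vec trick stated immediately before the proposition: each term $\lr{\bb{E}_{ji}^\intercal\otimes\bb{E}_{ji}}\vect\XX$ is rewritten as $\vect\lr{\bb{E}_{ji}\XX\bb{E}_{ji}} = \vect\lr{x_{ij}\bb{E}_{ji}}$, and summing over $i,j$ reassembles $\vect\XX^\intercal$ in one line. You instead decompose $\XX$ over the basis $\{\bb{E}_{kl}\}$, identify $\vect\bb{E}_{kl} = \bb{e}_l\otimes\bb{e}_k$, and use the mixed-product property $\lr{\bb{A}\otimes\bb{B}}\lr{\xx\otimes\bb{y}} = \bb{A}\xx\otimes\bb{B}\bb{y}$ together with the selection rule $\bb{E}_{ab}\bb{e}_c = \delta_{bc}\bb{e}_a$ to collapse the double sum. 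Your argument is slightly more elementary in that it does not rely on the vec trick as a black box, only on the rank-one Kronecker identity, at the cost of an extra linearity-and-basis reduction step; the paper's version is shorter precisely because it reuses the theorem it has just proved. Both are complete and all of your index computations (in particular $\vect\bb{E}_{kl}=\bb{e}_l\otimes\bb{e}_k$ under the column-stacking convention, and the collapse of the summand to $\delta_{ik}\delta_{jl}\lr{\bb{e}_i\otimes\bb{e}_j}$) check out.
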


\begin{proof}
Given $\bb{X}\in\mathbb{R}^{n\times n}$, it follows by the vec trick that 
\begin{align*}
\bb{K}\vect\XX &= \lr{\sum \bb{E}_{ji}^\intercal\otimes \bb{E}_{ji}}\vect{\XX} = \vect\lr{\sum \bb{E}_{ji}\bb{X}\bb{E}_{ji}} \\
&= \vect\lr{\sum \bb{e}_jx_{ij}\bb{e}_i^\intercal} = \vect\lr{\sum x_{ij}\bb{E}_{ji}} = \vect\XX^\intercal.
\end{align*}
\end{proof}

More practically, the following pseudocode is used to generate a sparse matrix representing $\bb{K}$.

\begin{algorithm}[htb]
\caption{Building the commutation matrix $\bb{K}$}\label{alg:transposition}
\begin{algorithmic}[1]
\Require Integers $m,n>0$.
\Ensure Sparse matrix $\bb{K}\in\mathbb{Z}^{mn\times mn}$ satisfying $\vect{\XX}^\intercal = \bb{K}\vect{\XX}$ for all $\XX\in\mathbb{R}^{m\times n}$.
\State Let $row = \{1,2,...,mn\} \in\mathbb{Z}^{mn}$ be the vector of row indices.
\State Let $row'\in\mathbb{Z}^{m\times n}$ be defined by reshaping $row$ column-wise. 
\State Let $col\in\mathbb{Z}^{mn}$, the list of column indices, be the row-wise flattening of $row'$.\\
\Return Sparse matrix $\bb{K}$ with indices $(row,col)$ and entries $\{1,...,1\}\in\mathbb{Z}^{mn}$.
\end{algorithmic}
\end{algorithm}

\subsection{Appendix B: Proofs of Results}\label{app:proofs}

Here we provide omitted proofs for the results in the body.  Note that Einstein summation is assumed throughout, so that any tensor index appearing both ``up'' and ``down'' in an expression is implicitly summed over its range.

\begin{proof}[Proof of Proposition~\ref{prop:trunc}]
This is a straightforward consequence of the fact that OpInf of size $n$ decouples into $n^2$ scalar minimization problems.  To see this, notice that if $\Dh$ solves the OpInf problem of size $n$ and $1\leq i',j'\leq n'<K\leq n$, $1
\leq k\leq n$, then
\begin{align*}
\lr{\Xh_t\Xh^\intercal}^{i'}_{j'} &= \hat{D}^{i'}_k\lr{\Xh\Xh^\intercal}^k_{j'} = \hat{D}^{i'}_{k'}\lr{\Xh\Xh^\intercal}^{k'}_{j'} + \hat{D}^{i'}_{K}\lr{\Xh\Xh^\intercal}^{K}_{j'} \\
&= \hat{D}^{i'}_{k'}\,\delta^{k'}_{j'}\sigma_{j'}^2 + \hat{D}^{i'}_{K}\,\delta^K_{j'}\sigma^2_{j'} = \hat{D}^{i'}_{j'}\,\sigma_{j'}^2,
\end{align*}
where $\delta$ denotes the Kronecker delta tensor and the first equality of the second line follows from the fact that, for all $1\leq i,j\leq n$,
\[\lr{\Xh\Xh^\intercal}^i_{j} = \IP{\XX^\intercal\bb{u}_i}{\XX^\intercal\bb{u}_{j}} =\IP{\bb{e}_i}{\uu^\intercal\XX\XX^\intercal\uu\bb{e}_{j}} = \IP{\bb{e}_{i}}{\bm{\Sigma}^2\bb{e}_{j}} = \delta^i_{j} \sigma_j^2.\]
Therefore, the minimization problem for each component $\hat{D}^i_j$ has the solution (note the sum on $k$),
\[\argmin_{\hat{D}^i_j\in\mathbb{R}} \nn{\bb{u}_i^\intercal\bb{X}_t - \sum_k\hat{D}^i_k\bb{u}_k^\intercal\bb{X}}^2 = \frac{\bb{u}_i^\intercal\bb{X}_t\bb{X}^\intercal\bb{u}_j}{\sigma_j^2},\]
showing that each entry of $\Dh$ depends only on the indices $i,j$.  Therefore, the solution $\Dh'$ to the OpInf problem of size $n'<n$ can be extracted from $\Dh$ by extracting the top-left $n'\times n'$ submatrix, as desired.
\end{proof}

\begin{proof}[Proof of Proposition~\ref{prop:newtrunc}]
Suppose $\Dh$ is the solution hypothesized \rev{in the statement of the Proposition}.  Then, it follows that
\rev{\begin{align*}
    \Ah^\intercal\Ch\Bh^\intercal\pm\Bh\Ch^\intercal\Ah &= \Ah^\intercal\Ah\Dh\Bh\Bh^\intercal\pm\Bh\Bh^\intercal\Dh^\intercal\Ah^\intercal\Ah \\
    &= \Ah^\intercal\Ah\Dh\Bh\Bh^\intercal + \Bh\Bh^\intercal\Dh\Ah^\intercal\Ah
\end{align*}}
Now, for any $1\leq i,j\leq n$, notice that 
\[\lr{\Ah^\intercal\Ch\Bh^\intercal\pm\Bh\Ch^\intercal\Ah}^i_j = \hat{A}^i_i\bb{u}_i^\intercal\bb{CB}^\intercal\bb{u}_j \pm \bb{u}_i^\intercal\bb{BC}^\intercal\bb{u}_jA^j_j.\]
Therefore, the $ij$-th entry of the left-hand side of the optimality condition for $\Dh$ depends only on the basis vectors $\bb{u}_i,\bb{u}_j$, and we have 
\[\Ah'^\intercal\hat{\bb{C}}'\hat{\bb{B}}'^\intercal \pm \hat{\bb{B}}'\hat{\bb{C}}'^\intercal\Ah' = \lr{\Ah^\intercal\Ch\Bh^\intercal\pm\Bh\Ch^\intercal\Ah}'.\]
Similarly, letting $1\leq i',j',k',l'\leq n'$ and $1\leq k,l\leq n$, it follows that 
\rev{\begin{align*}
    \lr{\Ah^\intercal\Ch\Bh^\intercal\pm\Bh\Ch^\intercal\Ah}^{i'}_{j'} &= \lr{\Ah^\intercal\Ah}^{i'}_k \hat{D}^k_{l}\lr{\Bh\Bh^\intercal}^{l}_{j'} + \lr{\Bh\Bh^\intercal}^{i'}_k \hat{D}^k_l\lr{\Ah^\intercal\Ah}^l_{j'} \\
    &= \lr{\hat{A}^{i'}_{k'}}^2 D^{k'}_{l'}\lr{\hat{\bb{B}}\hat{\bb{B}}^\intercal}^{l'}_{j'} + \lr{\hat{\bb{B}}\hat{\bb{B}}^\intercal}^{i'}_{k'} D^{k'}_{l'}\lr{\hat{A}^{l'}_{j'}}^2,
\end{align*}}
where the second line uses the fact that $\lr{\hat{\bb{A}}^\intercal\hat{\bb{A}}}^{i'}_{K}=\lr{\hat{\bb{B}}\hat{\bb{B}}^\intercal}^L_{j'}=\lr{\hat{\bb{B}}\hat{\bb{B}}^\intercal}^{i'}_K=\lr{\hat{\bb{A}}^\intercal\hat{\bb{A}}}^L_{j'}$ for all $n'<K,L\leq n$.  Putting these computations together, this shows that the truncation $\Dh'$ of $\Dh$ satisfies
\rev{\begin{align*}
   \Ah'^\intercal\hat{\bb{C}}'\hat{\bb{B}}'^\intercal \pm \hat{\bb{B}}'\hat{\bb{C}}'^\intercal\Ah' &= \Ah'^\intercal\Ah'\Dh'\bar{\bb{B}}\bar{\bb{B}}^\intercal + \bar{\bb{B}}\bar{\bb{B}}^\intercal\Dh'\Ah'^\intercal\Ah',
\end{align*}}
showing that $\Dh'$ is the desired minimizer.
\end{proof}

\subsection{Appendix C: Second Hamiltonian Fomulation of KdV}\label{app:kdv}

Another Hamiltonian formulation of the KdV equation is given by the data
\[H(x) = \frac{1}{2}\int_0^l x^2\,ds, \qquad L(x) = \frac{\alpha}{3}\lr{x\partial_s + \partial_s(x\cdot)} + \rho\partial_s + \nu\partial_{sss}, \]
where $\partial_s(x\cdot)y = \partial_s(xy)$.  Choosing $\bb{A}$ to be the central difference discretization of $\partial_s$ (this was $\LL$ in the first formulation) leads to the skew-symmetric discrete operator
\[\LL(\xx) = \frac{\alpha}{3}\lr{\mathrm{Diag}\lr{\xx}\bb{A} + \bb{A}\,\mathrm{Diag}\lr{\xx}} + \rho\bb{A} + \nu\bb{E},\]
where $\bb{E}$ is the pentadiagonal circulant matrix representing the central difference discretization of $\partial_{sss}$, i.e.
\[\bb{E} = \frac{1}{2\lr{\Delta x}^3}\begin{pmatrix} 0 & -2 & 1 & & -1 & 2 \\ 2 & \ddots & \ddots & \ddots & & -1 \\ -1 & \ddots & \ddots & \ddots & \ddots & \\ & \ddots & \ddots & \ddots & \ddots & 1 \\ 1 & & \ddots & \ddots & \ddots & -2 \\ -2 & 1 & & -1 & 2 & 0\end{pmatrix}.\]
This implies the second Hamiltonian formulation of the KdV system,
\[\dot{\xx} = \LL(\xx)\nabla H(\xx) = \frac{\alpha}{3}\lr{\mathrm{Diag}\lr{\xx}\bb{A}\xx + \bb{A}\,\mathrm{Diag}\lr{\xx}\xx} + \rho\bb{A}\xx + \nu\bb{E}\xx,\]
which is integrated via AVF to yield the discrete system,
\[\frac{\xx^{k+1}-\xx^k}{\Delta t} = \LL\lr{\xx^{k+\frac{1}{2}}}\xx^{k+\frac{1}{2}}.\]
This leads to the $\Delta t$-normalized residual and Jacobian functions,
\begin{align*}
R^k\lr{\xx^{k+1}} &= \xx^{k+1} - \xx^k - \Delta t\,\LL\lr{\xx^{k+\frac{1}{2}}}\xx^{k+\frac{1}{2}}, \\
J^k\lr{\xx^{k+1}} &= \bb{I} -\frac{\Delta t}{2}\left[\frac{2\alpha}{3}\lr{\diag\lr{\xx^{k+\frac{1}{2}}}\bb{A}+\bb{A}\,\diag\lr{\xx^{k+\frac{1}{2}}}} + \rho\bb{A} + \nu\bb{E}\right],
\end{align*}
which are solvable with Newton iterations.  

With this, intrusive Galerkin and Hamiltonian ROMs can then be constructed as before.  On the other hand, notice that AVF evaluates $\LL$ at the midpoint of the discrete trajectory, meaning that Galerkin projection and discretization with AVF no longer commute, since $\ut\LL$ is not a Poisson matrix.  However, letting $\xt = \xx_0+\uu\xh$, Galerkin projection after AVF yields
\begin{align*}
    \frac{\xh^{k+1}-\xh^k}{\Delta t} &=\ut\LL\lr{\xt^{k+\frac{1}{2}}}\xt^{k+\frac{1}{2}} \\
    &= \lr{\ut\LL\lr{\xx_0} + \frac{\alpha}{3}\lr{\ut\diag\lr{\uu\xh^{k+\frac{1}{2}}}\bb{A}+\ut\bb{A}\diag\lr{\uu\xh^{k+\frac{1}{2}}}}}\lr{\xx_0+\uu\xh^{k+\frac{1}{2}}} \\
    &:= \ut\LL\lr{\xx_0}\xx_0 + \lr{\bb{T}\lr{\xh^{k+\frac{1}{2}}}\xx_0 + \ut\LL\lr{\xx_0}\uu\xh^{k+\frac{1}{2}}} + \hat{\bb{T}}\lr{\xh^{k+\frac{1}{2}}}\xh^{k+\frac{1}{2}} \\
    &:= \hat{\bb{c}} + \hat{\bb{C}}\xh^{k+\frac{1}{2}} + \hat{\bb{T}}\lr{\xh^{k+\frac{1}{2}}}\xh^{k+\frac{1}{2}},
\end{align*}
where $\bb{T},\hat{\bb{T}}$ are precomputable order 3 tensors given component-wise by $T^a_{jc}=(\alpha/3)U^a_i\lr{U^i_c+U_{jc}}A^i_j$ and $\hat{T}^a_{bc} = (\alpha/3) U^a_i\lr{U^i_c+U_{jc}}A^i_jU^j_b$.  Now, a Hamiltonian ROM can be computed in the same way:
applying AVF before Hamiltonian projection, it follows that 
\begin{align*}
    \frac{\xh^{k+1}-\xh^k}{\Delta t} &=\ut\LL\lr{\xt^{k+\frac{1}{2}}}\uu\ut\xt^{k+\frac{1}{2}} \\
    &= \lr{\ut\LL\lr{\xx_0}\uu + \frac{\alpha}{3}\lr{\ut\diag\lr{\uu\xh^{k+\frac{1}{2}}}\bb{A}\uu+\ut\bb{A}\diag\lr{\uu\xh^{k+\frac{1}{2}}}\uu}}\lr{\ut\xx_0+\xh^{k+\frac{1}{2}}} \\
    &= \ut\LL\lr{\xx_0}\uu\ut\xx_0 + \lr{\hat{\bb{T}}\lr{\xh^{k+\frac{1}{2}}}\ut\xx_0 + \ut\LL\lr{\xx_0}\uu\xh^{k+\frac{1}{2}}} + \hat{\bb{T}}\lr{\xh^{k+\frac{1}{2}}}\xh^{k+\frac{1}{2}} \\
    &:= \hat{\bb{c}} + \hat{\bb{C}}\xh^{k+\frac{1}{2}} + \hat{\bb{T}}\lr{\xh^{k+\frac{1}{2}}}\xh^{k+\frac{1}{2}},
\end{align*}
where the tensor $\hat{T}$ is identical to before.  In either case, these equations are easily solved with Newton iterations, as explained in Section~\ref{subsec:kdv}.  

\begin{remark}
It is interesting to note that $\nabla H(\xx) = \xx$ in this formulation, so that its matrix representation $\bb{A}=\bb{I}$.  This has the effect of equalizing the (non mean-centered) H-ROM and G-ROM, since $\Lh\Ah = \widehat{\LL\bb{A}}$.
\end{remark}

\begin{figure}[htb]
    \centering
    \begin{minipage}{0.5\textwidth}
         \includegraphics[width=\textwidth]{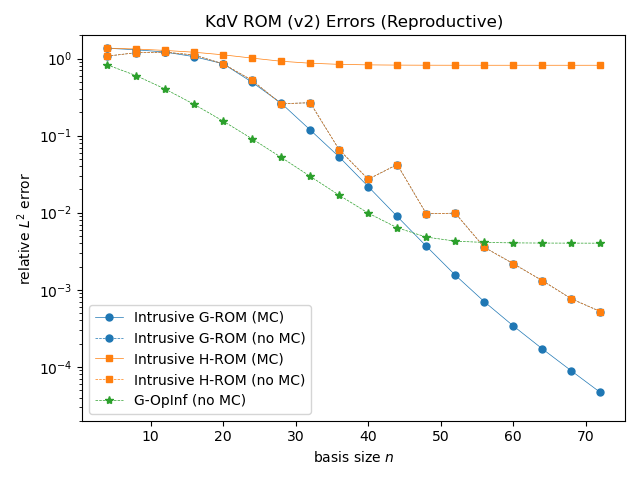}
    \end{minipage}%
    \begin{minipage}{0.5\textwidth}
         \includegraphics[width=\textwidth]{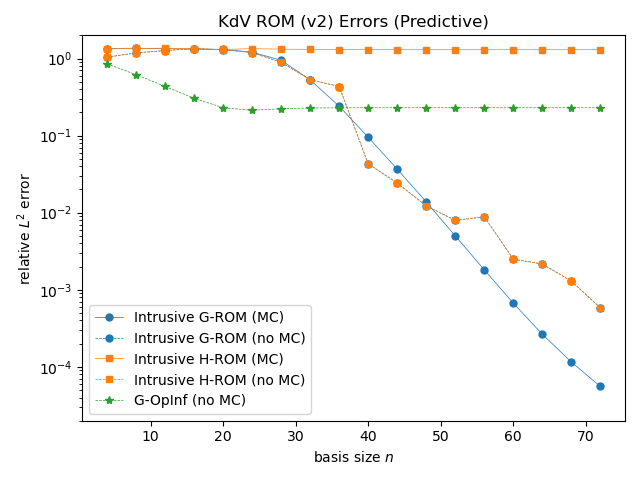}
    \end{minipage}
    \caption{Relative state errors as a function of basis modes for the ROMs in the KdV equation (V2) example.  Left: reproductive case ($T=20$).  Right: predictive case ($T=100$). ``MC'' indicates the use of a mean-centered reconstruction.}
    \label{fig:KdVerrorsV2}
\end{figure}

Figures~\ref{fig:KdVerrorsV2} and \ref{fig:KdVConservedT20v2} show the results of this procedure, alongside a linear G-OpInf ROM for comparison (c.f. Section~\ref{subsec:genopinf}).  The experimental parameters are identical to those in Section~\ref{subsec:kdv}.  It is remarkable that the mean-centered H-ROM does not perform well in this case, despite conserving the first three invariant quantities as well as the mean-centered G-ROM.  Note that the naming convention in Figure~\ref{fig:KdVConservedT20v2} follows that of Figure~\ref{fig:KdVConservedT100}, despite the fact that $P$ is now the Hamiltonian functional.

\begin{figure}[htb]
    \centering
    \includegraphics[width=\textwidth]{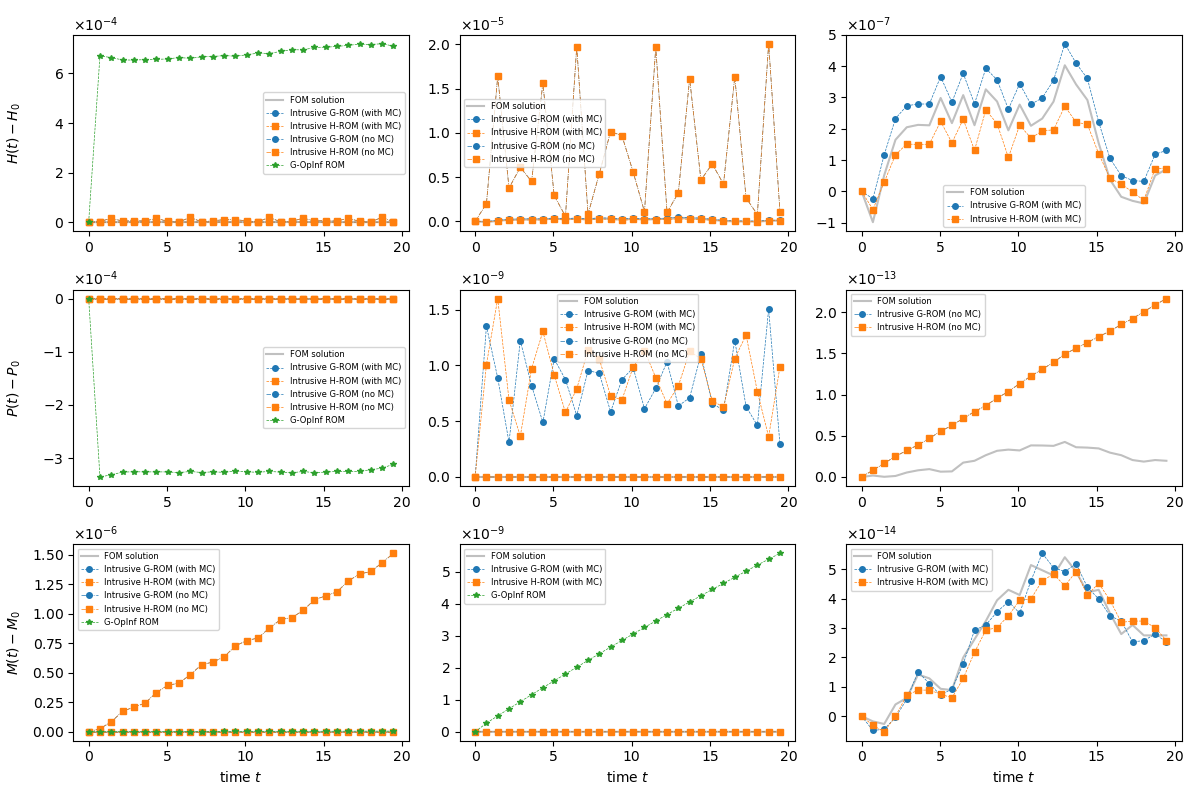}
    \caption{Errors in conserved quantities for the (mean-centered) ROMs in the KdV (v2) equation example in the predictive case ($T=20$) when using a POD basis with $n=72$ modes.}
    \label{fig:KdVConservedT20v2}
\end{figure}

% \begin{align*}
% \dot{\xh} &= \Lh\lr{\xh}\nabla\hat{H}\lr{\xh} = \ut\LL\lr{\xt}\uu\ut\lr{\xx_0+\uu\xh} \\
% &= \ut\left[\frac{\alpha}{3}\lr{\diag\lr{\xx_0}\bb{A}+\bb{A}\,\diag\lr{\xx_0}} + \lr{\rho\bb{A}+\nu\bb{E}}\right]\uu\ut\lr{\xx_0 + \uu\xh} + \Th\lr{\xh}\ut\lr{\xx_0+\uu\xh} \\
% &:= \hat{\bb{c}} + \hat{\bb{C}}\xh + \Th\lr{\xh}\xh,
% \end{align*}
% where $\Th$ is a precomputable degree three tensor with components $T^a_{bc} = (\alpha/3) U^a_i\lr{U^i_c+U_{jc}}A^i_jU^j_b$.  Applying AVF integration yields the fully discrete system 
% \[\frac{\xh^{k+1}-\xh^k}{\Delta t} = \hat{\bb{c}} + \hat{\bb{C}}\xh^{k+\frac{1}{2}} + \frac{1}{3}\lr{\Th\lr{\xh^{k}}\xh^k + \Th\lr{\xh^{k+1}}\xh^{k+1}} + \frac{1}{6}\lr{\Th\lr{\xh^k}\xh^{k+1} + \Th\lr{\xh^{k+1}}\xh^{k}}.\]

% \red{finish this, include other time integration}

\end{document}